\definecolor{blue}{RGB}{93, 154, 236}
\definecolor{gray}{RGB}{146, 146, 146}
\definecolor{burgundy}{RGB}{194, 49, 117}
\definecolor{pink}{RGB}{241, 154, 200}
\definecolor{green}{RGB}{83, 174, 50}
\definecolor{lblue}{RGB}{152, 250, 233}
\newcommand{\rebuttal}[1]{{#1}}
\def\eqref#1{equation~\ref{#1}}
\def\1{\bm{1}}
\def\rvh{{\mathbf{h}}}
\def\rvx{{\mathbf{x}}}
\def\rvy{{\mathbf{y}}}
\def\vzero{{\bm{0}}}
\def\vone{{\bm{1}}}
\def\va{{\bm{a}}}
\def\vb{{\bm{b}}}
\def\ve{{\bm{e}}}
\def\vi{{\bm{i}}}
\def\vk{{\bm{k}}}
\def\vm{{\bm{m}}}
\def\vp{{\bm{p}}}
\def\vs{{\bm{s}}}
\def\vx{{\bm{x}}}
\def\mA{{\bm{A}}}
\def\mB{{\bm{B}}}
\def\mI{{\bm{I}}}
\DeclareMathAlphabet{\mathsfit}{\encodingdefault}{\sfdefault}{m}{sl}
\SetMathAlphabet{\mathsfit}{bold}{\encodingdefault}{\sfdefault}{bx}{n}
\newcommand{\tens}[1]{\bm{\mathsfit{#1}}}
\def\tA{{\tens{A}}}
\def\tB{{\tens{B}}}
\def\tH{{\tens{H}}}
\def\tT{{\tens{T}}}
\def\gA{{\mathcal{A}}}
\def\gB{{\mathcal{B}}}
\def\gC{{\mathcal{C}}}
\def\gE{{\mathcal{E}}}
\def\gG{{\mathcal{G}}}
\def\gH{{\mathcal{H}}}
\def\gM{{\mathcal{M}}}
\def\gN{{\mathcal{N}}}
\def\gO{{\mathcal{O}}}
\def\gP{{\mathcal{P}}}
\def\gQ{{\mathcal{Q}}}
\def\gS{{\mathcal{S}}}
\def\gU{{\mathcal{U}}}
\def\gV{{\mathcal{V}}}
\def\gX{{\mathcal{X}}}
\def\gZ{{\mathcal{Z}}}
\def\sN{{\mathbb{N}}}
\def\sR{{\mathbb{R}}}
\def\sZ{{\mathbb{Z}}}
\newcommand\restr[2]{{%
  \left.\kern-\nulldelimiterspace %
  #1 %
  \vphantom{\big|} %
  \right|_{#2} %
  }}
\newcommand{\cat}{\mathop{\mathbin\Vert}}
\newcommand{\bcat}{\mathop{\mathbin\big\Vert}}
\newcommand{\bbbcat}{\mathop{\mathbin\bigg\Vert}}
\newcommand{\mlp}{\mathsf{MLP}}
\newcommand{\M}{\mathsf{M}}
\newcommand{\ourmethod}{MCN\xspace}
\newcommand{\secondmethod}{SMCN\xspace}
\newcommand{\homp}{HOMP\xspace}
\newcommand{\ldblbrace}{\{\mskip-6mu\{}
\newcommand{\rdblbrace}{\}\mskip-6mu\}}
\newcommand{\rank}{\mathrm{rk}}
\newcommand{\nat}{\boldsymbol{\gN}_\mathrm{nat}}
\newcolumntype{P}[1]{>{\centering\arraybackslash}p{#1}}
\newcolumntype{M}[1]{>{\centering\arraybackslash}m{#1}}
\newcommand*{\belowrulesepcolor}[1]{%
  \noalign{%
    \kern-\belowrulesep 
    \begingroup 
      \color{#1}%
      \hrule height\belowrulesep 
    \endgroup 
  }%
} 
\newcommand*{\aboverulesepcolor}[1]{%
  \noalign{%
    \begingroup 
      \color{#1}%
      \hrule height\aboverulesep 
    \endgroup 
    \kern-\aboverulesep 
  }%
} 
\newtheorem{theorem}{Theorem}[section]
\newtheorem{proposition}[theorem]{Proposition}
\newtheorem{lemma}[theorem]{Lemma}
\newtheorem{corollary}[theorem]{Corollary}
\theoremstyle{definition}
\newtheorem{definition}[theorem]{Definition}
\theoremstyle{remark}
\newtheorem{remark}[theorem]{Remark}
\title{Topological Blindspots: Understanding and Extending Topological Deep Learning Through the Lens of Expressivity}
\author{Yam Eitan$^{*,1}$ \qquad Yoav Gelberg$^{*,2}$ \qquad Guy Bar-Shalom$^1$ \qquad Fabrizio Frasca$^1$\\
\textbf{Michael Bronstein$^{2, 3}$ \qquad Haggai Maron$^{1,4}$} \\
$^1$Technion -- Israel Institute of Technology \quad $^2$University of Oxford \quad $^3$AITHYRA \\ $^4$NVIDIA Research
}
\def\customfootnotetext#1#2{{%
  \let\thefootnote\relax
  \footnotetext[#1]{#2}}}
\begin{document}

\customfootnotetext{1}{\textsuperscript{*}Equal contribution.}

\maketitle
\begin{abstract}
\looseness=-1
Topological deep learning (TDL) is a rapidly growing field that seeks to leverage topological structure in data and facilitate learning from data supported on topological objects, ranging from molecules to 3D shapes. Most TDL architectures can be unified under the framework of higher-order message-passing (\homp), which generalizes graph message-passing to higher-order domains. In the first part of the paper, we explore \homp's expressive power from a topological perspective, demonstrating the framework's inability to capture fundamental topological and metric invariants such as diameter, orientability, planarity, and homology. In addition, we demonstrate \homp's limitations in fully leveraging lifting and pooling methods on graphs. To the best of our knowledge, this is the first work to study the expressivity of TDL from a \emph{topological} perspective. In the second part of the paper, we develop two new classes of architectures -- multi-cellular networks (\ourmethod) and scalable \ourmethod (\secondmethod) -- which draw inspiration from expressive GNNs. \ourmethod can reach full expressivity, but scaling it to large data objects can be computationally expansive. Designed as a more scalable alternative, \secondmethod still mitigates many of \homp's expressivity limitations. Finally, we create new benchmarks for evaluating models based on their ability to learn topological properties of complexes. We then evaluate \secondmethod on these benchmarks and on real-world graph datasets, demonstrating improvements over both \homp baselines and expressive graph methods, highlighting the value of expressively leveraging topological information. Code and data are available at \href{https://github.com/yoavgelberg/SMCN}{\texttt{https://github.com/yoavgelberg/SMCN}}.
\end{abstract}

\section{Introduction}

Topological Deep Learning (TDL) is an emerging field focused on learning from data supported on topological objects. Higher-order message-passing (\homp) \citep{hajij2022higher, hajij2022topological} has emerged as a key framework in TDL, unifying architectures designed for various topological data types. Originally introduced for simplicial complexes \citep{pmlr-v139-bodnar21a}, \homp has been successively adapted for cellular complexes \citep{bodnar2021weisfeiler, hajijcell}, and more recently, for combinatorial complexes \citep{hajij2022higher, hajij2022topological}. Each adaptation is a direct generalization of its predecessor. The \homp framework extends traditional message-passing neural networks (MPNNs) \citep{gilmer2017neural}, widely used in graph learning, to higher-order topological domains. 

Despite their widespread adoption in various graph learning applications, MPNNs are known to struggle with expressivity limitations, often failing to distinguish even simple non-isomorphic graphs \citep{morris2019weisfeiler, xu2018powerful}. This realization has led to a substantial body of work dedicated to developing more expressive graph architectures \citep{morris2023weisfeiler,maron2019provably,morris2019weisfeiler,bevilacqua2021equivariant, abboud2020surprising, bouritsas2022improving}. Given the similarity between \homp and MPNNs, a natural question arises: \emph{What are the limitations of higher-order message-passing architectures in distinguishing topological objects?} This question, highlighted in a recent position paper \citep{papamarkou2024position}, is the main focus of this paper.

\looseness=-1
We address this question from a topological perspective. First, we introduce a topological criterion designed to identify cases in which a pair of complexes is indistinguishable by \homp. We then use this criterion to prove \homp's inability to differentiate between complexes based on several fundamental topological and metric invariants, including diameter, orientability, planarity, and homology groups. These limitations are particularly noteworthy, as TDL's main goal is to leverage topological structure in data. \rebuttal{In fact, several methods directly inject information closely related to some of the above properties into pre-existing framewroks \citep{horn2021topological, chen2021topological, rieck2023expressivity, zhang2023rethinking}}. Additionally, since many topological data objects are constructed by lifting graph data, we examine \homp's limitations in expressively leveraging lifting and pooling methods to distinguish graphs. 

\begin{figure}[t]
    \vspace{-20pt}
    \centering
    \begin{subfigure}[Diameter]{
        \resizebox{0.27\textwidth}{!}{
\begin{tikzpicture} 
\clip (-2.35,-5) rectangle (9.55,5.2);
\begin{scope}
  \begin{axis}[
      hide axis,
      colormap/violet,
      view={0}{55},
      xmin=-12,
      xmax=12,
      ymin=-12,
      ymax=12,
      zmin=-4,
      zmax=4
    ]
    
    \addplot3[
        surf,
        colormap={bw}{gray(0cm)=(0);gray(1cm)=(0);},
        fill=blue,
        point meta=0,
        line width=0.2mm,
        z buffer=sort,
        domain=0:360,
        y domain=0:360, 
        samples=12,
        samples y=32,
    ]
    ({(8 + 3 * cos(x)) * cos(y)} ,
    {(8 + 3 * cos(x)) * sin(y)},
    {3 * sin(x)});
  \end{axis}
\end{scope}
\begin{scope}[yshift=-150pt]
  \begin{axis}[
      hide axis,
      colormap/violet,
      view={0}{69},
      xmin=-12,
      xmax=12,
      ymin=-12,
      ymax=12,
      zmin=-4,
      zmax=4
    ]
    
    \addplot3[
        surf,
        colormap={bw}{gray(0cm)=(0);gray(1cm)=(0);},
        fill=blue,
        point meta=0,
        line width=0.2mm,
        z buffer=sort,
        domain=0:360,
        y domain=0:360, 
        samples=16,
        samples y=24,
    ]
    ({(7.3 + 4.9 * cos(x)) * cos(y)} ,
    {(7.3 + 4.9 * cos(x)) * sin(y)},
    {4.9* sin(x)});
  \end{axis}
\end{scope}
\end{tikzpicture}  
}
        \label{fig:diameter}
    }
    \end{subfigure}
    \begin{subfigure}[Orientability/planarity]{
        \resizebox{0.27\textwidth}{!}{
    \begin{tikzpicture} 
    \clip (-2.35,-5) rectangle (9.55,5.2);
    \begin{scope}
      \begin{axis}[
          hide axis,
          view={30}{40}, %
          colormap/violet,
          xmin = -2.7,
          xmax = 2.7,
          ymin = -2.7,
          ymax = 2.7
        ]
        
        \addplot3[
            surf,
            colormap={bw}{gray(0cm)=(0);gray(1cm)=(0);},
            fill=pink,
            point meta=0,
            z buffer=sort,
            line width=0.2mm,
            clip=false,
            domain=0:2*pi, %
            domain y=-1:1, %
            samples=15, %
            samples y=6, %
        ]
        (
        {(3 + 0.5*y*cos(deg(x)/2))*cos(deg(x))}, 
        {(3 + 0.5*y*cos(deg(x)/2))*sin(deg(x))}, 
        {0.5*y*sin(deg(x)/2)}
        );
      \end{axis}
    \end{scope}
    \begin{scope}[yshift=-150pt]
      \begin{axis}[
          hide axis,
          view={0}{55},
          xmin=-1.1, 
          xmax=1.1, 
          ymin=-1.1, 
          ymax=1.1, 
        ]
        
        \addplot3[
            surf,
            colormap={bw}{gray(0cm)=(0);gray(1cm)=(0);},
            fill=pink,
            point meta=0,
            line width=0.2mm,
            domain=0:360,
            y domain=0:360, 
            samples=8,
            samples=15, %
            samples y=6, %
        ]
        ({cos(x)}, {sin(x)}, y);
      \end{axis}
    \end{scope}
    \end{tikzpicture}
}
        \label{fig:orientability_and_planarity}
    }
    \end{subfigure}
    \begin{subfigure}[Homology]{
        \resizebox{0.27\textwidth}{!}{
\begin{tikzpicture}
\clip (-2.35,-5) rectangle (9.55,5.2);
\begin{scope}
  \begin{axis}[
      hide axis,
      view={0}{55},
      xmin=-12,
      xmax=12,
      ymin=-12,
      ymax=12,
      zmin=-4,
      zmax=4
    ]
    \addplot3[
        surf,
        colormap={bw}{gray(0cm)=(0);gray(1cm)=(0);},
        fill=violet!40,
        point meta=0,
        line width=0.2mm,
        z buffer=sort,
        domain=0:360,
        y domain=0:360, 
        samples=12,
        samples y=32,
    ]
    ({(8 + 3 * cos(x)) * cos(y)} ,
    {(8 + 3 * cos(x)) * sin(y)},
    {3 * sin(x)});
  \end{axis}
\end{scope}
\begin{scope}[yshift=-160, xshift=50]
  \begin{axis}[
      hide axis,
      colormap/violet,
      view={0}{75},
      zmin=-2,
      zmax=2,
      xmin=-10,
      xmax=6,
      ymin=-10,
      ymax=10,
    ]
    \addplot3[
        surf,
        colormap={bw}{gray(0cm)=(0);gray(1cm)=(0);},
        fill=violet!40,
        point meta=0,
        line width=0.2mm,
        z buffer=sort,
        domain=0:360,
        y domain=0:360, 
        samples=12,
        samples y=16,
    ]
    ({(4 + 1.5 * cos(x)) * cos(y)} ,
    {(4 + 1.5 * cos(x)) * sin(y)},
    {1.5 * sin(x)});
  \end{axis}
\end{scope}
\begin{scope}[yshift=-160pt, xshift=-50]
  \begin{axis}[
      hide axis,
      colormap/violet,
      view={0}{75},
      zmin=-2,
      zmax=2,
      xmin=-6,
      xmax=10,
      ymin=-10,
      ymax=10,
    ]
    \addplot3[
        surf,
        colormap={bw}{gray(0cm)=(0);gray(1cm)=(0);},
        fill=violet!40,
        point meta=0,
        line width=0.2mm,
        z buffer=sort,
        domain=0:360,
        y domain=0:360, 
        samples=12,
        samples y=16,
    ]
    ({(4 + 1.5 * cos(x)) * cos(y)} ,
    {(4 + 1.5 * cos(x)) * sin(y)},
    {1.5 * sin(x)});
  \end{axis}
\end{scope}
\end{tikzpicture}
}
        \label{fig:homology}
    }
    \end{subfigure}
    \vspace{-5pt}
    \caption{Pairs of \homp-indistinguishable complexes differing in fundamental metric/topological properties that. In Figure \ref{fig:diameter}, tori with different diameters (top $20$, bottom $22$); in Figure \ref{fig:orientability_and_planarity}, a Möbius strip and a cylinder differing in both orientability and planarity; in Figure \ref{fig:homology}, a torus and a pair of disconnected tori which have different homology groups.}
    \label{fig:topological_blindspots}
    \vspace{-10pt}
\end{figure}

\looseness=-1
In the second part of the paper, we introduce a new class of TDL architectures called multi-cellular networks (\ourmethod) designed to address \homp's expressivity limitations. \ourmethod draws inspiration from higher-order\footnote{This paper deals with two notions of ``higher-order''. The first is higher-order relational data objects such as simplicial, cellular and combinatorial complexes. The second is higher-order graph models that use high-order tensor spaces to gain expressivity.} graph architectures \citep{maron2019provably,morris2019weisfeiler,keriven2019universal,azizian2020expressive}, which successfully resolve expressivity limitations in MPNNs. \ourmethod utilizes the equivariant linear layers introduced in \citet{maron2018invariant} and integrates them into the \homp pipeline, resulting in architectures reminiscent of Invariant Graph Networks (IGNs) introduced in the same paper. We prove that \ourmethod can reach \emph{full expressivity} in distinguishing non-isomorphic complexes. Recognizing the scalability challenges of both IGNs and \ourmethod, we propose an alternative called scalable \ourmethod (\secondmethod). \secondmethod models apply expressive graph layers -- often used as practical alternatives to IGNs -- on graph structures defined over the cells of the complex. We prove that \secondmethod still mitigates many of \homp's expressivity limitations. 

\looseness=-1
We empirically evaluate \secondmethod on several real-world (lifted) graph benchmarks and find performance gains over both standard \homp baselines and expressive GNNs, highlighting the value of expressively leveraging topological information. Additionally, we design three benchmarks to assess TDL architectures' ability to capture topological and metric information. The first, called the Torus Dataset, is a \textsc{brec}-like \citep{wang2024empirical} dataset consisting of pairs of cellular complexes comprising one or more disjoint tori. Models are tasked with separating each pair in a statistically significant way. The two other benchmarks evaluate models based on their ability to predict topological properties of complexes obtained by lifting molecular graphs from \textsc{zinc} \citep{sterling2015zinc}. 

\paragraph{Our contributions.} Summarizing, the key contributions of this paper are as follows: \begin{enumerate*}[label=(\arabic*)]
    \item We provide a comprehensive analysis of \homp's expressive power, evaluating its ability to capture topological and metric invariant and leverage lifting and pooling methods.

    \item We introduce multi-cellular networks (\ourmethod), a novel class of TDL models, inspired by IGNs, which can provably reach \emph{full expressivity}.

    \item We develop \secondmethod, a scalable version of \ourmethod that addresses \homp's expressivity limitations while maintaining computational efficiency.
    
    \item We construct three benchmarks for assessing the topological expressivity of TDL architectures. 
    
    \item We empirically evaluate the performance of \secondmethod, demonstrating improvements over both standard TDL methods and expressive graph models, highlighting the benefits of \emph{expressively} leveraging topological information.
\end{enumerate*}

\section{Previous Work}
\paragraph{Topological Deep Learning.} TDL architectures enable learning from data supported on topological objects, traditionally focusing on four domains: hypergraphs, simplicial complexes, cellular complexes, and combinatorial complexes. The prominent framework for the latter three is higher-order message-passing (\homp). Originally introduced for simplicial complexes \citep{pmlr-v139-bodnar21a} and later extended to cellular \citep{bodnar2021weisfeiler, hajijcell} and combinatorial complexes \citep{hajij2022higher, hajij2022topological}, \homp architectures achieve strong experimental results and have been shown to 
enhance the expressive power of MPNNs. Another approach in TDL research incorporates pre-computed topological information into existing models like MPNNs \citep{horn2021topological, chen2021topological, rieck2023expressivity} and \homp \citep{verma2024topological, buffelli2024cliqueph}. These methods enhance the expressive power of the base models and show strong experimental results, highlighting the value of integrating topological features. Most prior work is focused on the expressivity of TDL models with respect to \emph{graphs}, the ability of \homp to capture topological and metric invariants of complexes without relying on pre-computation remains unexplored.

\paragraph{Expressive power of GNNs.} The expressivity of GNNs is typically evaluated in terms of their separation power, i.e.\ their ability to assign distinct values to non-isomorphic graphs. Seminal works by \citet{morris2019weisfeiler} and \citet{xu2018powerful} demonstrate that the expressive power of MPNNs is equivalent to that of the 1-WL test \citep{weisfeiler1968reduction}. These findings inspired the development of more expressive GNNs, with expressive power surpassing that of the 1-WL test, albeit often requiring greater computational resources. \citet{morris2019weisfeiler} and \citet{maron2018invariant} propose architectures that are as expressive as the $k$-WL test with $\gO(n^k)$ runtime and memory complexity. Various other expressive GNNs have been introduced in the literature, utilizing techniques such as random features \citep{abboud2020surprising}, substructure counts \citep{bouritsas2022improving}, equivariant polynomials \citep{maron2019provably,puny2023equivariant}, processing sets of subgraphs \citep{bevilacqua2021equivariant, frasca2022understanding, zhang2023complete, zhang2021nested, cotta2021reconstruction} and more. We use these frameworks, specifically the architectures proposed in \citet{maron2019provably} and \citet{zhang2023complete, bar2024flexible} to construct efficient and expressive models for combinatorial complexes. \rebuttal{\citet{bamberger2022topological} offers a perspective on MPNN expressivity through the lens of graph coverings. We extend their work to combinatorial complexes and use it to analyze the topological expressivity of HOMP.} For a comprehensive review of expressive graph architectures, refer to the following surveys: \citet{jegelka2022theory}, \citet{morris2023weisfeiler, zhang2023expressive}. 

\section{Preliminaries} \label{sec:prelims}
\paragraph{Notation.} 
We denote $[n] = \{1, \dots, n\}$. The size of a set $\gS$ is denoted by $|\gS|$. $\bigoplus$ and $\bigotimes$ denote aggregation functions, where $\bigoplus$ is \emph{permutation invariant}. Bold lowercase letters denote tuples of integers e.g. $\vk = (k_0, \dots, k_\ell)$. $\ve_i$ denotes the tuple with one at the $i$-th position and zeros elsewhere.

\paragraph{Combinatorial complexes.} 
Combinatorial complexes (CCs) are a class of higher-order objects that can flexibly represent many types of hierarchical data. Most topological data domains, including simplicial complexes, cellular complexes, and hypergraphs, can be considered subclasses of combinatorial complexes. Therefore, throughout the paper, all data objects are represented as CCs.

\vspace{7pt}
\begin{definition}[Combinatorial complex]\label{def:combinatorial_complex}
    A combinatorial complex (CC) is a $3$-tuple $(S, \gX, \rank)$ comprising a node set $S$, a cell set $\gX \subseteq \gP(S) \setminus \emptyset$, and a rank function $\rank: \gX \to \sZ_{\geq 0}$ such that $\forall s \in S$, $\{s\} \in \gX$, $\rank(\{s\}) = 0$, and $\forall x, y \in \gX$ $x \subseteq y \Rightarrow \rank(x) \leq \rank(y)$.
\end{definition}
The set of $r$-rank cells ($r$-cells) is called the $r$-skeleton and is denoted by $\gX_r = \rank^{-1}(r)$, its size is denoted by $n_r := |\gX_r|$; the dimension of a CC is $\ell = \max_{x \in \gX} \rank(x)$. We often simplify the notation and use $\gX$ to denote the entire CC. For definitions of simplicial and cellular complexes, we refer the reader to \citet{bodnar2021weisfeiler} and \citet{pmlr-v139-bodnar21a}. 

\paragraph{Neighborhood functions.} Neighborhood functions are a key component in \homp, facilitating dynamic aggregation of information across cells. Formally, a neighborhood function can be any function $\gN: \gX \rightarrow \gP(\gX)$, but the most common neighborhood functions are
\begin{enumerate}[label=(\arabic*)]
    \item The $(r_1,r_2)$-adjacency and co-adjacency, defined by
    \begin{equation}\label{eq:adjacency}
    \begin{split}
        \gA_{r_1,r_2}(x) &=  \{y \in \gX_{r_1} \mid \exists z \in \gX_{r_2} \text{ s.t. } x, y \subseteq z \},  \\
        \mathrm{co}\gA_{r_1,r_2}(x) &= \{ y \in \gX_{r_1} \mid \exists z \in \gX_{r_2} \text{ s.t. } z \subseteq x, y \},
    \end{split}
    \end{equation}
    for $x \in \gX_{r_1}$, and $\gA_{r_1,r_2}(x) = \mathrm{co}\gA_{r_1,r_2}(x) = \emptyset$ for $x \notin \gX_{r_1}$.
    
    \item The $(r_1,r_2)$-upper and lower incidence, defined by
    \begin{equation}\label{eq:incidenec}
        \gB_{r_1,r_2}(x) = \{ y \in \gX_{r_2} \mid x \subseteq y \}, \quad
        \gB^\top_{r_1, r_2}(x) = \{ y \in \gX_{r_2} \mid y \subseteq x \},
    \end{equation}
    for $x \in \gX_{r_1}$, and $\gB_{r_1, r_2}(x) = \gB_{r_1, r_2}^\top(x) = \emptyset$ for $x \notin \gX_{r_1}$.
\end{enumerate}
We call the neighborhood functions defined above \emph{natural neighborhood functions}, a collection we denote by $\nat$. \rebuttal{See Appendix \ref{apx:neighb_func} for an illustration.} Given an enumeration of the cells, a neighborhood function can be represented in matrix form. For example, given a graph $\gG = (\gV, \gE)$ viewed as a one dimensional CC through $S = \gV$, $\gX_0 = \{ \{ v \} \mid v \in \gV\}$ and $\gX_1 = \gE$, the matrix forms of the neighborhood functions $\gA_{0,1}$ and  $\gB_{0,1}$ are the graph adjacency and incidence matrices respectively. 

\paragraph{Cochain spaces.} 
Data defined over an $\ell$-dimensional CC can be viewed as a collection of functions $\{\rvh_r: \gX_r \rightarrow \sR^{d_r}\}_{r = 0}^\ell$ \footnote{Generally, $d_{r_1} \neq d_{r_2}$, e.g. atoms ($0$-cells) and bonds ($1$-cells) might have a different number of features.}. Each of these functions is called a \emph{cochain} or a \emph{cell feature map}. The vector space of all cochains over cells of rank $r$ is denoted by $\gC^r(\gX, \sR^{d_r})$ or $\gC^r$. The feature associated with a cell $x \in \gX_r$ is denoted by $\rvh_r(x)$, $(\rvh_r)_x$, or simply $\rvh_x$.

\begin{wrapfigure}[11]{r}{0.2\textwidth}
    \vspace{-18pt}
    \resizebox{0.2\textwidth}{!}{\begin{tikzpicture}
    \node[label=below:$\gC^0$, draw, line width=1pt, circle, fill=blue] (A) at (0,0) {};
    \node[label=below:$\gC^2$, draw, line width=1pt, circle, fill=burgundy] (B) at (2,0) {};
    \node[draw, line width=1pt, circle, fill=blue] (C) at (0,2) {};
    \node[draw, line width=1pt, circle, fill=burgundy] (D) at (2,2) {};
    \node[draw, line width=1pt, circle, fill=gray] (E) at (1, 3) {};

    \draw[very thick] (A) edge[->] node[midway, font=\tiny, fill=white, text=black] {$\gA_{0, 1}$} (C);
    \draw[very thick] (A) edge[->] node[pos=0.25, font=\tiny, fill=white, text=black] {$\gB_{0, 2}$} (D);
    \draw[very thick] (B) edge[->] node[pos=0.25, font=\tiny, fill=white, text=black] {$\gB_{2, 0}^\top$} (C);
    \draw[very thick] (B) edge[->] node[midway, font=\tiny, fill=white, text=black] {$(\mathrm{co})\gA_{2, 0}$} (D);
    \draw[very thick] (C) edge[->] (E);
    \draw[very thick] (D) edge[->] (E);
    
    \node at (1, 3.35) {out};
    
\end{tikzpicture}}
    \vspace{-30pt}
    \caption{\homp Tensor diagram.}\label{fig:homp_tensor_diagram}
\end{wrapfigure}

\paragraph{Higher-order message-passing.}
Higher-order message passing (\homp) \citep{hajij2022topological} is a general computational framework for processing information supported on higher-order domains by exchanging messages across cells. Let $\boldsymbol{\gN} = \{\gN_1, \dots, \gN_k\}$ be a collection of neighborhood functions; given an initial cochain $\rvh^{(0)} = \rvh$, \homp is recursively defined via the following update rule
\vspace{-2pt}
\begin{equation}
\label{eq:homp_update}
    \rvh^{(t+1)}_x = \beta \left(\bigotimes_{i = 1}^k \bigoplus_{y \in \gN_i(x)} \mlp^{(t)}_{i, \rank(x)}\left(\rvh^{(t)}_x, \rvh^{(t)}_y \right) \right),
\end{equation}
where $\rvh^{(t)}_x$ is the feature associated with cell $x \in \gX$ at layer $t$, and $\beta$ is a non-linear activation. In the rest of the paper, we assume $\boldsymbol{\gN} \subseteq \nat$. Similarly to MPNNs, the \homp framework encompasses many TDL architectures, including architectures for simplicial complexes \citet{pmlr-v139-bodnar21a}, cellular complexes  \citet{hajijcell, bodnar2021weisfeiler, giusti2023cin++}, and combinatorial complexes \citet{hajij2022higher, hajij2022topological}.

\paragraph{Tensor diagrams.}
\citet{hajij2022higher} introduce tensor diagrams, a DAG notation scheme for navigating the rich space of possible \homp architectures. Tensor diagrams allow for selective aggregation over different neighborhood functions for different cochain spaces in different layers of the network. The nodes of a tensor diagram represent cochain spaces, and the edges represent neighborhood functions. The signal flows from each level of the diagram to the next via the update rule specified in Equation \ref{eq:homp_update}, where aggregation is performed only over neighborhood functions associated with the incoming edges\footnote{This is equivalent to setting $\mlp^{(t)}_{i, r} \equiv \vzero$ for neighborhood functions $\gN_i$ that are not associated with an incoming edge. Equation \ref{eq:homp_update} in its full generality corresponds to a fully-connected tensor diagram.}. See Figure \ref{fig:homp_tensor_diagram} for an illustration of a tensor diagram and \citet{hajij2022topological} for an in-depth overview.

\section{Expressivity Limitations of Higher-Order Message-Passing}\label{sec:expressive_power}
The expressivity of graph models is often evaluated in terms of their ability to assign different values to non-isomorphic graphs. Similarly, we study \homp's ability to distinguish non-isomorphic CCs\footnote{$\gX$ and $\gX'$ are isomorphic if there exists a bijective mapping $\phi: \gX \to \gX'$ which is both rank-preserving and inclusion-preserving, see Appendix \ref{apx:expressivity_of_ourmethod} for a formal definition.}. 

\subsection{A topological criterion for \homp-indistinguishability}
\label{subsec:topological_criterion}
\looseness=-1
The main tool we use throughout this section is a topological \homp-indistinguishability criterion based of the notion of covering spaces, extending the main result of \citet{bamberger2022topological} \rebuttal{from graph coverings} to combinatorial complex \rebuttal{coverings}.

\vspace{7pt}
\begin{definition}[CC covering]\label{def:cc_covering}
$\tilde{\gX}$ is said to cover $\gX$ if there exists a surjective rank-preserving map $\rho: \tilde{\gX} \to \gX$ which is a local isomorphism with respect to natural neighborhood functions (i.e. $\rho$ bijectively maps the set $\gN(x')$  to  $\gN(\rho(x'))$ for all $x' \in \tilde{\gX}$ and $\gN \in \nat$).
\end{definition}
\vspace{3pt}

\rebuttal{Examples of CC coverings are depicted in Figures \ref{fig:topological_covering}, \ref{fig:triangilar_lifting_ccs}, and \ref{fig:mog_pooling}. Explicit constructions of covering CCs can be found in Appendix \ref{apx:limitations}. } The following theorem shows that complexes sharing a cover are indistinguishable by \homp models. See Appendix \ref{apx:limitations} for a proof of Theorem \ref{thm:cc_covering} and further discussion.

\vspace{7pt}
\begin{theorem}[\homp-indistinguishability criterion]\label{thm:cc_covering}
Let $\gX$ and $\gX'$ be CCs such that $|\gX_0| = |\gX'_0|$. If there exists a CC $\tilde{\gX}$ that covers each of the connected components of both $\gX$ and $\gX'$, then for every \homp model $\M$, $\M(\gX) = \M(\gX')$.
\end{theorem}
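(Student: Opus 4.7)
The plan is to adapt the graph-covering strategy of \citet{bamberger2022topological} to the combinatorial-complex setting in three stages: establish that HOMP features pull back along a CC covering (a \emph{lift lemma}), use this lemma to relate the readouts on $\gX$ and $\gX'$ through the common cover $\tilde{\gX}$, and then close the argument using the size condition $|\gX_0|=|\gX'_0|$.

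The core technical step will be the following lift lemma: for any CC covering $\rho:\tilde{\gX}\to\gX$ and any fixed HOMP model $\M$ with rank-dependent initialization, the layer-$t$ features satisfy $\tilde{\rvh}^{(t)}_{x'}=\rvh^{(t)}_{\rho(x')}$ for every $x'\in\tilde{\gX}$ and every $t\geq 0$. I would prove this by induction on $t$. The base case is immediate because $\rho$ is rank-preserving and the initial features depend only on rank. For the inductive step, I would evaluate Equation \ref{eq:homp_update} at $x'$ and observe that the inner aggregation $\bigoplus_{y'\in\gN_i(x')}\mlp^{(t)}_{i,\rank(x')}(\tilde{\rvh}^{(t)}_{x'},\tilde{\rvh}^{(t)}_{y'})$ reindexes via the local bijection $\rho:\gN_i(x')\to\gN_i(\rho(x'))$ guaranteed by Definition \ref{def:cc_covering}. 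The induction hypothesis matches the feature arguments inside every MLP, and the outer tensor aggregation and nonlinearity preserve the equality.

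Given the lift lemma, I would translate it into a statement about readouts. For each connected component $C$ of $\gX$, apply the lemma to the covering $\rho_C:\tilde{\gX}\to C$. Because $C$ is connected and $\rho_C$ is a local isomorphism, the fiber size $|\rho_C^{-1}(x)|$ is constant in $x\in C$, equal to $d_C=|\tilde{\gX}_0|/|C_0|$, and the same $d_C$ governs every rank. Consequently the layer-$T$ feature multiset on $\tilde{\gX}$ consists of exactly $d_C$ copies of the multiset on $C$. Since HOMP has no cross-component communication and finalizes via a rank-stratified permutation-invariant readout, this yields a clean identity between $\M(\tilde{\gX})$ and a degree-weighted aggregation of per-component contributions of $\gX$, and analogously of $\gX'$. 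For a sum-based readout this produces $|\gX_0|\cdot\M(\tilde{\gX})=|\tilde{\gX}_0|\cdot\M(\gX)$, and the identical identity for $\gX'$; for normalized readouts such as mean or max one gets $\M(\tilde{\gX})=\M(\gX)=\M(\gX')$ directly. Either way, the hypothesis $|\gX_0|=|\gX'_0|$ forces $\M(\gX)=\M(\gX')$.

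The main obstacle I anticipate is not the lift lemma itself, which follows essentially formally from Definition \ref{def:cc_covering} once the local-isomorphism property is unpacked over all $\gN\in\nat$, but the bookkeeping when $\gX$ or $\gX'$ is disconnected with components of differing sizes. Different components may be covered with different degrees $d_C$, and the hypothesis $|\gX_0|=|\gX'_0|$ is precisely what ensures the degree-weighted multisets arising from the two coverings agree under the readout; I would therefore carry this size condition through carefully in every step that involves covering degrees.
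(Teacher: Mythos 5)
Your proposal is correct and follows essentially the same route as the paper: a feature-lifting lemma proved by induction on layers using the local bijections $\rho:\gN(x')\to\gN(\rho(x'))$, a constant-fiber-size lemma for connected components, and a final counting step in which $|\gX_0|=|\gX'_0|$ equates the degree-weighted contributions from the two coverings. The only cosmetic difference is that the paper concludes by showing the full layer-$T$ feature \emph{multisets} of $\gX$ and $\gX'$ coincide (which handles any permutation-invariant readout uniformly), whereas you case-split on sum versus normalized readouts; both yield the same conclusion.
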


\begin{wrapfigure}[12]{r}{0.25\textwidth}
    \vspace{-10pt}
    \resizebox{0.25\textwidth}{!}{
\begin{tikzpicture}
    \clip (-2.35,-5) rectangle (9.55,5.2);
    \begin{scope}
        \begin{axis}[
            hide axis,
            view={0}{60},
            xmin=-1.2,
            xmax=1.2, 
            ymin=-1.6,
            ymax=1.6,
        ]
        \addplot3[
            surf,
            colormap={bw}{gray(0cm)=(0);gray(1cm)=(0);},
            fill=blue,
            point meta=0,
            line width=0.2mm,
            domain=0:360,
            y domain=0:360, 
            samples=10,
            samples=20, %
            samples y=6, %
        ]
        ({cos(x)}, {sin(x)}, y);
        \end{axis}
    \end{scope}
    
    \begin{scope}[yshift=-175pt, xshift=110pt]
        \begin{axis}[
            hide axis,
            view={30}{50}, %
            xmin=-4,
            xmax=4, 
            ymin=-4.2,
            ymax=4.2,
        ]
        \addplot3[
            surf,
            colormap={bw}{gray(0cm)=(0);gray(1cm)=(0);},
            fill=blue,
            point meta=0,
            z buffer=sort,
            line width=0.2mm,
            clip=false,
            domain=0:2*pi, %
            domain y=-1:1, %
            samples=10, %
            samples y=6, %
        ]
        ({(3 + 0.5*y*cos(deg(x)/2))*cos(deg(x))}, 
        {(3 + 0.5*y*cos(deg(x)/2))*sin(deg(x))}, 
        {0.5*y*sin(deg(x)/2)});
        \end{axis}
    \end{scope}
    
    \begin{scope}[yshift=-170pt, xshift=-110pt]
        \begin{axis}[
            hide axis,
            view={0}{68},
            xmin=-1,
            xmax=1, 
            ymin=-1,
            ymax=1,
        ]
        \addplot3[
            surf,
            colormap={bw}{gray(0cm)=(0);gray(1cm)=(0);},
            fill=blue,
            point meta=0,
            line width=0.2mm,
            domain=0:360,
            y domain=0:360, 
            samples=10,
            samples=10, %
            samples y=6, %
        ]
        ({0.5 * cos(x)}, {0.5 * sin(x)}, y);
        \end{axis}
    \end{scope}

    \draw[->, line width=1mm] (1.7, 0.5) -- (0.1, -1);
    \draw[->, line width=1mm] (5.3, 0.5) -- (6.9, -1);

\end{tikzpicture}}
    \vspace{-20pt}
    \caption{$\text{Cyl}_{h, 2 p}$ covers both $\text{Cyl}_{h, p}$ and $\text{Möb}_{h, p}$.}
    \label{fig:topological_covering}
\end{wrapfigure}

\subsection{Topological and Metric Limitations} \label{subsec:topological_and_metric_properties}
Although CCs are combinatorial objects, they give rise to various metric and topological spaces. The shortest path distance with respect to any neighborhood function defines a metric on the cells of the CC. In addition, if the complex is cellular or simplicial, it can be canonically associated with a topological space. The topological/metric properties of these spaces are invariants of the underlying CC. We prove \homp's inability to distinguish between complexes based on the following common invariants:
\begin{enumerate*}[label=(\arabic*)]  
\item the diameter, which measures how ``spread out'' the complex is;
\item orientability, which captures whether a consistent ``side'' or direction can be defined across the entire space;
\item planarity, which captures whether the complex can be embedded in $\sR^2$;
\item the homology groups, which encode the structure of ``$d$-dimensional holes'' \footnote{E.g. a circle has a single $1$-dimensional hole, a sphere has a single $2$-dimensional hole, etc.}.  
\end{enumerate*}

\vspace{7pt}
\begin{theorem}[Topological blindspots]\label{thm:topological_blindspots}
    For any invariant $\mI \in \{$diameter, orientability, planarity, homology$\}$ there exists a pair of \homp-indistinguishable CCs that differ in $\mI$.
\end{theorem}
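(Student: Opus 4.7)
The plan is to apply Theorem \ref{thm:cc_covering} to carefully chosen pairs of CCs. For each invariant $\mI$, I would exhibit a pair $(\gX, \gX')$ with equal 0-cell counts that admits a single common CC cover of all its connected components (in the sense of Definition \ref{def:cc_covering}), yet differs in $\mI$. Indistinguishability by \homp then follows immediately from Theorem \ref{thm:cc_covering}, so all that remains is to exhibit the construction and verify the invariant mismatch.

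For \emph{orientability} and \emph{planarity}, I would use the construction depicted in Figure \ref{fig:topological_covering}. Let $\mathrm{M\ddot{o}b}_{h, p}$ and $\mathrm{Cyl}_{h, p}$ be the standard cellularizations of the Möbius strip and of the cylinder on a $p \times h$ grid (0-cells at vertices, 1-cells along grid edges, 2-cells as rectangular faces). They share the same number of 0-cells, and both are 2-fold-covered by $\mathrm{Cyl}_{h, 2 p}$ via the obvious topological identifications (a half-twist gluing in one case, a direct gluing in the other). The Möbius strip is non-orientable, having a $\mathbb{Z}/2$ torsion class in $H_1$, while the cylinder is orientable. Similarly, the Möbius strip cannot be embedded in $\mathbb{R}^2$ (any planar subspace has torsion-free integral homology), while the cylinder embeds as an annulus. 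Both invariants are thus distinguished on a single CC pair.

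For \emph{diameter}, I would take two cellularized tori $T_{a, b}$ and $T_{c, d}$ (with the standard $a \times b$ and $c \times d$ grid structures) satisfying $a b = c d$ but with $\lfloor a/2 \rfloor + \lfloor b/2 \rfloor \neq \lfloor c/2 \rfloor + \lfloor d/2 \rfloor$, so that they have matching 0-cell counts but distinct shortest-path diameters in the $(0,0)$-adjacency graph. Both are covered by a single larger torus $T_{A, B}$ with $a, c \mid A$ and $b, d \mid B$ via the evident wrap-around quotients. For \emph{homology}, I would compare a torus $T_{2 a, b}$ against the disjoint union $T_{a, b} \sqcup T_{a, b}$; both have $2 a b$ 0-cells, and the single torus $T_{a, b}$ covers each of the three connected components (the $T_{2 a, b}$ component via a 2-fold wrap-around, and each $T_{a, b}$ trivially). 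By additivity of homology over connected components, the resulting homology groups differ by a factor of two in every dimension.

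The main technical obstacle will be the rigorous verification that each of these topological covers is indeed a CC covering in the sense of Definition \ref{def:cc_covering}, i.e.\ that the covering map is rank-preserving and bijective on every natural neighborhood function $\gN \in \nat$, cell by cell. Because all constructions are built from uniform grid cellularizations with transparent local product structure, this reduces to a routine local check: on a small neighborhood of any cell the cover looks like a trivial disjoint union of copies of a neighborhood in the quotient, and the four families of natural neighborhoods ($\gA, \mathrm{co}\gA, \gB, \gB^\top$) are all determined by this local data. The rank-preserving condition is immediate from the way the cellular structure is transported between cover and quotient. Once these verifications are in place, the topological and metric assertions reduce to standard facts from algebraic topology and elementary combinatorics.
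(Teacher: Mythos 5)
Your overall strategy is exactly the paper's: invoke the covering criterion (Theorem \ref{thm:cc_covering}) on tori for diameter and homology and on the cylinder/M\"obius pair for orientability and planarity, with the same covering constructions (the product torus covering both factors, and $\text{Cyl}_{h,2p}$ double-covering both $\text{Cyl}_{h,p}$ and $\text{M\"ob}_{h,p}$). However, two steps as written are wrong. First, in the homology case you have the covering direction reversed: you claim that $T_{a,b}$ covers $T_{2a,b}$ ``via a 2-fold wrap-around,'' but a CC covering map $\rho:\tilde{\gX}\to\gX$ is surjective and locally bijective, and by the fiber-size argument (Lemma \ref{lemma:covering_fiber_size}) every fiber has cardinality $|\tilde{\gX}_0|/|\gX_0|$, which would be $1/2$ here. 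The cover must be the \emph{larger} complex; the fix is to take $\tilde{\gX}=T_{2a,b}$, which covers $T_{2a,b}$ trivially and covers each $T_{a,b}$ component two-to-one, which is precisely the paper's choice of common cover. Second, your verification of the invariant mismatch for orientability/planarity is incorrect: $H_1$ of the M\"obius strip is $\sZ$, not $\sZ\oplus\sZ/2$ --- the strip deformation-retracts onto its core circle, so it is homotopy equivalent to the cylinder and has identical (torsion-free) homology. That is exactly why this pair witnesses the orientability and planarity blindspots but \emph{not} the homology one, and it means your torsion-based non-embeddability argument collapses. Non-orientability must be established directly (as in the standard treatment cited by the paper), and non-planarity then follows because any surface embeddable in $\sR^2$ is orientable.

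Two smaller points: you never impose the side-length lower bound (all cycle lengths at least $3$) that the paper needs for the wrap-around quotients to be locally injective --- without it the ``routine local check'' you defer actually fails, since reduction modulo a short cycle can identify two distinct neighbors of a cell. And for the diameter case you should confirm that admissible parameters exist in the dimension you work in (e.g.\ $T_{3,12}$ versus $T_{6,6}$ works for $2$-tori; the paper's example $T_{4,4,32}$ versus $T_{8,8,8}$ is $3$-dimensional); the paper additionally supplies the Cartesian-product-of-cycles argument showing $\mathrm{diam}_{\gA_{0,1}}(T_{p_1,\dots,p_\ell})=\sum_j\lfloor p_j/2\rfloor$, which you assert but do not justify.
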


\begin{wrapfigure}[6]{l}{0.35\textwidth}
    \vspace{-15pt}
    \resizebox{0.35\textwidth}{!}{
    \begin{tikzpicture}
        \begin{scope}
          \begin{axis}[
              hide axis,
              view={15}{55},
              xmin=-1.1, 
              xmax=1.1, 
              ymin=-1.1, 
              ymax=1.1, 
            ]
            
            \addplot3[
                surf,
                colormap={bw}{gray(0cm)=(0);gray(1cm)=(0);},
                fill=pink,
                point meta=0,
                line width=0.2mm,
                domain=0:360,
                y domain=0:360, 
                samples=8,
                samples=10, %
                samples y=4, %
            ]
            ({cos(x)}, {sin(x)}, y);
          \end{axis}
        \end{scope}
        \begin{scope}[xshift=250pt]
          \begin{axis}[
              hide axis,
              view={15}{55},
              xmin=-1.1, 
              xmax=1.1, 
              ymin=-1.1, 
              ymax=1.1, 
            ]
            
            \addplot3[
                surf,
                colormap={bw}{gray(0cm)=(0);gray(1cm)=(0);},
                fill=pink,
                point meta=0,
                line width=0.2mm,
                domain=0:360,
                y domain=0.4:1, 
                samples=8,
                samples=10, %
                samples y=4, %
            ]
            ({y*cos(x)}, {y*sin(x)}, 0);
          \end{axis}
        \end{scope}
        \draw[->, line width=1mm] (6.5, 3) -- (9, 3);
    \end{tikzpicture}
}
    \vspace{-22pt}
    \caption{Cylinders are planar.}\label{fig:flat_cylinder}
\end{wrapfigure}

Figure \ref{fig:topological_blindspots} depicts pairs of \homp-indistinguishable CCs which differ in each of the above invariants. In addition, Appendix \ref{apx:limitations} provides a detailed discussion regarding each invariant, as well as a complete proof of Theorem \ref{thm:topological_blindspots}. To demonstrate the techniques used in the proof, we include a short proof sketch for the case of orientability and planarity.

\textit{Proof sketch.} Let $\text{Cyl}_{h,p}$ and $\text{Möb}_{h, p}$ be a Cylinder and a Möbius strip with height $h$ and cycle length $p$. The cylinder is orientable and planar, while the Möbius strip is neither (see Figure \ref{fig:flat_cylinder} for an illustration). As illustrated in Figure \ref{fig:topological_covering}, $\text{Cyl}_{h, 2 p}$ covers both $\text{Cyl}_{h, p}$ and $\text{Möb}_{h, p}$. Intuitively, $\text{Cyl}_{h, 2 p}$ covers $\text{Cyl}_{h, p}$ by ``wrapping'' around it twice, and $\text{Möb}_{h, p}$ by ``wrapping'' and ``twisting'' around it (formal construction of both covering maps appears in Appendix \ref{apx:limitations}). Since both CCs are connected and have the same number of $0$-cells, Theorem \ref{thm:cc_covering} shows they are \homp-indistinguishable. $\qed$

\vspace{-5pt}
\subsection{Lifting and Pooling}
\label{subsec:lifting_and_pooling}
One benefit of combinatorial complexes is their flexibility in incorporating lifting and pooling\footnote{In its broadest sense, the term ``lifting'' includes both substructure lifting and pooling methods. However, as these concepts are often discussed separately in the literature, we distinguish between them, referring to substructure lifting as ``lifting'' and pooling-based lifting as ``pooling''.} methods to construct CCs from graphs. Common graph lifting methods, such as the ones in \citet{bodnar2021weisfeiler, pmlr-v139-bodnar21a}, add meaningful substructures (that standard message-passing cannot detect) as higher-order cells. This results in models that are strictly more expressive than MPNNs. Graph pooling methods, like spectral pooling \citet{ma2019graph}, Mapper \citet{singh2007topological, hajij2018mog, dey2016multiscale}, and DiffPool \citet{Ying2018HierarchicalGR}, coarsen input graphs to enable more efficient learning.

A common feature of many lifting and pooling methods is their ability to generate CCs with a small number of high-order cells that differ in fundamental topological and metric invariants. The sparsity of these cells allows for efficient computation of these invariants, resulting in an efficient way to distinguish the original graphs. However, the following proposition, formally stated and proved in Appendix \ref{apx:lifting_and_pooling}, suggests that \homp may still struggle to differentiate between the resulting CCs. 

\vspace{7pt}
\begin{proposition}\label{prop:lifting_and_pooling}
There exist pairs of CCs -- generated by standard lifting and pooling methods on graphs  (See Figures \ref{fig:triangilar_lifting_ccs},\ref{fig:mog_pooling} in Appendix \ref{apx:lifting_and_pooling}) -- that \homp fails to distinguish, even though they differ in basic topological/metric properties. These properties can be efficiently computed due to the sparsity of higher-order cells.
\end{proposition}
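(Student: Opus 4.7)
The plan is to reduce the proposition to two applications of Theorem~\ref{thm:cc_covering}: for each of the two families (substructure lifting and graph pooling), exhibit a pair of base graphs $\gG, \gG'$ whose resulting CCs $\gX, \gX'$ (i) share a common cover $\tilde\gX$, so that HOMP cannot separate them, and (ii) differ in a topological or metric invariant whose evaluation only scans the high-order cells, of which there are few by construction. The two pairs referenced in the statement (Figures~\ref{fig:triangilar_lifting_ccs}, \ref{fig:mog_pooling}) provide the explicit witnesses.

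For the lifting case, I would use the triangular (clique) lift employed by CW/CIN-style networks, applied to a pair of triangulated surfaces that realize $\text{Cyl}_{h,p}$ and $\text{M\"ob}_{h,p}$ as graphs. The lifted CCs are precisely the cylinder and M\"obius cellular complexes appearing in the proof sketch of Theorem~\ref{thm:topological_blindspots}, so the same double-wrap / double-twist map from a triangulation of $\text{Cyl}_{h,2p}$ serves as a common cover. The key point is that the triangular lift is defined locally (a $2$-cell is added whenever three pairwise-adjacent $0$-cells are present), so the graph covering map extends canonically to a rank-preserving map on CCs which is a local isomorphism with respect to every $\gN \in \nat$, satisfying Definition~\ref{def:cc_covering}. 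Theorem~\ref{thm:cc_covering} then yields HOMP-indistinguishability, while planarity and orientability distinguish the two CCs and can be decided in time linear in the $O(hp)$ triangles by standard combinatorial surface algorithms.

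For the pooling case, I would use a Mapper-style coarsening with a fixed overlapping cover of the node set. I would pick two graphs whose Mapper outputs are, respectively, a cylindrical and a M\"obius-like pooled complex (as in Figure~\ref{fig:mog_pooling}). Since Mapper constructs cells from cover elements and their intersections, a covering of the base graphs that respects the chosen cover descends to a covering of the pooled CCs, and Theorem~\ref{thm:cc_covering} again certifies indistinguishability. The resulting pooled CCs have only $O(1)$ high-order cells, so invariants such as orientability, diameter of the higher skeleton, or Betti numbers can be read off directly from the coarse complex without ever running HOMP.

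The main obstacle is the compatibility verification: I need to show that in each case the chosen lifting/pooling rule is equivariant with respect to covering maps, i.e.\ a covering $\tilde\gG \to \gG$ of graphs lifts to a covering of the resulting CCs. For triangular lifts this is essentially automatic, since cliques are detected from local neighborhoods that a covering map reproduces bijectively by definition. For Mapper pooling the check is more delicate: one must select the vertex covers on $\gG$ and $\gG'$ so that they arise as the image of a common cover on $\tilde\gG$, so that the nerve structure at the pooled level is preserved by the map on pooled regions. Once this compatibility is settled, the proof collapses to invoking Theorem~\ref{thm:cc_covering} and inspecting the sparse high-order skeleta to witness the differing invariants.
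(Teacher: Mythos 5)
Your overall strategy -- exhibit a common cover of the two lifted/pooled complexes, invoke Theorem~\ref{thm:cc_covering} for \homp-indistinguishability, and then point to a differing invariant readable from the sparse high-order skeleton -- is exactly the paper's strategy. However, your witnesses differ from the paper's (which uses star graphs $\mathrm{Star}_{n,2k}$ vs.\ $\mathrm{Star}_{n,k}\sqcup\mathrm{Star}_{n,k}$ under triangular lifting, and a pair of small graphs with a hand-built common cover of their MOG poolings, separated in both cases by the $(0,1,2)$-cross-diameter), and your execution has two genuine gaps.

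First, your claim that a graph covering ``essentially automatically'' lifts to a covering of the triangular lifts is false as a general principle: a graph covering maps the neighbor set $N(v)$ bijectively onto $N(\rho(v))$, but it need not reflect adjacencies \emph{within} that neighbor set, so triangles downstairs need not be covered by triangles upstairs. The bipartite double cover $C_6 \to K_3$ is a covering of graphs in which the base has a triangle and the cover has none. The implication you want does hold for your specific double-wrap map on a sufficiently fine/wide strip, and it holds for the paper's star-graph example, but in both cases it requires the explicit verification the paper carries out (identifying all triangles and checking local injectivity on the relevant neighborhoods); it cannot be waved through as a formal consequence of the covering definition. Second, the pooling half of your argument is a plan rather than a proof: you correctly identify the compatibility of the Mapper construction with covering maps as ``the main obstacle'' and then do not resolve it. This obstacle is real -- the Mapper function used in the paper is the \emph{global} average shortest-path distance, which is not preserved by coverings, so Mapper does not commute with covering maps of the input graphs. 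The paper sidesteps this entirely by constructing a common cover of the two \emph{pooled} complexes directly (Figure~\ref{fig:mog_pooling}) and only uses automorphism-orbit arguments to pin down what the pooled cells are. A lesser issue: your triangulated cylinder/M\"obius pair has $\Theta(n_0)$ two-cells, so while the invariants are still computable in linear time, the ``small number of high-order cells'' framing of the proposition (where $n_2 \ll n_0$, as in the paper's examples) is weakened.
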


\begin{figure}[t]
    \vspace{-20pt}
    \centering
    \resizebox{\textwidth}{!}{
    \begin{tikzpicture}
    \begin{scope}
        \fill[gray!10] (-0.6, -0.8) rectangle (2.6, 5.5);
        
        \node[draw, line width=1pt, circle, fill=blue!70] (A) at (0,0) {};
        \node[draw, line width=1pt, circle, fill=burgundy!70] (B) at (2,0) {};
        \node[draw, line width=1pt, circle, fill=blue!70] (C) at (0,2) {};
        \node[draw, line width=1pt, circle, fill=burgundy!70] (D) at (2,2) {};
        \node[draw, line width=1pt, circle, fill=gray!70] (E) at (1, 3) {};
    
        \draw[very thick] (A) edge[->] node[midway, font=\tiny, fill=gray!10, text=black] {$\gA_{0, 1}$} (C);
        \draw[very thick] (A) edge[->] node[pos=0.25, font=\tiny, fill=gray!10, text=black] {$\gB_{0, 2}$} (D);
        \draw[very thick] (B) edge[->] node[pos=0.25, font=\tiny, fill=gray!10, text=black] {$\gB_{2, 0}^\top$} (C);
        \draw[very thick] (B) edge[->] node[midway, font=\tiny, fill=gray!10, text=black] {$\mathrm{co}\gA_{2, 0}$} (D);
        \draw[very thick] (C) edge[->] (E);
        \draw[very thick] (D) edge[->] (E);
        
        \node at (1, 3.35) {out};
        \node at (1, -0.5) {HOMP};
    \end{scope}
    
    \begin{scope}[xshift=100pt]
        \fill[gray!10] (-0.6,-0.8) rectangle (6.6, 5.5);
        
        \node[draw, line width=1pt, circle, fill=blue!70] (A) at (0,0) {};
        \node[draw, line width=1pt, circle, fill=burgundy!70] (B) at (2,0) {};
        \node[draw, line width=1pt, circle, fill=pink!70] (C) at (4,0) {};
        \node[draw, line width=1pt, circle, fill=lblue!70] (D) at (6,0) {};
        
        \node[draw, line width=1pt, circle, fill=blue!70] (E) at (0,2) {};
        \node[draw, line width=1pt, circle, fill=burgundy!70] (F) at (2,2) {};
        \node[draw, line width=1pt, circle, fill=green!70] (G) at (4,2) {};
        \node[draw, line width=1pt, circle, fill=orange!70] (H) at (6,2) {};
        
        \node[draw, line width=1pt, circle, fill=pink!70] (I) at (4,4) {};
        \node[draw, line width=1pt, circle, fill=lblue!70] (J) at (6,4) {};
        
        \node[draw, line width=1pt, circle, fill=gray!70] (K) at (3,5) {};
    
        \draw[very thick] (A) edge[->] node[midway, font=\tiny, fill=gray!10, text=black] {$\gA_{0, 1}$} (E);
        \draw[very thick] (A) edge[->] node[pos=0.25, font=\tiny, fill=gray!10, text=black] {$\gB_{0, 2}$} (F);
        \draw[very thick] (B) edge[->] node[pos=0.25, font=\tiny, fill=gray!10, text=black] {$\gB_{2, 0}^\top$} (E);
        \draw[very thick] (B) edge[->] node[midway, font=\tiny, fill=gray!10, text=black] {$\mathrm{co}\gA_{2, 0}$} (F);
        \draw[very thick] (B) edge[->] node[midway, font=\tiny, fill=gray!10, text=black] {\text{equiv}} (G);
        \draw[very thick] (C) edge[->] node[midway, font=\tiny, fill=gray!10, text=black] {\text{equiv}} (G);
        \draw[very thick] (C) edge[->] node[midway, font=\tiny, fill=gray!10, text=black] {\text{equiv}} (H);
        \draw[very thick] (D) edge[->] node[midway, font=\tiny, fill=gray!10, text=black] {\text{equiv}} (H);
        
        \draw[very thick] (E) edge[->] (K);
        \draw[very thick] (F) edge[->] node[midway, font=\tiny, fill=gray!10, text=black] {\text{equiv}} (I);
        \draw[very thick] (G) edge[->] node[midway, font=\tiny, fill=gray!10, text=black] {\text{equiv}} (I);
        \draw[very thick] (G) edge[->] node[pos=0.25, font=\tiny, fill=gray!10, text=black] {\text{equiv}} (J);
        \draw[very thick] (H) edge[->] node[pos=0.25, font=\tiny, fill=gray!10, text=black] {\text{equiv}} (I);
        \draw[very thick] (H) edge[->] node[midway, font=\tiny, fill=gray!10, text=black] {\text{equiv}} (J);
        
        \draw[very thick] (I) edge[->] (K);
        \draw[very thick] (J) edge[->] (K);
        
        \node at (3, 5.35) {out};
        \node at (3, -0.5) {\ourmethod};
    \end{scope}
    
    \begin{scope}[xshift=313pt]
        \fill[gray!10] (-0.6, -0.8) rectangle (6.6, 5.5);
        
        \node[draw, line width=1pt, circle, fill=blue!70] (A) at (0,0) {};
        \node[draw, line width=1pt, circle, fill=burgundy!70] (B) at (2,0) {};
        \node[draw, line width=1pt, circle, fill=pink!70] (C) at (4,0) {};
        \node[draw, line width=1pt, circle, fill=lblue!70] (D) at (6,0) {};
        
        \node[draw, line width=1pt, circle, fill=blue!70] (E) at (0,2) {};
        \node[draw, line width=1pt, circle, fill=burgundy!70] (F) at (2,2) {};
        
        \node[draw, line width=1pt, circle, fill=pink!70] (I) at (4,4) {};
        \node[draw, line width=1pt, circle, fill=lblue!70] (J) at (6,4) {};
        
        \node[draw, line width=1pt, circle, fill=gray] (K) at (3,5) {};
    
        \draw[very thick] (A) edge[->] node[midway, font=\tiny, fill=gray!10, text=black] {$\gA_{0, 1}$} (E);
        \draw[very thick] (A) edge[->] node[pos=0.25, font=\tiny, fill=gray!10, text=black] {$\gB_{0, 2}$} (F);
        \draw[very thick] (B) edge[->] node[pos=0.25, font=\tiny, fill=gray!10, text=black] {$\gB_{2, 0}^\top$} (E);
        \draw[very thick] (B) edge[->] node[midway, font=\tiny, fill=gray!10, text=black] {$\mathrm{co}\gA_{2, 0}$} (F);
        \draw[very thick] (B) edge[->] node[midway, font=\tiny, fill=gray!10, text=black] {\text{equiv}} (I);
        \draw[very thick] (C) edge[->] node[midway, font=\tiny, fill=gray!10, text=black] {\text{SCL}} (I);
        \draw[very thick] (C) edge[->] node[pos=0.25, font=\tiny, fill=gray!10, text=black] {\text{equiv}} (J);
        \draw[very thick] (D) edge[->] node[pos=0.25, font=\tiny, fill=gray!10, text=black] {\text{equiv}} (I);
        \draw[very thick] (D) edge[->] node[midway, font=\tiny, fill=gray!10, text=black] {\text{SCL}} (J);
        
        \draw[very thick] (E) edge[->] (K);
        \draw[very thick] (F) edge[->] node[midway, font=\tiny, fill=gray!10, text=black] {\text{equiv}} (I);
        
        \draw[very thick] (I) edge[->] (K);
        \draw[very thick] (J) edge[->] (K);
        
        \node at (3, 5.35) {out};
        \node at (3, -0.5) {\secondmethod};
    \end{scope}
    \end{tikzpicture}
}
    
    \resizebox{\textwidth}{!}{
    \begin{tabular}{|M{30pt}|M{30pt}|M{30pt}|M{30pt}|M{30pt}|M{30pt}|M{72pt}|M{72pt}|M{72pt}|}
        \toprule
        \multicolumn{6}{|c|}{\textbf{Node labels}} & \multicolumn{3}{|c|}{\textbf{Edge labels}} \\
        \midrule
        \multicolumn{2}{|c|}{\homp} & \multicolumn{2}{c|}{\secondmethod} & \multicolumn{2}{c|}{\ourmethod} & \homp & \secondmethod & \ourmethod \\
        \midrule
        \begin{tikzpicture}
            \node[draw, line width=1pt, circle, fill=blue!70] (A) at (0,0) {};
        \end{tikzpicture} &
        \begin{tikzpicture}
            \node[draw, line width=1pt, circle, fill=burgundy!70] (A) at (0,0) {};
        \end{tikzpicture} &
        \begin{tikzpicture}
            \node[draw, line width=1pt, circle, fill=pink!70] (A) at (0,0) {};
        \end{tikzpicture} &
        \begin{tikzpicture}
            \node[draw, line width=1pt, circle, fill=lblue!70] (A) at (0,0) {};
        \end{tikzpicture} &
        \begin{tikzpicture}
            \node[draw, line width=1pt, circle, fill=green!70] (A) at (0,0) {};
        \end{tikzpicture} &
        \begin{tikzpicture}
            \node[draw, line width=1pt, circle, fill=orange!70] (A) at (0,0) {};
        \end{tikzpicture} & 
        $\gA_{r_1, r_1}$ / $\mathrm{co}\gA_{r_1, r_2}$ / $\gB_{r_1, r_2}$ / $\gB_{r_1, r_2}^\top$  & ``SCL'' label & ``equiv'' label \\
        \hline
        $\gC^{\ve_0}$ & 
        $\gC^{\ve_2}$ & 
        $\gC^{\ve_0 + \ve_1}$ & 
        $\gC^{2 \ve_0}$ & 
        $\gC^{2 \ve_0 + \ve_1}$ & 
        $\gC^{3 \ve_0}$ & 
        Neighborhood function induced update  & SCL layer & Equivariant linear layer \\
        \bottomrule
    \end{tabular}}
    \caption{Example tensor diagrams for \homp, \ourmethod, and \secondmethod. \homp only uses nodes labeled with standard cochain spaces. \ourmethod adds nodes labeled with multi-cellular cochain spaces  and edges labeled with ``equiv'' updates. \secondmethod introduces edges labeled with ``SCL''. \rebuttal{Note that the highest order nodes (green and orange) which can only appear in  the \ourmethod diagram are replaced in the \secondmethod diagram through ``SCL'' updates of lower order nodes (pink and light blue).}} 
    
    \label{fig:tensor_diagrams}
    \vspace{-10pt}
\end{figure}

\section{Multi-Cellular Networks}\label{sec:mcn}
In graph learning, expressivity limitations similar to those shown in Section \ref{sec:expressive_power} have been mitigated by architectures that process features defined over tuples of nodes, and in particular by IGNs \citep{maron2018invariant, maron2019provably, keriven2019universal, azizian2020expressive}. These models are built by stacking equivariant linear layers between high-order tensor spaces defined over the nodes of the graph, interleaved with pointwise non-linearities. We use a similar approach, introducing multi-cellular cochain spaces (as an analogue to IGN tensor spaces) and incorporating their induced equivariant linear updates into the \homp framework. \rebuttal{For an overview of IGNs, see Appendix \ref{apx:expressive_gnn_overview}}.

\paragraph{Multi-cellular cochain spaces.} 
\looseness=-1
Given an $\ell$-dimensional CC $\gX$ and an $(\ell + 1)$-tuple $\vk \in \sN^{\ell + 1}$, a $\vk$-order multi-cellular cochain is a function $\rvh_{\vk}: \gX_0^{k_0} \times \cdots \times \gX_\ell^{k_\ell}\to \sR^d$. The vector space of multi-cellular cochains, denoted by $\gC^\vk(\gX, \sR^d)$ or $\gC^\vk$, is called a multi-cellular cochain space. Multi-cellular cochain spaces are a natural generalization of standard cochain spaces, providing a way to represent other types of CC data. E.g. 
\begin{enumerate*}[label=(\arabic*)]
    \item $\gC^{\ve_i} \cong \gC^i$;
    \item $\gB_{r_1,r_2}$ can be represented as a multi-cellular cochain $\in \gC^{\ve_{r_1} + \ve_{r_2}}$;
    \item $(\mathrm{co})\gA_{r_1,r_2}$ can be represented as a multi-cellular cochain $\in \gC^{2 \ve_{r_1}}$ \rebuttal{(see appendix \ref{apx:mcn} for details)}.
\end{enumerate*} Moreover, multi-cellular cochain spaces recover many linear spaces studied in several previous works. For example, $\gC^{k \ve_r}$ matches the features space of a $k$-IGN layer operating on $r$-cells, and $\gC^{\ve_{r_1} + \ve_{r_2}}$ corresponds to the input space of the exchangeable matrix layers introduced in \cite{hartford2018deep}.

\paragraph{Symmetry group.}
Given enumerations of the sets $\gX_0 = \{x_1^0, \dots, x_{n_0}^0\}, \dots, \gX_\ell = \{x_1^\ell, \dots, x_{n_\ell}^\ell\}$, a multi-cellular cochain $\rvh \in \gC^\vk(\gX, \sR^d)$ can be identified with a tensor $\tA_\rvh$ defined by
\begin{equation}
    (\tA_\rvh)_{\vi_0, \dots, \vi_\ell, :} = \rvh \big(x_{(\vi_0)_1}^0, \dots, x_{(\vi_0)_{k_0}}^0, \dots, x_{(\vi_\ell)_1}^\ell, \dots, x_{(\vi_\ell)_{k_\ell}}^\ell\big)
\end{equation}
for multi-indices $\vi_1 \in \{1, \dots, n_0\}^{k_0}, \dots, \vi_\ell \in \{1, \dots, n_\ell\}^{k_\ell}$. Therefore, $\gC^\vk$ can be identified with the tensor space $\sR^{n_0^{k_0} \times \dots \times n_\ell^{k_\ell} \times d}$. The group $G = S_{n_0} \times \cdots \times S_{n_\ell}$ acts on $\rvh \in \gC^\vk$ by 
\begin{equation}
 (\boldsymbol{\sigma} \cdot \rvh)(\vx^0, \dots, \vx^\ell) = ((\sigma_0, \dots, \sigma_\ell) \cdot \rvh)(\vx^0, \dots, \vx^\ell) = \rvh(\sigma_0 \cdot \vx^0, \dots, \sigma_\ell \cdot \vx^\ell), 
\end{equation}
where if $\vx^r = (x_{j_1}^r, \dots, x_{j_{k_r}}^r) \in \gX_r^{k_r}$ is a tuple of cells, $\sigma_r \cdot \vx^r = (x_{\sigma_r^{-1}(j_1)}^r, \dots, x_{\sigma_r^{-1}(j_{k_r})}^r) $.  \rebuttal{In simple terms, the group $G$ acts on $\rvh$ by reordering the cells of each rank independently. Therefore,}  to ensure independence of cell ordering, we aim to construct $G$-invariant architectures. 

\paragraph{Equivariant updates.}
Since the space $\gC^\vk$ can be identified with $\sR^{n_0^{k_0} \times \dots \times n_\ell^{k_\ell} \times d}$, we utilize the basis of equivariant linear layers $\sR^{n^{k_0}_0 \times \dots \times n^{k_\ell}_\ell \times d} \to \sR^{n^{k'_0}_0 \times \dots \times n^{k'_\ell}_\ell \times d'}$, constructed in \citet{maron2018invariant}, to describe the space of equivariant linear maps $\gC^{\vk} \to \gC^{\vk'}$. Using this basis, denoted $\{ L_\gamma\}_{\gamma \in \Gamma(\vk, \vk', d, d')}$ (Here $\Gamma(\vk, \vk', d, d')$ is an index set defined in Appendix \ref{apx:mcn}), we follow the construction of well-known permutation-invariant architectures \citep{maron2018invariant, zaheer2017deep, hartford2018deep, bronstein2021geometric}, and define learnable equivariant layers
\begin{equation}\label{eq:mcn_equivariant_linear_layer}
    F(\rvh) = \beta \biggr(\sum_{\gamma \in \Gamma(\vk, \vk', d, d')} w_\gamma L_\gamma(\tA_\rvh) \biggr),
\end{equation}
where $\{ w_\gamma \}_{\gamma \in \Gamma(\vk, \vk', d, d')}$ are learnable parameters and $\beta$ is a non-linearity. 

\paragraph{\ourmethod.}
We incorporate equivariant layers to the \homp framework by adding new node and edge labels to the tensor diagram scheme (as depicted in Figure \ref{fig:tensor_diagrams}), defining a new class of TDL architectures we call multi-cellular networks (MCNs). At each layer of an \ourmethod tensor diagram, if $v$ is a node labeled by $\gC^\vk$, we compute a multi-cellular cochain $\rvh^{(v)} \in \gC^\vk$ by $\rvh^{(v)}_\vx = \bigotimes_{u \in \text{pred}(v)} \vm_{u, v}(\vx)$, where $\text{pred}(v)$ denotes the set of predecessor nodes in the diagram, and messages $\vm_{u,v} \in \gC^{\vk}$ are computed based on the label of the edge $(u,v)$. If the edge is labeled with ``equiv'', the message is computed as described in Equation \ref{eq:mcn_equivariant_linear_layer}. For edges labeled by neighborhood functions, the message follows the standard tensor diagram update rule. A formal definition of the \ourmethod framework can be found in Appendix \ref{apx:mcn}. \rebuttal{Using nodes labeled by higher-order multi-cellular cochain spaces and equivariant updates improves expressivity. In fact}, by using multi-cellular cochain spaces of high order, \ourmethod can reach full expressivity, as proved in Appendix \ref{apx:expressivity_of_ourmethod}. 
\vspace{5pt}
\begin{proposition}[\ourmethod is fully expressive]\label{prop:ourmethod_is_fully_expressive}
If $\gX$ and $\gX'$ are non-isomorphic CCs, there exists an \ourmethod model $\M$ such that $\M(\gX) \neq \M(\gX')$.
\end{proposition}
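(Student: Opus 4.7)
The plan is to adapt the $k$-IGN universality strategy of \citet{maron2019provably} to the multi-cellular setting. First, I would observe that any CC $\gX$ is completely specified, up to isomorphism, by its cell counts $(n_0, \ldots, n_\ell)$, the collection of incidence tensors $\gB_{r_1, r_2}$ for $r_1 < r_2$, and any initial input features on cells. Each incidence $\gB_{r_1, r_2}$ lives naturally in the multi-cellular cochain space $\gC^{\ve_{r_1} + \ve_{r_2}}$, so the full structural data of $\gX$ can be encoded as a tuple of multi-cellular cochains that \ourmethod is designed to manipulate. Two CCs $\gX$ and $\gX'$ are isomorphic precisely when their structural encodings lie in the same orbit of the product group $G = S_{n_0} \times \cdots \times S_{n_\ell}$; if the cell counts disagree a trivial size invariant already separates the CCs, so we may assume they match.

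Given this encoding, separation reduces to a classical invariant-theoretic statement: any two distinct $G$-orbits in a finite-dimensional $G$-representation are separated by some $G$-invariant polynomial. The next step is to show that \ourmethod can realize any such invariant. Since \ourmethod explicitly includes the full basis $\{L_\gamma\}$ of equivariant linear maps between multi-cellular cochain spaces of arbitrary order \citep{maron2018invariant}, composing such layers with polynomial non-linearities and a final $G$-invariant readout generates the full ring of $G$-invariant polynomials on the structural encoding. Picking the invariant that separates $\gX$ and $\gX'$ and implementing it as an \ourmethod model $\M$ yields $\M(\gX) \neq \M(\gX')$.

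The main technical obstacle is the generation step for the product group $G = \prod_r S_{n_r}$: the argument of \citet{maron2019provably} shows $k$-IGNs generate all $S_n$-invariant polynomials once $k$ is sufficiently large, whereas here the acting group is a product of symmetric groups on heterogeneous tensor axes. I expect this to extend via the same polarization/degree-lifting techniques, using that the invariant ring factors across independent rank axes and that the \citet{maron2018invariant} basis, specialized to the product action, spans the equivariant linear maps between arbitrary multi-cellular cochain spaces. A second minor technicality is bounding the multi-cellular order $\vk$ needed in terms of the cell counts $n_r$, analogous to the $k = n$ requirement for fully expressive IGNs. Once this extended universality is in place, the orbit-separation argument immediately yields Proposition \ref{prop:ourmethod_is_fully_expressive}.
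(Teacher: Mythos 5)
Your route is genuinely different from the paper's, and the difference matters at exactly the point you flag as the ``main technical obstacle.'' You reduce separation to orbit separation by $G$-invariant polynomials for $G = S_{n_0} \times \cdots \times S_{n_\ell}$, and then assert that stacking the \citet{maron2018invariant} basis of equivariant layers with polynomial nonlinearities generates the full invariant ring; but you leave that generation step as an expectation rather than a proof, and it carries the entire weight of the proposition --- without it nothing is established. The paper avoids having to prove any universality statement for the product group via a short subgroup trick: it passes to the Hasse graph $\gH$ of $\gX$ (citing \citet{hajij2022topological} for the fact that CCs are isomorphic iff their Hasse graphs are), notes that any two non-isomorphic graphs on $n = |\gX|$ vertices are separated by the $n$-WL test and hence by an $n$-IGN \citep{maron2019provably}, and then observes that because $G$ is a subgroup of $S_n$, every $S_n$-equivariant linear map on tensors over the full cell set is in particular $G$-equivariant, hence lies in the span of the \ourmethod layer basis once $\sR^{n^k}$ is decomposed into the multi-cellular cochain spaces $\gC^{\vk}$ with $\sum_r k_r = k$. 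Thus \ourmethod simulates the separating $n$-IGN on the Hasse graph directly, and no new invariant-theoretic generation argument is needed.

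If you want to keep the invariant-polynomial route, the missing step is not hopeless: it is essentially the universality theorem for invariant networks over an arbitrary permutation group $G \leq S_n$, which would apply here once you verify that the multi-cellular cochain spaces of total order $k$ exhaust $\sR^{n^k}$ as a $G$-representation (they do, since $\bigl(\bigoplus_r \sR^{n_r}\bigr)^{\otimes k}$ decomposes into summands isomorphic to the $\gC^{\vk}$ with $\sum_r k_r = k$). But you must actually invoke or prove that result; as written, the proposal assumes its conclusion. One point in your favor: your encoding via the incidence tensors $\gB_{r_1, r_2}$ for $r_1 < r_2$ does determine the CC up to isomorphism, since isomorphism depends only on rank and inclusion, so the reduction to $G$-orbit separation is sound --- the gap lies solely in realizing the separating invariant with an \ourmethod model.
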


\section{Scalable Multi-cellular Networks}\label{sec:smcn}

\begin{wrapfigure}[16]{r}{0.25\textwidth}
\centering
\resizebox{0.24\textwidth}{!}{
\begin{tikzpicture}
\begin{scope}
    \node[draw, circle, fill=black, minimum size=4pt, inner sep=0pt] (A) at (0,0) {};
    \node[draw, circle, fill=black, minimum size=4pt, inner sep=0pt] (B) at (1,0) {};
    \node[draw, circle, fill=black, minimum size=4pt, inner sep=0pt] (C) at (0,1) {};
    \node[draw, circle, fill=black, minimum size=4pt, inner sep=0pt] (D) at (1,1) {};
    \node[draw, circle, fill=black, minimum size=4pt, inner sep=0pt] (E) at (0.5,2) {};
    \node[draw, circle, fill=black, minimum size=4pt, inner sep=0pt] (F) at (0.5,3) {};
    \node[draw, circle, fill=black, minimum size=4pt, inner sep=0pt] (G) at (0,4) {};
    \node[draw, circle, fill=black, minimum size=4pt, inner sep=0pt] (H) at (1,4) {};
    \node[draw, circle, fill=black, minimum size=4pt, inner sep=0pt] (I) at (0.5,5) {};

    \begin{pgfonlayer}{background}
        \fill[blue] (A.center) -- (B.center) -- (D.center) -- (C.center) -- cycle;
        \fill[blue] (C.center) -- (D.center) -- (E.center) -- cycle;
        \fill[blue] (F.center) -- (G.center) -- (H.center) -- cycle;
        \fill[blue] (G.center) -- (H.center) -- (I.center) -- cycle;
    \end{pgfonlayer}
    
    \draw[thick, orange] (A) -- (B);
    \draw[thick, orange] (A) -- (C);
    \draw[thick, orange] (C) -- (D);
    \draw[thick, orange] (B) -- (D);
    \draw[thick, orange] (C) -- (E);
    \draw[thick, orange] (D) -- (E);
    \draw[thick, orange] (E) -- (F);
    \draw[thick, orange] (F) -- (G);
    \draw[thick, orange] (F) -- (H);
    \draw[thick, orange] (G) -- (H);
    \draw[thick, orange] (G) -- (I);
    \draw[thick, orange] (H) -- (I);
    
    \node at (0.5,-0.5) {$\gX$};
\end{scope}

\begin{scope}[xshift=60pt]
    \node[draw=blue, circle, fill=blue, minimum size=4pt, inner sep=0pt] (A) at (0.5,0.5) {};
    \node[draw=blue, circle, fill=blue, minimum size=4pt, inner sep=0pt] (B) at (0.5,1.5) {};
    \node[draw=blue, circle, fill=blue, minimum size=4pt, inner sep=0pt] (C) at (0.5,3.5) {};
    \node[draw=blue, circle, fill=blue, minimum size=4pt, inner sep=0pt] (D) at (0.5,4.5) {};
    
    \draw[thick, orange] (A) -- (B);
    \draw[thick, orange] (C) -- (D);
    
    \node at (0.5,-0.5) {$\gH_{\mathrm{co}\gA_{2, 1}}(\gX)$};
\end{scope}

\draw[very thick, ->] (1.1, 2.5) -- (2.3, 2.5);

\node at (1.7, 2.9) {$\mathrm{co}\gA_{2, 1}$};
\end{tikzpicture}}
\vspace{-15pt}
\caption{$\gH_{\mathrm{co}\gA_{2,1}}$.}
\label{fig:augmented_hasse_graph}
\end{wrapfigure}

\vspace{5pt}
Despite its strong expressive power, implementing \ourmethod in full generality is impractical as the computational complexity and the size of the basis $|\Gamma(\vk, \vk', d, d')|$ grow exponentially with $\vk$ and $\vk'$. In this section, we design scalable \ourmethod (\secondmethod), a more efficient version of \ourmethod that still mitigates many of \homp's expressivity limitations. \rebuttal{Below is an overview and motivation for the \secondmethod architecture. A formal construction and computational runtime analysis can be found in Appendix \ref{apx:smcn}; an implementation guide and empirical runtime evaluation are provided in Appendix \ref{apx:experimental_details}.}

First, we restrict \secondmethod to multi-cellular cochain spaces $\gC^\vk$ with $\sum_{i=0}^\ell k_i \leq 3$. Of these, the spaces whose updates incur the heaviest computational overhead are $\gC^{3 \ve_r}$ and $\gC^{2 \ve_{r_1} + \ve_{r_2}}$. We replace the updates induced by these spaces with new updates inspired by expressive GNNs, providing a middle ground between expressive power and scalability. These GNNs are applied to graph structures, called augmented Hasse graphs, that capture relational information
between cells.  

\vspace{7pt}
\begin{definition}[Augmented Hasse graph]\label{def:augmented_hasse_graph}
The augmented Hasse graph of $\gX$ w.r.t. $(\mathrm{co})\gA_{r_1, r_2}$ \footnote{$(\mathrm{co})\gA_{r_1, r_2}$ represents either an adjacency or a co adjacency neighborhood function.}, is defined by $\gH_{(\mathrm{co})\gA_{r_1, r_2}} = (\gV, \gE)$, where $\gV = \gX_{r_1}$ and $\gE = \{(x,y) \mid y \in (\mathrm{co})\gA_{r_1,r_2}(x) \}$.

\end{definition}See Figure \ref{fig:augmented_hasse_graph} for an illustration of an augmented Hasse graph.

\paragraph{Replacing $\gC^{2\ve_{r_1} + \ve_{r_2}}$ with $\gC^{\ve_{r_1} + \ve_{r_2}}$.} 
Recall that $\gC^{2\ve_{r_1} + \ve_{r_2}}$ can be identified with $\sR^{n_{r_1}^2 \times n_{r_2} \times d}$. Under the action of $S_{n_{r_1}} \times S_{n_{r_2}}$, a tensor $\tH \in \sR^{n_{r_1}^2 \times n_{r_2} \times d}$ can be viewed as a bag of tensors $\{\tH_k \in \sR^{n_{r_1}\times n_{r_1} \times d}\}_{k \in [n_{r_2}]}$, each of which is considered up-to $S_{n_{r_1}}$ permutations. These are the exact objects processed by Subgraph GNNs \citep{bevilacqua2021equivariant, frasca2022understanding, zhang2023complete}, which operate on a set of adjacency matrices corresponding to different subgraphs defined over a fixed set of nodes. Subgraph GNNs are strictly more expressive than MPNNs, demonstrate strong experimental performance, and have quadratic runtime complexity as opposed to $\gO(n_{r_1}^2 \cdot n_{r_2})$ for $\gC^{2 \ve_{r_1} + \ve_{r_2}} \rightarrow \gC^{2 \ve_{r_1} + \ve_{r_2}}$ equivariant updates. 

Following the above discussion, we define the subcomplex layer (SCL) updates. Let $(u,v)$ be tensor diagram edge labeled by ``SCL'', connecting layers $t$ and $t+1$ (in which case $v$ and $u$ are both labeled by $\gC^{\ve_{r_1} + \ve_{r_2}}$). The message $\vm_{u, v} \in \gC^{\ve_{r_1} + \ve_{r_2}}$ is computed by
\begin{equation*}
        \vm_{u,v}(x, y) = \bigotimes_{r = 0, r' = 0}^\ell \mlp^{r,r'} \biggr(\rvh^{(t)}_{x, y}, \rvh^{(t)}_{(\mathrm{co})\gA_{r_1, r}(x), y}, \rvh^{(t)}_{x, (\mathrm{co})\gA_{r_2, r'}(y)}, \rvh_{x, \gB_{r_1, r_2}(x)}, \rvh_{\gB_{r_2, r_1}^\top(y), y}\biggr), 
\end{equation*}
where $\rvh_{\gQ_1, y} := \sum_{x' \in \gQ_1} \rvh_{x', y}$ and $\rvh_{x, \gQ_2} := \sum_{y' \in \gQ_2} \rvh_{x, y'}$. This update can be viewed as applying a subgraph update to bags of augmented Hasse graphs, as shown in Figure \ref{fig:scn}. Additional details are provided in Appendix \ref{apx:smcn}. For a  review of subgraph networks, \rebuttal{see Appendix \ref{apx:expressive_gnn_overview}}. 

\begin{figure}[t]
    \vspace{-20pt}
    \centering
    \include{tikz/subcomplex_layer}
    \vspace{-25pt}
    \caption{Illustration of an SCL update on $\gC^{\ve_0 + \ve_2}$. We construct a bag of copies of $\gX$ with marked $2$-cells. This bag is processed by applying a subgraph GNN to the bag of the corresponding marked $\gA_{0, 1}$ augmented Hasse graphs. \rebuttal{Here, $\gX^x$ represents the complex $\gX$ with a distinct ``marking'' feature added to the cell features of cell $x$. Similarly, $\gH^x_{\gA_{0,1}}$ denotes the Augmented Hasse graph $\gH_{\gA_{0,1}}$ with a marking added to the nodes corresponding to the elements of cell $x$ .}}
    \label{fig:scn}
    \vspace{-10pt}
\end{figure}

\paragraph{Replacing $\gC^{3\ve_{r}}$ with $\gC^{2\ve_{r}}$.}
Since $\gC^{3\ve_{r}}$ can be identified with $\sR^{n_{r}^3 \times d}$, equivariant linear layers of the form $L:\gC^{3\ve_{r}} \to \gC^{3\ve_{r}} $ can be identified with $3$-IGN layers acting on augmented Hasse graphs of the form $\gH_{(co)\gA_{r,r'}}$. The GNN literature offers several candidates for efficient $3$-IGN substitutes. The first option we considered is PPGN \citep{maron2019provably}, which matches $3$-IGN's $3$-WL expressive power with a runtime of $\gO(|\gV|^{2.s})$. Another option is using subgraph networks with node marking, which results in the SCL update rule above with $r_1 = r_2 = r$. These networks have a runtime of $\gO(|\gV| \cdot |\gE|)$ and are strictly more expressive than MPNNs ($>$ $2$-WL), but are less powerful than $3$-IGNs \citep{frasca2022understanding}. We experimented with both versions and found no significant performance improvement using PPGN. Therefore, we continue with the subgraph version, but note that -- since PPGN can implement subgraph networks -- all theoretical results hold for the PPGN case as well.

\subsection{Expressive Power of Scalable Multi-cellular Networks}\label{sec:smcn_expressivity}

\paragraph{Topological and metric properties.}
In Section \ref{subsec:topological_and_metric_properties}, we proved that \homp models cannot distinguish CCs based on diameter, orientability, planarity or homology. We now examine \secondmethod with respect to each of these limitations. First, \secondmethod fully mitigates \homp's inability to compute diameters.

\vspace{7pt}
\begin{proposition}\label{prop:second_method_can_compute_diameter_informal}
Any pair of CCs with different diameters can be distinguished by an \secondmethod model.
\end{proposition}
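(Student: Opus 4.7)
The plan is to exploit the fact that the SCL updates in \secondmethod are powerful enough to execute a BFS-like procedure on augmented Hasse graphs, from which diameters can be read off.

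First, I would fix a rank $r$ and a natural neighborhood function $\gN \in \nat$ such that the $(\gN, r)$-diameters of $\gX$ and $\gX'$ --- i.e., the diameters of the augmented Hasse graphs $\gH_{\gN}(\gX)$ and $\gH_{\gN}(\gX')$ viewed as unweighted graphs on the $r$-cells --- differ. The target \secondmethod model will use a single node of the tensor diagram labeled by $\gC^{2\ve_r}$, fed by an ``SCL'' self-loop for $T$ rounds (with $T$ larger than the number of $r$-cells in either complex), followed by an invariant readout into a scalar output.

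Second, initialize the $\gC^{2\ve_r}$ cochain by $\rvh^{(0)}_{x,y} = \1[x = y]$ --- a choice that is $S_{n_r}$-invariant under the diagonal action, so the initialization is symmetry-compliant. Conceptually, this is a bag (indexed by $y \in \gX_r$) of copies of $\gX$ in which cell $y$ is marked; the SCL update then applies a subgraph-GNN layer to the bag of marked augmented Hasse graphs $\{\gH_{\gN}^{y}\}_{y \in \gX_r}$, where the message in each slice is an MPNN update on $\gH_{\gN}^{y}$.

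Third, I would prove inductively that after $t$ SCL rounds, the feature $\rvh^{(t)}_{x,y}$ encodes $\min(d_{\gH_{\gN}}(x,y), t)$. This is the usual BFS-by-MPNN argument: with $y$ acting as the ``source'' mark and a simple choice of the MLPs (e.g.\ implementing $\rvh^{(t+1)}_{x,y} = \rvh^{(t)}_{x,y}$ if $\rvh^{(t)}_{x,y}$ is already finite, and $1 + \min_{x' \in \gN(x)} \rvh^{(t)}_{x',y}$ otherwise), distances propagate one hop per round. After $T$ rounds, $\rvh^{(T)}_{x,y}$ equals $d_{\gH_{\gN}}(x, y)$ whenever the two are connected (and is a fixed ``infinity'' sentinel otherwise, a case that still lets the final readout recover $\mathrm{diam}(\gH_\gN)$). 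A permutation-invariant readout that takes the maximum over all pairs $(x,y)$ then yields the diameter as a scalar.

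The main obstacle is verifying that the specific form of the SCL update given in Section~\ref{sec:smcn} genuinely implements the MPNN step I need on each slice of the bag; this is a matter of choosing the MLPs $\mlp^{r, r'}$ to ignore the cross-slice and incidence terms (treating them as zero-padded feature channels) and retain only the in-slice adjacency aggregation corresponding to $\gN$. Once that is settled, BFS correctness, $G$-invariance, and separability of the final scalar all follow routinely, giving $\M(\gX) \neq \M(\gX')$ whenever the two diameters differ.
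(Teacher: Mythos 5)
Your proposal is correct and follows essentially the same route as the paper: the paper's proof of Proposition~\ref{prop:secondmethod_can_compute_diameter} reduces the SCL update to a node-marked subgraph GNN (GNN-SSWL+/CS-GNN) acting on the augmented Hasse graph via Lemma~\ref{lemma:secondmethod_can_implement_cs_gnn} and then cites the known result that such models compute all pairwise shortest-path distances $\rvh^{(T)}_{x,y} = d(x,y)$ before taking a max readout, whereas you construct the same marked-bag subgraph model directly and prove the BFS induction inline --- the substance is identical. The only addition in the paper's formal version is the cross-diameter case, handled by letting higher-rank cells act as CS-GNN super-nodes, which your single-rank construction does not cover but which the informal statement does not strictly require.
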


Next, \secondmethod can distinguish between a cylinder and a Möbius strip, implying that it is strictly better than \homp at detecting planarity and orientability. 
\vspace{7pt}
\begin{proposition}\label{prop:secondmethod_can_seperate_mobius_and_cylinder_informal}
    There exists an \secondmethod model that separates the Möbius strip and the cylinder.
\end{proposition}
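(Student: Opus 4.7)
The plan is to separate $\text{M\"ob}_{h,p}$ from $\text{Cyl}_{h,p}$ using a topological invariant that \secondmethod can access but \homp cannot: the structure of the boundary. Concretely, the boundary of $\text{Cyl}_{h,p}$ consists of two disjoint cycles of length $p$, whereas the boundary of $\text{M\"ob}_{h,p}$ is a single cycle of length $2p$. These two subgraphs have different diameters and different component-sizes, so the idea is to detect boundary cells with a few \homp-style layers and then apply an SCL update that effectively runs a node-marked subgraph GNN on the boundary.

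First, I would use a couple of \homp layers to tag cells as boundary or interior. A $1$-cell $e$ lies on the boundary iff $|\gB_{1,2}(e)| = 1$, and a $0$-cell lies on the boundary iff it is incident to at least one boundary $1$-cell. Both predicates are computed from the incidence neighborhoods $\gB_{1,2}$ and $\gB_{0,1}^\top$ in two \homp steps, producing binary indicator features in $\gC^{\ve_1}$ and $\gC^{\ve_0}$.

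Next, I would lift to $\gC^{\ve_0 + \ve_0}$ by an ``equiv'' layer and apply an SCL update. By the subgraph interpretation illustrated in Figure \ref{fig:scn}, this update acts as a node-marked subgraph GNN on the bag of augmented Hasse graphs $\{\gH_{\gA_{0,1}}^v\}_{v \in \gX_0}$, one copy per marked $0$-cell. Feeding the precomputed boundary indicator into the per-rank MLPs of the SCL message rule lets the model mask out non-boundary contributions, so the resulting computation is equivalent to running a node-marked MPNN on the subgraph induced by boundary $0$-cells. By the expressive power of subgraph GNNs \citep{frasca2022understanding, zhang2023complete, bar2024flexible}, after enough such layers each pair $(x, v)$ carries the shortest-path distance in the boundary subgraph from $x$ to the marked vertex $v$ (and hence reachability and component size). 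A permutation-invariant readout that, for instance, counts boundary pairs with finite distance at least $\lceil p/2 \rceil + 1$ yields $0$ on $\text{Cyl}_{h,p}$ (each boundary component has diameter $\lfloor p/2 \rfloor$) and a strictly positive value on $\text{M\"ob}_{h,p}$ (boundary diameter $p$), giving the required separation.

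The main technical obstacle will be making the reduction in the SCL step rigorous: one must show that the full SCL aggregation, including the auxiliary incidence terms $\rvh_{x, \gB_{r_1, r_2}(x)}$ and $\rvh_{\gB_{r_2, r_1}^\top(y), y}$, can be specialized through a suitable choice of per-rank MLPs so as to reproduce a node-marked MPNN on the boundary subgraph, while the non-boundary bookkeeping carried along the remaining multi-cellular channels is provably inert for the final readout. Once this reduction is established, the ability of marked subgraph GNNs to distinguish a single $2p$-cycle from a pair of $p$-cycles (e.g.\ via distance profiles or reachable-set sizes from a marked vertex) is a standard fact, and the proposition follows.
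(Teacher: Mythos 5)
Your proposal is correct and takes essentially the same route as the paper: both identify boundary $1$-cells via their $\gB_{1,2}$-degree and then use an SCL update to run a marked subgraph GNN on the resulting boundary graph, exploiting that the boundary of $\text{Cyl}_{h,p}$ is two disjoint $p$-cycles while that of $\text{M\"ob}_{h,p}$ is a single $2p$-cycle, a standard pair separable by subgraph GNNs but not MPNNs. The only cosmetic difference is that the paper realizes the boundary graph on the $1$-cells, explicitly constructing its adjacency matrix in $\gC^{2\ve_1}$ from $\mathrm{co}\gA_{1,0}$ and the degree features, whereas you mask the $\gA_{0,1}$ Hasse graph down to the boundary $0$-cells.
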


Finally, we offer two results demonstrating \secondmethod's ability to distinguish CCs based on their homology groups. The first result examines the $0$-th homology group.

\vspace{7pt}
\begin{proposition}\label{prop: second method cand compute the zeroth homology informal}
    Any pair of CCs with distinct $0$-th homology groups can be distinguished by \secondmethod.
\end{proposition}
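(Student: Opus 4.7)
The plan is to reduce the statement to counting connected components and then to show that \secondmethod can compute this count through its SCL-based subgraph-GNN updates on $\gC^{2\ve_0}$.

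First, I would invoke the standard fact that the $0$-th homology group of (the realization of) a CC is isomorphic to $\sZ^{c(\gX)}$, where $c(\gX)$ denotes the number of connected components. Hence two CCs with distinct $H_0$ groups must have different values of $c$, and it suffices to construct an \secondmethod model whose output encodes $c(\gX)$ as an invariant scalar.

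Next, I would use an \secondmethod tensor diagram containing a node labeled by $\gC^{2\ve_0}$ fed by an incoming SCL edge. With $r_1 = r_2 = 0$, the SCL update is precisely a subgraph GNN with node marking acting on the augmented Hasse graph $\gH_{\gA_{0,1}}(\gX)$, i.e., on the $1$-skeleton of $\gX$. A sufficiently deep stack of such updates can simulate BFS from the marked source node, so after a final pointwise MLP we may extract, for each pair $(x, y) \in \gX_0 \times \gX_0$, a binary feature $f_{x, y} \in \{0, 1\}$ indicating whether $x$ and $y$ lie in the same connected component of $\gX$.

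Finally, I would aggregate these pairwise features to count components. Since $|C(x)| := \sum_{y \in \gX_0} f_{x, y}$ is the size of the component containing $x$, the elementary identity
\begin{equation*}
c(\gX) = \sum_{x \in \gX_0} \frac{1}{|C(x)|}
\end{equation*}
yields the desired count via a pointwise MLP (to invert $|C(x)|$) followed by permutation-invariant sum pooling over $\gX_0$, both of which are standard in the \secondmethod output stage. If two CCs have distinct $c$, the resulting scalar outputs differ.

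The main obstacle is the second step: rigorously showing that the SCL update on $\gC^{2\ve_0}$ is expressive enough to detect co-membership in a connected component. I would handle this exactly via the machinery already used in Proposition \ref{prop:second_method_can_compute_diameter_informal}: subgraph GNNs with node marking can recover arbitrary finite shortest-path distances on the $1$-skeleton, and connectivity is merely the finite-versus-infinite dichotomy of that distance. Hence $f_{x, y}$ can indeed be extracted, completing the construction.
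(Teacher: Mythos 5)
Your proposal is correct and follows essentially the same route as the paper's proof: reduce $H_0$ to counting connected components, use the SCL/subgraph-GNN machinery on the $\gA_{0,1}$ augmented Hasse graph to recover shortest-path distances (hence a same-component indicator from the finite/infinite dichotomy), and then count components via the identity $c(\gX)=\sum_x 1/|C(x)|$ followed by MLP inversion and sum pooling. The only cosmetic difference is that the paper's readout sums over pairs and obtains $|\gV|\cdot|C(\gG)|$ (adding a trivial case split on $|\gV|$), whereas you sum over single cells and obtain $c(\gX)$ directly.
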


The second result generalizes to homology groups of any order in the case of two-dimensional surfaces embedded in $\sR^3$. 
\vspace{7pt}
\begin{proposition}\label{prop:second_method_can_compute_the_manifold_homology_informal}
    Let $\gX, \gX'$ be a pair of CCs whose underlying topology corresponds to a $2$-dimensional surface (with or without boundary) embeddable in $\mathbb{R}^3$. If $\gX$ and $\gX'$ differ in any homology group, there exists an \secondmethod model that distinguishes them.
\end{proposition}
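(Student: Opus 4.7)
The plan is to exploit a combinatorial characterization of the homology of $2$-surfaces embeddable in $\mathbb{R}^3$. Since a closed non-orientable surface (e.g.\ the Klein bottle or $\mathbb{RP}^2$) does not embed in $\mathbb{R}^3$, every non-orientable component of such an $\gX$ must have non-empty boundary. Surfaces with boundary deformation retract onto graphs and hence have free $H_1$, and closed orientable surfaces have free homology as well. Consequently, the integral homology of $\gX$ is torsion-free and completely determined by its Betti numbers $(b_0,b_1,b_2)$. Using $\chi = b_0 - b_1 + b_2$, the homology is determined in turn by the triple $(b_0,\chi,b_2)$. Moreover, being closed and embeddable in $\mathbb{R}^3$ forces orientability (the non-embeddability of $\mathbb{RP}^2$ obstructs a closed non-orientable summand), so $b_2$ equals the number of \emph{closed} (equivalently, boundary-less) components.

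With this reduction, it suffices to show that \secondmethod can distinguish CCs that differ in any of $b_0$, $\chi$, or $b_2$. For $b_0$, I would invoke Proposition \ref{prop: second method cand compute the zeroth homology informal} directly. For $\chi$, it suffices to count cells of each rank via a permutation-invariant global sum of a constant cochain feature in $\gC^{\ve_r}$ and output the alternating sum $\sum_{r=0}^{2}(-1)^r n_r$; any difference in $\chi$ is then immediately reflected in the output. For $b_2$, the strategy has three steps: (i) locally flag each $1$-cell $e$ as a boundary edge when $|\gB_{1,2}(e)| = 1$, which amounts to an aggregation over $\gB_{1,2}$ followed by a pointwise MLP test, operations already available in \homp; (ii) propagate this flag across each connected component using the same mechanism that underlies the proof of Proposition \ref{prop: second method cand compute the zeroth homology informal}, so that after enough layers every cell of a non-closed component carries flag $1$ while cells of closed components retain flag $0$; and (iii) aggregate globally to count those components all of whose cells carry flag $0$.

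The main obstacle is step (ii) of the $b_2$ computation: verifying that \secondmethod can simultaneously initialize a per-cell boundary marker from local incidence counts and then perform a component-wise Boolean ``OR'' aggregation of that marker. I expect this to follow from a direct adaptation of the propagation mechanism used in Proposition \ref{prop: second method cand compute the zeroth homology informal}, the only new ingredient being a preceding SCL/MLP update on $\gC^{\ve_1 + \ve_2}$ that converts the raw $\gB_{1,2}$ incidence information into the boundary indicator before propagation begins. Once this is established, counting closed components via a permutation-invariant global aggregation is immediate, and combining the three distinguishable invariants $(b_0,\chi,b_2)$ separates any two embeddable $2$-surface CCs with differing homology, proving the proposition.
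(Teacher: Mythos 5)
Your proposal is correct and follows essentially the same route as the paper: both reduce the (torsion-free) homology of an embeddable $2$-surface to the triple (number of components, Euler characteristic, number of boundary-less components), compute $\chi$ from cell counts, count components via Proposition \ref{prop: second method cand compute the zeroth homology informal}, and count closed components by flagging $1$-cells with $|\gB_{1,2}|=1$, propagating the flag across components, and restricting the component-counting aggregation to unflagged cells. The paper's Appendix proof carries out exactly these steps, so no further comparison is needed.
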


A full exploration of \secondmethod's ability to capture homology groups of any order, orientability and planarity is left for future work. Collectively, Propositions \ref{prop:second_method_can_compute_diameter_informal} -- \ref{prop:second_method_can_compute_the_manifold_homology_informal} suggest that \secondmethod is strictly better than \homp at leveraging topological properties of CCs. Rigorous formulations and proofs of these propositions appear in Appendix \ref{apx:expressivity_of_secondmethod}. The following is a proof sketch for Proposition \ref{prop:secondmethod_can_seperate_mobius_and_cylinder_informal}.
\begin{wrapfigure}[12]{r}{0.35\textwidth}
    \resizebox{0.35\textwidth}{!}{
    \begin{tikzpicture} 
    \begin{scope}
      \begin{axis}[
          hide axis,
          view={20}{40}, %
          colormap/violet,
          xmin = -2.92,
          xmax = 2.8,
          ymin = -2.8,
          ymax = 2.8
        ]
        
        \addplot3[
            surf,
            colormap={bw}{gray(0cm)=(0);gray(1cm)=(0);},
            fill=pink,
            point meta=0,
            z buffer=sort,
            line width=0.2mm,
            clip=false,
            domain=0:2*pi, %
            domain y=-1:1, %
            samples=7, %
            samples y=4, %
        ]
        (
        {(3 + 0.5*y*cos(deg(x)/2))*cos(deg(x))}, 
        {(3 + 0.5*y*cos(deg(x)/2))*sin(deg(x))}, 
        {0.5*y*sin(deg(x)/2)}
        );
      \end{axis}
    \end{scope}
    
    \begin{scope}[yshift=180pt]
      \begin{axis}[
          hide axis,
          view={0}{55},
          xmin=-1.15, 
          xmax=1.15, 
          ymin=-1.15, 
          ymax=1.15, 
        ]
        
        \addplot3[
            surf,
            colormap={bw}{gray(0cm)=(0);gray(1cm)=(0);},
            fill=pink,
            point meta=0,
            line width=0.2mm,
            domain=0:360,
            y domain=0:1, 
            samples=7, %
            samples y=4, %
        ]
        ({cos(x)}, {sin(x)}, y);
      \end{axis}
    \end{scope}
    
    \begin{scope}[xshift=300pt]
      \begin{axis}[
          hide axis,
          view={20}{40}, %
          xmin = -2.92,
          xmax = 2.8,
          ymin = -2.8,
          ymax = 2.8
        ]
        
        \addplot3[
            black, %
            line width=1mm, %
            domain=0:3.7142*pi,
            samples=14, %
            samples y=4, %
        ]
        (
        {(3 + 0.5*cos(deg(x)/2))*cos(deg(x))}, 
        {(3 + 0.5*cos(deg(x)/2))*sin(deg(x))}, 
        {0.5*sin(deg(x)/2)}
        );
        
      \end{axis}
    \end{scope}
    
    \begin{scope}[xshift=300pt, yshift=180pt]
      \begin{axis}[
          hide axis,
          view={0}{55},
          xmin=-1.15, 
          xmax=1.15, 
          ymin=-1.15, 
          ymax=1.15, 
        ]
        
        \addplot3[
            black, %
            line width=1mm, %
            domain=0:360,
            samples=7, %
        ]
        ({cos(x)}, {sin(x)}, {0});
        
        \addplot3[
            black, %
            line width=1mm, %
            domain=0:360,
            samples=7, %
        ]
        ({cos(x)}, {sin(x)}, {1});
        
      \end{axis}
    \end{scope}

    \draw[thick, ->, line width=1mm] (7.3,9.2) -- (10.5,9.2) node[midway, above, font=\huge] {$\partial$};
    \draw[thick, ->, line width=1mm] (7.3,2.9) -- (10.5,2.9) node[midway, above, font=\huge] {$\partial$};

    \end{tikzpicture}
}
    \vspace{-20pt}
    \caption{Boundary $1$-cells.}\label{fig:boundary}
\end{wrapfigure}

\vspace{-10pt}

\textit{Proof sketch of Proposition \ref{prop:secondmethod_can_seperate_mobius_and_cylinder_informal}.} The key to distinguishing $\text{Cyl}_{h, p}$ and $\text{Möb}_{h, p}$ is their boundary $1$-cells. In our case, these are the $1$-cells that are incident to exactly one $2$-cell, i.e. their $\gB_{1,2}$-degree is $1$, so they can easily be detected by an \secondmethod model. As illustrated in Figure \ref{fig:boundary}, the boundary of $\text{Cyl}_{h, p}$ forms two cycles of length $p$ while the boundary of $\text{Möb}_{h, p}$ forms a single cycle of length $2p$. This is a standard example of a pair of graphs that are indistinguishable by MPNNs but are distinguishable by expressive graph models such as subgraph networks. We can use an SCL update to simulate a subgraph network on the boundaries, separating the two CCs. $\qed$

\paragraph{Lifting and Pooling.}
\label{subsec:lifting_and_pooling_secondmethod}
In Section \ref{subsec:lifting_and_pooling}, we discuss \homp's inability to expressively leverage the sparse higher-order cells generated by common lifting and pooling methods. The next proposition, proved in Appendix \ref{apx:lifting_and_pooling_secondmethod}, suggests that \secondmethod is able to leverage this information to a greater extent. 

\vspace{7pt}
\begin{proposition}\label{prop:second_method_pooling_lifting_informal}
There exist CCs, generated from graphs by standard lifting and pooling methods, that \homp cannot distinguish but \secondmethod can. The \secondmethod model can be constructed to have runtime $\gO(m_{\mathrm{deg}} \cdot n_0 \cdot n_2)$, where $m_{\mathrm{deg}}$ is the maximal degree w.r.t. any natural neighborhood function.
\end{proposition}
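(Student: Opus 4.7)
The plan is to combine the concrete indistinguishable constructions underlying Proposition~\ref{prop:lifting_and_pooling} with the expressive power of the SCL update rule introduced in Section~\ref{sec:smcn}. Recall that the hard pairs generated by standard lifting and pooling methods (illustrated in Figures~\ref{fig:triangilar_lifting_ccs} and~\ref{fig:mog_pooling} of the appendix) consist of CCs whose $0$/$1$-skeleta are indistinguishable by MPNNs, but which differ once one allows information about the small collection of higher-order cells to genuinely interact with the $0$-skeleton beyond simple incidence aggregation. The strategy is therefore: \emph{(i)} apply an SCL update on $\gC^{\ve_0+\ve_2}$ that equips the $0$-cells with markings coming from each $2$-cell, \emph{(ii)} show that running the resulting subgraph-GNN-like procedure on the augmented Hasse graph $\gH_{\gA_{0,1}}$ suffices to separate the pair, and \emph{(iii)} verify the runtime bound.

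First, I would construct a minimal \secondmethod tensor diagram containing a single node labelled $\gC^{\ve_0+\ve_2}$ with an incoming ``SCL'' edge and appropriate ``equiv''/neighborhood-function edges feeding it from $\gC^{\ve_0}$ and $\gC^{\ve_2}$. Initialising this cochain from the incidence tensor $\gB_{0,2}$ effectively produces, for each $2$-cell $x\in\gX_2$, a marked copy $\gH^{x}_{\gA_{0,1}}$ of the augmented Hasse graph of $\gA_{0,1}$ in which the nodes lying in $x$ carry a distinguished feature. By the construction of the SCL update, one layer of \secondmethod on this node simulates one layer of a subgraph GNN with node marking on the bag $\{\gH^{x}_{\gA_{0,1}}\}_{x\in\gX_2}$. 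Since subgraph GNNs with node marking strictly exceed $1$-WL, they distinguish the substructure patterns that HOMP collapses in the Proposition~\ref{prop:lifting_and_pooling} examples (e.g.\ a $6$-cycle vs.\ two triangles marked identically by a triangular lift, or the coarsened clusters produced by Mapper/MOG pooling). This step is essentially analogous to the Möbius-vs.-cylinder argument used to prove Proposition~\ref{prop:secondmethod_can_seperate_mobius_and_cylinder_informal}, and I expect it to reduce to a known separation result for subgraph GNNs applied to the specific marked graphs arising from the construction.

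For the runtime claim, each layer of a subgraph GNN on a graph with $n_0$ nodes and maximum degree $m_{\mathrm{deg}}$ costs $\gO(n_0 \cdot m_{\mathrm{deg}})$; applied in parallel across the bag of $n_2$ marked copies this gives $\gO(m_{\mathrm{deg}}\cdot n_0 \cdot n_2)$ per layer. The ``equiv'' initialisation and any lower-order aggregations from $\gC^{\ve_0}$, $\gC^{\ve_1}$ and $\gC^{\ve_2}$ are dominated by this cost (they contribute at most $\gO(n_0+n_1+n_2+m_{\mathrm{deg}}\cdot n_0)$). Stacking a constant number of SCL layers preserves the asymptotic bound, yielding the stated complexity.

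The main obstacle will be step \emph{(ii)}: precisely identifying which subgraph-GNN distinguishing feature (marked-node degree profile, boundary cycle length, connectivity within a marked region, etc.) separates the specific pair produced by each lifting/pooling method from Appendix~\ref{apx:lifting_and_pooling}. I would handle this case by case: for triangular/clique lifts the two $2$-cells mark disjoint triangles vs.\ overlapping ones, which is detected already by one node-marked MPNN iteration on $\gH_{\gA_{0,1}}$; for Mapper/MOG pooling the $2$-cells mark clusters whose induced subgraphs differ in MPNN-visible invariants once the marking breaks the symmetry that HOMP's aggregation respects. The remainder of the argument is then a direct application of Theorem~\ref{thm:cc_covering}'s failure to apply --- the covering CC that made HOMP blind no longer lifts the marked structure isomorphically --- together with the standard separation power of subgraph GNNs with node marking.
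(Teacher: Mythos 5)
Your architectural choice is exactly the paper's: a single SCL update on $\gC^{\ve_0+\ve_2}$, initialized from $\gB_{0,2}$, viewed as a subgraph GNN with node marking run over the bag $\{\gH^x_{\gA_{0,1}}\}_{x\in\gX_2}$, and the runtime accounting ($\gO(m_{\mathrm{deg}}\cdot n_0)$ per marked copy, times $n_2$ copies, dominated terms absorbed) matches the appendix argument, which isolates the $r=r'=1$, $r_1=0$, $r_2=2$ terms of the SCL update. However, the step you yourself flag as ``the main obstacle'' --- identifying \emph{which} invariant the marked subgraph GNN extracts to separate the pairs --- is precisely where the paper's proof does its real work, and your proposed case analysis does not close it. The paper's hard pairs are built so that they differ in the cross-diameter $\mathrm{diam}^2_{\gA_{0,1}}$ (for the triangular lift of the star-graph pair it is finite versus infinite, since one CC is disconnected; for the MOG-pooled pair it is $3$ versus $2$), and the positive half of the argument is a single reusable lemma: SCL updates implement a CS-GNN on $\gH_{\gA_{0,1}}$ with the $2$-cells as super-nodes, and such networks can compute the shortest-path distance $d(S,v)$ from each marked set, hence the cross-diameter (Proposition~\ref{prop:second_method_can_compute_diameter_informal} and Lemma~\ref{lemma:secondmethod_can_implement_cs_gnn}).

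Your concrete guesses would not survive contact with the actual examples. In the triangular-lift pair the $2$-cells are disjoint triangles in \emph{both} CCs and every marked triangle has an isomorphic local neighborhood of any fixed radius smaller than the girth-scale parameter $n$ --- that is exactly what the common cover guarantees --- so ``disjoint versus overlapping triangles'' and ``detected already by one node-marked MPNN iteration'' are both false for these instances; the distinguishing signal is global (how far $0$-cells can be from the nearest $2$-cell), and extracting it requires propagating distances across $\Theta(n)$ layers, not one. Likewise, appealing to ``Theorem~\ref{thm:cc_covering}'s failure to apply'' gives you nothing positive: the criterion is sufficient for indistinguishability, so its failure does not certify separation. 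To repair the proof you should replace step \emph{(ii)} with the cross-diameter computation: show the SCL layer computes $\min_{x'\subseteq y} d_{\gA_{0,1}}(x,x')$ for every $0$-cell $x$ and $2$-cell $y$, take a max-readout, and observe that the two members of each pair produce different values.
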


\section{Experiments}\label{sec:experiments}
The lack of CC benchmarks has been recognized as a challenge in TDL \citep{papamarkou2024position}. To address this, we introduce three novel CC benchmarks designed to assess the ability of TDL models to capture topological/metric properties, and evaluate both \secondmethod and other \homp architectures on them. In addition, we adopt the setup of \citet{bodnar2021weisfeiler}, applying cyclic lifting on real-world graph benchmarks. For an in-depth discussion of experimental details, see Appendix \ref{apx:experimental_details}. \rebuttal{A comparison of the expressive power of \secondmethod and all baselines is available in Appendix \ref{apx:expressivity_compared_to_baselines}.}

\paragraph{Torus dataset.}
The torus dataset consists of pairs of CCs, each comprising one or more disjoint tori (see Definition \ref{def:tourus_as_cc}). These pairs are chosen to be \homp-indistinguishable, despite differing in basic metric/topological properties: they either have distinct homology groups, or they differ in the diameters of some of the components. Models are evaluated by counting the number of pairs they can to separate in a statistically significant way, following the protocols outlined in \citep{wang2024empirical}. In our experiments, \homp was unable to distinguish \emph{any} of the pairs, while \secondmethod was able to distinguish \emph{all} pairs. 

\paragraph{Predicting topological and metric properties.}
\looseness=-1
We construct two additional benchmarks in which models are tasked with predicting topological and metric properties of CCs lifted from \textsc{zinc} \citep{sterling2015zinc} molecular graphs. The predicted properties are the $(0, 1, 2)$-cross-diameter (Equation \ref{eq:cross_diameter}), and the second-order Betti number (rank of the second homology group). In Appendix \ref{apx:limitations} we show that \homp is incapable of fully capturing either property. Table \ref{tab:property_prediction_with_mse} presents both the MSE\footnote{The targets are normalized to have a standard deviation of 1.} and the accuracy of predicting the target values (18 possible values for cross-diameter and 6 for Betti numbers) across three TDL models: \secondmethod, CIN, and a custom \homp architecture tailored for this prediction task.

The benchmarks detailed above empirically verify \secondmethod's superior ability to capture topological/metric properties of CCs. This is demonstrated for both synthetically generated data as well lifted molecular graphs, complementing theoretical results from Sections \ref{sec:expressive_power} and \ref{sec:smcn_expressivity}, demonstrating that the expressivity gains of \secondmethod lead to improved learning of topological/metric invariants.

\begin{table}[t]
    \vspace{-20pt}
    \centering
    \caption{\colorbox{blue!30}{\secondmethod} outperforms \colorbox{gray!30}{MPNNs}, \colorbox{green!30}{HOMP} and \colorbox{pink!40}{expressive GNNs} on graph regression and classification tasks. \secondmethod results are reported over $5$ runs with seed $1$-$5$.}\label{tab:graph_benchmarks}
    \vspace{10pt}
    {
        \setlength{\tabcolsep}{4pt}
        \small
            \begin{tabular}{|ll|ccc|}
            \toprule
            \textbf{Model} & \textbf{Reference} &  \textbf{ZINC} & \textbf{MOLHIV} & \textbf{MOLESOL} \\
            \cmidrule{3-5} & & \textbf{MAE} ($\downarrow$) & \textbf{ROC-AUC} ($\uparrow$) & \textbf{RMSE} ($\downarrow$) \\
            
            \midrule
            \belowrulesepcolor{gray!30} 
            
            \rowcolor{gray!30} GCN & \citet{Kipf2016SemiSupervisedCW} & $0.321 \pm 0.009$ & $76.06 \pm 0.97$ & $1.114 \pm 0.036$ \\
            
            \rowcolor{gray!30} GIN & \citet{xu2018powerful} & $0.163 \pm 0.004$ & $75.58 \pm 1.40$ & $1.173 \pm 0.057$ \\
            
            \aboverulesepcolor{gray!30} 
            \midrule
            \belowrulesepcolor{green!30} 
            
            \rowcolor{green!30} CIN &  \citet{bodnar2021weisfeiler}& $0.079 \pm 0.006$ & $80.94 \pm 0.57$ & $1.288 \pm 0.026$ \\
            \rowcolor{green!30} CIN++ & \citet{giusti2023cin++} & $0.077 \pm 0.004$  & $80.63 \pm 0.94$ & $-$ \\
            \rowcolor{green!30} \rebuttal{CIN + CycleNet} & \rebuttal{\citet{yan2024cycle}} & \rebuttal{$0.068$} & \rebuttal{$-$} & \rebuttal{$-$} \\
            \rowcolor{green!30} Cellular Transformer & \citet{Ballester2024AttendingTT} & $0.080$ & $79.46$ & $-$ \\

            \aboverulesepcolor{green!30} 
            \midrule
            \belowrulesepcolor{pink!40} 
            
            \rowcolor{pink!40} PPGN & \citet{maron2019provably} & $0.079 \pm 0.005$ & $-$ & $-$ \\
            \rowcolor{pink!40} PPGN++ ($6$) & \citet{puny2023equivariant} & $0.071 \pm 0.001$ & $-$ & $-$ \\
            \rowcolor{pink!40} DS-GNN & \citet{bevilacqua2023efficient} & $0.087 \pm 0.003$ & $76.54 \pm 1.37$ & $0.847 \pm 0.015$ \\
            \rowcolor{pink!40} DSS-GNN & \citet{bevilacqua2021equivariant} & $0.102 \pm 0.003$ & $76.78 \pm 1.66$ & $-$ \\
            \rowcolor{pink!40} SUN & \citet{frasca2022understanding} & $0.083 \pm 0.003$ & $80.03 \pm 0.55$ & $-$ \\
            \rowcolor{pink!40} GNN-SSWL & \citet{zhang2023complete} & $0.082 \pm 0.003$ & $-$ & $-$ \\
            \rowcolor{pink!40} GNN-SSWL+ & \cite{zhang2023complete} & $0.070 \pm 0.005$ & $79.58\pm 0.35$ & $0.837 \pm 0.019$\\
            \rowcolor{pink!40} Subgraphormer &  \citet{bar-shalom2023subgraphormer} & $0.067 \pm 0.007$ & $80.38 \pm 1.92$ & $0.832 \pm 0.043$ \\
            \rowcolor{pink!40} Subgraphormer + PE & \citet{bar-shalom2023subgraphormer} & $0.063 \pm 0.001$ & $79.48 \pm 1.28$ & $0.826 \pm 0.010$ \\
            
            \aboverulesepcolor{pink!40} 
            \midrule
            \belowrulesepcolor{blue!30} 

            \rowcolor{blue!30} \secondmethod  (ours) & This paper & 
            $\mathbf{0.060 \pm 0.004}$ & $\mathbf{81.16 \pm 0.90}$ &  $\mathbf{0.809 \pm 0.037}$ \\
            
            \aboverulesepcolor{blue!30} 
            \bottomrule
        \end{tabular}
        }
    \vspace{-5pt}
\end{table}

\paragraph{Real-world graph benchmarks.} We evaluate \secondmethod on \textsc{zinc-12k} \citep{sterling2015zinc}, \textsc{molhiv}, and \textsc{molesol} \citep{hu2020open}. We compare \secondmethod to several \homp baselines as well as a range of expressive graph architectures. As seen in Table \ref{tab:graph_benchmarks}, \secondmethod outperforms both \homp architectures and expressive graph methods across all three benchmarks, underscoring the value of expressively leveraging higher-order topological information on graphs. %

\begin{wraptable}[8]{r}{0.7\textwidth}
    \vspace{-20pt}
    \caption{Accuracy and normalized MSE scores of predicting the cross-diameter and the second Betti number of lifted \textsc{zinc} graphs.}\label{tab:property_prediction_with_mse}
    \vspace{10pt}
    \setlength{\tabcolsep}{5pt}
    \small
    \centering
    \resizebox{0.7\textwidth}{!}{
        \begin{tabular}{lcc}
            \toprule
            \textbf{Model} & \textbf{Cross-diameter} & \textbf{2nd Betti number} \\
            \cmidrule(r){2-3} &
            \textbf{Accuracy ($\uparrow$) / MSE ($\downarrow$)} &
            \textbf{Accuracy ($\uparrow$) / MSE ($\downarrow$)} \\
            \midrule
            CIN &  $34.78 \pm 3.00 \% / 0.3421 \pm 0.0691$ & $42.15 \pm 25.22 \% / 0.3405 \pm 0.3799$ \\
            Custom HOMP & $67.87 \pm 12.26 \% / 0.0684 \pm 0.0323$ & $81.76 \pm 10.06 \% / 0.0391 \pm 0.0208$ \\
            \secondmethod & $\mathbf{92.76 \pm 0.53 \% / 0.011 \pm 0.0008}$ & $\mathbf{99.61 \pm 0.12 \% }/ \mathbf{0.0024 \pm 0.0005}$ \\
            \bottomrule
        \end{tabular}
    }
\end{wraptable}

\vspace{-5pt}
\section{Conclusion}
In the first part of the paper, we analyzed the expressivity limitations of \homp from a topological perspective, proving its inability to capture the diameter, orientability, planarity, and homology of input CCs. Additionally, we showed that there exist CCs generated through common graph lifting methods which are \homp-indistinguishable despite differing in easy-to-compute topological invariants. In the second part of the paper, we introduced \ourmethod, inspired by $k$-IGNs, and its more scalable version, \secondmethod. We proved that, analogously to IGNs, \ourmethod can reach full expressivity for CCs. We additionally showed that \secondmethod tractably addresses many of  \homp's expressivity limitations. Finally, we presented three novel benchmarks designed to evaluate TDL architectures' ability to capture topological/metric information. We evaluated \secondmethod on both benchmarks as well as real-world graph benchmarks. \secondmethod outperformed \homp architectures on expressivity benchmarks, empirically supporting our theoretical findings. On the real-world graph benchmarks, \secondmethod outperformed both \homp architectures and expressive graph architectures, demonstrating the value of expressively leveraging higher-order topological information. 

\paragraph{Limitations and future work.}
The components that make \secondmethod more expressive have runtime that scales super-linearly in the number of cells, making \secondmethod intractable for larger CCs. Future research may aim to design more scalable alternatives to \secondmethod. Additionally, although we have shown that \secondmethod is strictly better than \homp in distinguishing CCs based on orientability and homology, it remains unclear if it is able to fully capture these properties. Future work can explore \secondmethod limitations in expressing topological and metric invariants. Finally, future research can aim to develop more complex benchmarks that include a broader range of topological properties.

\subsection*{Acknowledgments}
The authors would like to thank Beatrice Bevilacqua for insightful discussions. YG is supported by the the UKRI Engineering and Physical Sciences Research Council (EPSRC) CDT in Autonomous and Intelligent Machines and Systems (grant reference EP/S024050/1). FF is funded by the Andrew and Erna Finci Viterbi Post-Doctoral Fellowship. FF partly performed this work while visiting the Machine Learning Research Unit at TU Wien led by Prof. Thomas Gartner. MB is supported by EPSRC Turing AI World-Leading Research Fellowship No. EP/X040062/1 and EPSRC AI Hub on Mathematical Foundations of Intelligence: An ``Erlangen Programme'' for AI No. EP/Y028872/1. HM is a he Robert J. Shillman Fellow, and is supported by the Israel Science Foundation through a personal grant (ISF 264/23) and an equipment grant (ISF 532/23).
\bibliography{references}
\bibliographystyle{iclr2025_conference}

\newpage
\appendix

The Appendix is organized as follows:
\begin{itemize}
    \item \rebuttal{In Appendix \ref{apx:expressive_gnn_overview} we give an overview of expressive GNN architectures relevant to this paper.}
    \item \rebuttal{In Appendix \ref{apx:neighb_func} we give an illustrative example of CC neighbourhood funcitons.}
    \item In Appendix \ref{apx:limitations} we prove results from Section \ref{sec:expressive_power} regarding the expressivity of HOMP architectures. In \ref{apx:topological_criterion} we prove Theorem \ref{thm:cc_covering} (topological HOMP-indistinguishability criterion), in \ref{apx:topological_and_metric_limitations} we prove Theorem \ref{thm:topological_blindspots} (topological blindspots) and in \ref{apx:lifting_and_pooling} we address lifting and pooling.
    
    \item In Appendix \ref{apx:mcn} we give an in-depth description of the \ourmethod framework, introduced in Section \ref{sec:mcn}
    
    \item In Appendix \ref{apx:smcn} we give an in-depth description of the \secondmethod framework, introduced in Section \ref{sec:smcn}.

    \item In Appendix \ref{apx:expressivity_of_ourmethod} we give a formal definition of CC isomorphism and prove Proposition \ref{prop:ourmethod_is_fully_expressive} (\ourmethod can distinguish any pair of non-isomorphic CCs).

    \item In Appendix \ref{apx:expressivity_of_secondmethod} we analyze the expressive power of \secondmethod, proving all results from Section \ref{sec:smcn_expressivity}.

    \item In Appendix \ref{apx:experimental_details} we give further details regarding the results presented in Section \ref{sec:experiments} as well as the experimental setup.
\end{itemize}

\section{Background}\label{apx:background}
\subsection{\rebuttal{Overview of Expressive GNN Architectures Used in the Paper}}\label{apx:expressive_gnn_overview}

\rebuttal{The expressivity of GNNs is typically evaluated in terms of their separation power, i.e.\ their ability to assign distinct values to non-isomorphic graphs. Seminal works by \citet{morris2019weisfeiler} and \citet{xu2018powerful} demonstrate that the expressive power of MPNNs is equivalent to that of the 1-WL test \citep{weisfeiler1968reduction}. These findings inspired the development of more expressive GNNs, with expressive power surpassing that of the 1-WL test, albeit often requiring greater computational resources. In this paper we focus on two such architecturres: invariant graph networks (IGNs) \citep{maron2018invariant, maron2019provably}  and subgraph neural networks \cite{bevilacqua2021equivariant, frasca2022understanding, zhang2023complete}.

\paragraph{Invariant graph networks.}  
\citet{maron2018invariant} proposed a principled approach to design expressive GNN architectures by leveraging the inherent symmetries of graphs.  

More specifically, given a graph $G$ with adjacency matrix $\mA \in \sR^{n \times n}$ and node feature matrix $X\in \sR^{n \times d}$, an IGN first encodes this graph as a tensor $\tT \in \sR^{n^2 \times (d+1)}$ where $T_{:,:,1}$ holds the adjacency matrix $A$ and the last $d$ channels hold the node features on their diagonal, i.e. $T_{i,i,j} = X_{i,j}$ and $T_{i_1,i_2,j} = 0$ for $i_1 \neq i_2$. The symmetry group $S_n$ acts naturally on $\sR^{n^2 \times (d+1)}$ by:
\begin{equation}
    \sigma \cdot \tT_{i,j,k} = \tT_{\sigma^{-1}(i),\sigma^{-1}(j),k} \quad \sigma \in S_n.
\end{equation}

Notice that for any graph tensor $\tT$ and permutation $\sigma \in S_n$, the tensors $\tT$ and $\sigma \cdot \tT$ represent the same graph. This action can be easily generalied to the tensor space $\sR^{n^k \times c}$ by:
\begin{equation}
    \sigma \tT_{i_1, \dots ,i_km j} = \tT_{\sigma^{-1}(i_1), \dots ,\sigma^{-1}(i_k), j}.
\end{equation}

For any integers $k, k', c ,c'$ \citet{maron2018invariant} finds a basis to the space of all linear maps $L: \sR^{n^k \times c} \to \sR^{n^{k'} \times c'}$ which satisfy 
\begin{equation}
   L(\sigma \cdot \tT) = \sigma \cdot L(\tT). 
\end{equation}
These are called equivariant linear maps. a $k$-IGN stacks lyaers is of the form 
\begin{equation}
    U(T) = \beta(\sum_{\gamma \in \Gamma} w_\gamma L_\gamma(\tT))
\end{equation}
where $\{ L_\gamma \}_{\gamma \in \Gamma}$ is a basis of the space of equivariant layers from $\sR^{n^{k_1} \times c_1}$ to $\sR^{n^{k_2} \times c_2}$ for some $k_11, k_2 \leq k$ and $c_1, c_2 \in \sN$, $\{ w_\gamma \}_{\gamma \in \Gamma}$ are learnable parameters and $\beta$ is a non-linear activation function. The k-IGN architecture was proven in \citet{maron2019provably} to be as expressive as the $k$-WL test, extending the capabilities of MPNNs, which were shown to possess expressivity equivalent to the 1-WL test. Despite this $l$-IGNs have a runtime complexity of $\mathcal{O}(n^k)$ making them inpractical to use. To address this, other expressive GNNs have been proposed, offering a balance between the computational complexity and expressive power of 3-IGNs and MPNNs. One such family of architectures is subgraph neural networks.

\paragraph{Subgraph neural networks.} 
Subgraph neural networks, \citep{you2021identity, hang2021nested, cotta2021reconstruction, bevilacqua2021equivariant}, rely on a  predefined policy that transforms an input graph into a set of graphs, with each graph in the set representing an augmented version of the original. Some policies include node deletion, where each graph in the set is created by removing a single node from the original graph; $k$-ego policies, where each graph is generated by extracting the 
$k$-neighborhood of a specific node; and node marking, where each graph is obtained by assigning a unique node feature to a single node in the original graph. Subgraph GNNs then process sets of graphs by independently applying MPNN updates to each graph in the set, while also incorporating cross-graph updates to exchange information between the graphs. \cite{bevilacqua2021equivariant} has shown that subgraph GNNs are strictly more expressive then MPNNs, while \cite{frasca2022understanding} has shown they are strictly less expressive then 3-IGNS. With a runtime complexity of $\mathcal{O}(d\cdot n^2)$ where $d$ is the maximum degree of the input graph, subgraph GNNs offer a compelling trade-off: they are more expressive than MPNNs while remaining more scalable than 3-IGNs.  Subgraph GNNs have demonstrated strong empirical performance in studies such as \citet{bevilacqua2021equivariant}, \citet{frasca2022understanding}, and \citet{zhang2023complete}, among others, establishing them as a robust and effective choice for GNN architectures. For an in depth discussion on subgraph neural networks see \citet{bronstein2021subgraphs}.}

\subsection{\rebuttal{Neighborhood Function Illustration}}\label{apx:neighb_func}
\rebuttal{
    The following is an example of the standard neighborhood functions introduced in Section \ref{sec:prelims}. For the CC in Figure \ref{fig:neighb_func_fig} the following relations hold:
    \begin{itemize}
        \item $\{A\} \in \gA_{0, 1}(\{B\})$.
        \item $\{A\} \notin \gA_{0, 1}(\{D\})$.
        \item $\{A\} \in \gA_{0, 2}(\{D\})$.
        \item $\{C, D\} \in \mathrm{co}\gA_{1, 0}(\{A, C\})$.
        \item $\{C, D\} \notin \mathrm{co}\gA_{1, 0}(\{A, B\})$.
        \item $\{C, D\} \in \gA_{1, 2}(\{A, B\})$.
        \item $\{C, D, E\} \in \mathrm{co}\gA_{2, 0}(\{E, F, H\})$.
        \item $\{C, D, E\} \notin \mathrm{co}\gA_{2, 1}(\{E, F, H\})$.
        \item $\{A, B, C, D\} \in \mathrm{co}\gA_{2, 1}(\{C, D, E\})$.
        \item $\{B, D\} \in \gB_{0, 1}(\{D\})$.
        \item $\{F, G, H\} \in \gB_{0, 2}(\{G\})$.
        \item $\{B\} \in \gB^\top_{1, 0}(\{B, D\})$.
        \item $\{B\} \notin \gB^\top_{1, 0}(\{C, D\})$.
        \item $\{B\} \in \gB^\top_{2, 0}(\{A, B, C, D\})$.
        \item $\{B\} \notin \gB^\top_{2, 0}(\{C, D, E\})$.
    \end{itemize}
    HOMP can be viewed as performing parallel message passing on the connectivity structures defined by these neighborhood functions.
}

\begin{figure}
    \centering
    \resizebox{0.16\textwidth}{!}{
\begin{tikzpicture}
\begin{scope}
    \node[draw, circle, fill=black, minimum size=0.3pt, inner sep=0pt, text=white, font=\tiny] (A) at (0,0) {A};
    \node[draw, circle, fill=black, minimum size=0.3pt, inner sep=0pt, text=white, font=\tiny] (B) at (1,0) {B};
    \node[draw, circle, fill=black, minimum size=0.3pt, inner sep=0pt, text=white, font=\tiny] (C) at (0,1) {C};
    \node[draw, circle, fill=black, minimum size=0.3pt, inner sep=0pt, text=white, font=\tiny] (D) at (1,1) {D};
    \node[draw, circle, fill=black, minimum size=0.3pt, inner sep=0pt, text=white, font=\tiny] (E) at (0.5,2) {E};
    \node[draw, circle, fill=black, minimum size=0.3pt, inner sep=0pt, text=white, font=\tiny] (F) at (0,3) {F};
    \node[draw, circle, fill=black, minimum size=0.3pt, inner sep=0pt, text=white, font=\tiny] (H) at (1,3) {H};
    \node[draw, circle, fill=black, minimum size=0.3pt, inner sep=0pt, text=white, font=\tiny] (G) at (0.5,4) {G};
    \begin{pgfonlayer}{background}
        \fill[blue] (A.center) -- (B.center) -- (D.center) -- (C.center) -- cycle;
        \fill[blue] (C.center) -- (D.center) -- (E.center) -- cycle;
        \fill[blue] (E.center) -- (F.center) -- (H.center) -- cycle;
        \fill[blue] (F.center) -- (H.center) -- (G.center) -- cycle;
    \end{pgfonlayer}
    
    \draw[thick, orange] (A) -- (B);
    \draw[thick, orange] (A) -- (C);
    \draw[thick, orange] (C) -- (D);
    \draw[thick, orange] (B) -- (D);
    \draw[thick, orange] (C) -- (E);
    \draw[thick, orange] (D) -- (E);
    \draw[thick, orange] (E) -- (F);
    \draw[thick, orange] (E) -- (H);
    \draw[thick, orange] (F) -- (H);
    \draw[thick, orange] (F) -- (G);
    \draw[thick, orange] (H) -- (G);
\end{scope}
\end{tikzpicture}}
    \caption{\rebuttal{CC example for neighborhood function illustration. Nodes are labeled $A, \dots, G$, cells are comprised of subsets of nodes, $1$-cells are represented by orange edges, and $2$-cells are represented as blue faces. E.g. $\{E, F, H\}$ is a $2$-cell, $\{A, C\}$ is a $1$-cell.}}
    \label{fig:neighb_func_fig}
\end{figure}

\section{Expressivity Limitations of Higher-Order Message-Passing}\label{apx:limitations}

\subsection{A Topological Criterion for \homp Indistinguishability}\label{apx:topological_criterion}
In this section we formally restate and prove Theorem \ref{thm:cc_covering}.

\vspace{5pt}
\begin{theorem}[HOMP-indistinguishability criterion, restatement of Theorem \ref{thm:cc_covering}]
\label{thm:covering_formal}
 Let  $\gX$ and  $\gX'$ be CCs of dimension $\ell$ with no cell features such that $|\gX_0| = |\gX_0'|$. If $\gX$ and $\gX'$ admit decompositions into connected components
\begin{equation}
     \gX = \bigsqcup_{\gZ \in C(\gX)} \gZ, \quad  \gX' = \bigsqcup_{\gZ' \in C(\gX')}\gZ',
\end{equation}
such that $\exists \tilde{\gX}$ that is covers each of the connected components $\gZ \in C(\gX), \gZ' \in C(\gX')$, then for every HOMP model $\M$, $\M(\gX) = \M(\gX')$.
\end{theorem}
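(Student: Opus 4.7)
My plan is to split the proof into (i) a local invariance lemma propagating the covering map through the HOMP computation and (ii) a global multiset-counting argument that converts the lemma into equality of readouts. For (i) I would argue by induction on the layer index $t$ that for any HOMP model $\M$ and any CC covering $\rho : \tilde{\gX} \to \gX$, the identity $\rvh^{(t)}_{x'} = \rvh^{(t)}_{\rho(x')}$ holds for every $x' \in \tilde{\gX}$. The base case is immediate because, in the absence of input cell features, all initial features agree. For the inductive step I plug into Equation \ref{eq:homp_update}: the defining local-isomorphism property of $\rho$ restricts to a bijection $\gN_i(x') \to \gN_i(\rho(x'))$ for each $\gN_i \in \nat$, so when combined with the inductive hypothesis the permutation-invariant aggregation $\bigoplus$ over neighbours produces the same value at $x'$ and at $\rho(x')$. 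The outer $\bigotimes$ and non-linearity $\beta$ then preserve the equality.

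Next, I would translate the lemma into multiset equality. Fix a rank $r$ and a connected component $\gZ \in C(\gX)$ covered by $\tilde\gX$ via $\rho_\gZ$. Since a local isomorphism has locally constant fibre sizes, $|\rho_\gZ^{-1}(x)|$ is constant over $\gZ$; furthermore, because $\nat$ contains pairs of neighbourhood functions that connect cells across ranks (for example $\gB_{r_1, r_2}$ and $\gB_{r_1, r_2}^\top$), this fibre size is in fact constant \emph{across all ranks} of $\gZ$. Denote it $D_\gZ$. Then, by the lemma of step (i), the multiset of HOMP features on $\tilde\gX_r$ equals exactly $D_\gZ$ disjoint copies of the multiset on $\gZ_r$, so the multiset on $\gX_r$ is obtained from the ``master'' multiset on $\tilde\gX_r$ by dividing each multiplicity by $D_\gZ$ and summing over $\gZ \in C(\gX)$. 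Counting $0$-cells gives $|\gX_0| = |\tilde\gX_0| \sum_{\gZ \in C(\gX)} D_\gZ^{-1}$ and analogously for $\gX'$, so the hypothesis $|\gX_0| = |\gX_0'|$ forces $\sum_\gZ D_\gZ^{-1} = \sum_{\gZ'} D_{\gZ'}^{-1}$, and the multisets on $\gX_r$ and $\gX_r'$ coincide at every rank. Permutation invariance of the HOMP readout then yields $\M(\gX) = \M(\gX')$.

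The main obstacle in this plan is the rank-independence claim in step (ii) --- establishing that a \emph{single} constant $D_\gZ$ captures the covering degree at every rank of a connected component simultaneously. This requires a careful path-lifting argument that leverages the symmetry of $\nat$ so that one can walk between cells of different ranks through incidences and adjacencies, and verifies that the bijections induced along such a path compose consistently to bijections between fibres of different ranks. Once the lemma and this constancy are in place, the remainder reduces to bookkeeping.
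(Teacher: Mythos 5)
Your proposal is correct and follows essentially the same route as the paper: your step (i) is the paper's Lemma \ref{lemma:covering_homp} (induction on layers using the local bijection $\gN(x') \to \gN(\rho(x'))$ and permutation invariance of $\bigoplus$), and your step (ii) — the rank-independent constant fibre size $D_\gZ = |\tilde\gX_0|/|\gZ_0|$ established via the symmetry of $\nat$, followed by the multiplicity bookkeeping that uses $|\gX_0| = |\gX_0'|$ — is exactly the paper's Lemma \ref{lemma:covering_fiber_size} and its concluding multiset count. The obstacle you flag (constancy of the covering degree across ranks) is precisely what Lemma \ref{lemma:covering_fiber_size} resolves, by showing the sets $\{\gN(x') \mid x' \in \rho^{-1}(x)\}$ are pairwise disjoint and exploiting that every $\gN \in \nat$ has a reverse neighbourhood function in $\nat$.
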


A combinatorial complex $\gX$ is said to be \emph{connected} if its Hasse graph, defined by $\gG = (\gV, \gE)$ with $\gV = \gX$ and $\gE = \{(x,y) \mid x \subseteq y , \rank(x) = \rank(y)-1 \}$, is connected. To prove Theorem \ref{thm:covering_formal}, we first state and prove two lemmas. 

\vspace{5pt}
\begin{lemma}
\label{lemma:covering_homp}
     Let $\rho: \tilde{\gX} \to \gX$ be a covering map. In addition, let $\M$ be a HOMP model with $T$ layers, and let $\rvh^{(t)}_x$ and $\tilde{\rvh}^{(t)}_{x'}$ denote the cell feature maps of $\gX$ and $\tilde{\gX}$ at layer $t$ evaluated on cells $x \in \gX$ and $x' \in \tilde{\gX}$ respectively. Under these conditions, $\tilde{\rvh}^{(t)}_{x'} = \rvh^{(t)}_{\rho(x')}$, for $t = 0, \dots ,T$, $x' \in \tilde{\gX}$.
\end{lemma}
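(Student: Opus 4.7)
The plan is to proceed by induction on the layer index $t$. For the base case $t = 0$, since the theorem assumes no cell features, the initial cochain $\rvh^{(0)}_x$ depends only on $\rank(x)$ (e.g., a constant per rank). Because covering maps are rank-preserving by Definition \ref{def:cc_covering}, $\rank(x') = \rank(\rho(x'))$, which immediately gives $\tilde{\rvh}^{(0)}_{x'} = \rvh^{(0)}_{\rho(x')}$.

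For the inductive step, assume $\tilde{\rvh}^{(t)}_{y'} = \rvh^{(t)}_{\rho(y')}$ for every $y' \in \tilde{\gX}$. Apply the HOMP update rule \eqref{eq:homp_update} to $x' \in \tilde{\gX}$. The self term $\tilde{\rvh}^{(t)}_{x'}$ inside the MLP becomes $\rvh^{(t)}_{\rho(x')}$ by the inductive hypothesis, and the rank index $\rank(x')$ equals $\rank(\rho(x'))$, so the same parameter-tied MLP $\mlp^{(t)}_{i, \rank(\rho(x'))}$ is invoked in both complexes. What remains is to match the inner aggregation $\bigoplus_{y' \in \gN_i(x')}$ with $\bigoplus_{y \in \gN_i(\rho(x'))}$. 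This is where the local isomorphism property of $\rho$ enters: by Definition \ref{def:cc_covering}, $\rho$ restricts to a bijection $\gN_i(x') \to \gN_i(\rho(x'))$ for every natural neighborhood function $\gN_i$, and under this bijection the inductive hypothesis gives $\tilde{\rvh}^{(t)}_{y'} = \rvh^{(t)}_{\rho(y')}$ termwise. Since $\bigoplus$ is permutation-invariant, re-indexing the sum/aggregation via $y = \rho(y')$ yields exactly the inner expression computed at $\rho(x')$ in $\gX$. Applying $\bigotimes$ over $i$ and the non-linearity $\beta$ to both sides preserves equality, completing the step.

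The main obstacle is conceptual rather than computational: one must ensure that the assumption $\boldsymbol{\gN} \subseteq \nat$ made in Section \ref{sec:prelims} is precisely what allows the local-isomorphism clause of Definition \ref{def:cc_covering} to apply to \emph{every} neighborhood function appearing in the HOMP update. A HOMP model built from non-natural neighborhoods would not admit this argument, so the scope of the lemma is bounded by the natural neighborhood assumption. Once this is observed, the proof is a routine structural induction relying on three facts: rank preservation of $\rho$, the bijectivity of $\rho$ on neighborhoods, and the permutation invariance of $\bigoplus$.
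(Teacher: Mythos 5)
Your proof is correct and follows essentially the same route as the paper's: induction on the layer index, with the base case handled by the featureless initialization, and the inductive step combining rank preservation, the bijection $\gN_i(x') \to \gN_i(\rho(x'))$ from the covering property, and the permutation invariance of $\bigoplus$. Your added remark on why the restriction $\boldsymbol{\gN} \subseteq \nat$ is essential is a fair observation but not a departure from the paper's argument.
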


\begin{proof}
     We use induction on $t$. For $t=0$, as both CCs have no initial cellular feature maps, HOMP initializes $\rvh^{(0)}_x, \tilde{\rvh}^{(0)}_{x'}$ by assigning a constant feature to all cells and the claim holds trivially. Assume the claim holds for some $t \in \{0, \dots, T\}$. 
     The HOMP update rule reads:
     \begin{equation}\label{eq:covering_homp_induction_step}
         \begin{aligned}
             \rvh^{(t+1)}_x &= \beta \left(\bigotimes_{\gN \in \nat }\bigoplus_{y \in \gN(x)} \mlp^{(t)}_{\gN, \rank(x)}(\rvh^{(t)}_x, \rvh^{(t)}_y)\right), \\
             \tilde{\rvh}^{(t+1)}_{x'} &= \beta 
             \left(\bigotimes_{\gN \in \nat} 
             \bigoplus_{y' \in \gN(x')} 
             \mlp^{(t)}_{\gN, \rank(x')}(\tilde{\rvh}^{(t)}_{x'}, \tilde{\rvh}^{(t)}_{y'})\right).
         \end{aligned}
     \end{equation}
    Since $\rho$ is a covering map, $\gN(x')$ is bjectively mapped to $\gN(\rho(x'))$ for every $x' \in \tilde{\gX}$ and every neighborhood function $\gN \in \nat$. Additionally, $\rank(\rho(x')) = \rank(x')$. This, along with the fact that $\bigoplus$ is permutation invariant, and the induction hypothesis implies that:
    \begin{equation}\label{eq:message_cover}
        \bigoplus_{y' \in \gN(x')} \mlp^{(t)}_{\gN, \rank(x')}(\tilde{\rvh}^{(t)}_{x'}, \tilde{\rvh}^{(t)}_{y'}) = \bigoplus_{y \in \gN(\rho(x'))} \mlp^{(t)}_{\gN, \rank(\rho(x'))}(\rvh^{(t)}_{\rho(x')}, \rvh^{(t)}_y).
    \end{equation}
     Thus, combining Equation \ref{eq:covering_homp_induction_step} and Equation \ref{eq:message_cover}, we get $\tilde{\rvh}^{(t+1)}_{x'} = \rvh^{(t+1)}_{\rho(x')}$.
\end{proof}

\vspace{5pt}
\begin{lemma}\label{lemma:covering_fiber_size}
    If $\gX$ is connected and $\rho: \tilde{\gX} \to \gX$ is a covering map, $\forall x \in \gX, |\rho^{-1}(x)| = \frac{|\tilde{\gX}_0|}{|\gX_0|}$.
\end{lemma}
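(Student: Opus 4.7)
\textbf{Proof plan for Lemma \ref{lemma:covering_fiber_size}.} The goal is the classical fact that a covering of a connected base has constant fiber size. The plan is to show that the map $x \mapsto |\rho^{-1}(x)|$ is locally constant with respect to the connectivity induced by natural neighborhood functions, use connectedness to deduce it is globally constant, and finally count the $0$-cells.

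First, I would prove the local step: if $x, y \in \gX$ satisfy $y \in \gN(x)$ for some $\gN \in \nat$, then $|\rho^{-1}(x)| = |\rho^{-1}(y)|$. Given any $\tilde{x} \in \rho^{-1}(x)$, the local isomorphism property of $\rho$ says $\rho$ bijects $\gN(\tilde{x})$ with $\gN(x)$, so there is exactly one $\tilde{y}_{\tilde{x}} \in \gN(\tilde{x})$ with $\rho(\tilde{y}_{\tilde{x}}) = y$. This defines a map $\tilde{x} \mapsto \tilde{y}_{\tilde{x}}$ from $\rho^{-1}(x)$ into $\rho^{-1}(y)$. Injectivity of this map is the main point: each natural neighborhood function $\gN$ has a natural "dual" $\gN^* \in \nat$ satisfying $y \in \gN(x) \iff x \in \gN^*(y)$ (e.g., $\gB_{r_1,r_2}$ is dual to $\gB^\top_{r_2,r_1}$, and each $\gA_{r_1,r_2}$ is self-dual in this sense). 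If $\tilde{y}_{\tilde{x}_1} = \tilde{y}_{\tilde{x}_2} =: \tilde{y}$, then both $\tilde{x}_1, \tilde{x}_2 \in \gN^*(\tilde{y})$, and both map to $x$ under $\rho$; applying the local isomorphism of $\rho$ on $\gN^*(\tilde{y}) \to \gN^*(y)$ forces $\tilde{x}_1 = \tilde{x}_2$. For surjectivity, given $\tilde{y} \in \rho^{-1}(y)$, pick $\tilde{x} \in \gN^*(\tilde{y})$ mapping to $x$, which exists by the local bijection on $\gN^*(\tilde{y})$; then $\tilde{y} \in \gN(\tilde{x})$ by duality, so $\tilde{y} = \tilde{y}_{\tilde{x}}$.

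Next I would invoke connectedness: since $\gX$ is connected, its Hasse graph is connected, meaning any two cells are linked by a chain of strict incidences of consecutive rank. Each such incidence is witnessed by a pair $(\gB_{r,r+1}, \gB^\top_{r+1,r})$ of natural neighborhood functions, so the local equality of fiber sizes established above propagates along the chain, yielding a constant $k$ with $|\rho^{-1}(x)| = k$ for every $x \in \gX$. Finally, summing over $0$-cells gives $|\tilde{\gX}_0| = \sum_{x \in \gX_0} |\rho^{-1}(x)| = k \cdot |\gX_0|$ (here I use that $\rho$ is rank-preserving, so $\rho^{-1}(\gX_0) = \tilde{\gX}_0$), hence $k = |\tilde{\gX}_0|/|\gX_0|$.

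The main obstacle is the injectivity argument in the local step, which genuinely requires the existence of a dual natural neighborhood function; the verification that $\nat$ is closed under this duality is a small but essential combinatorial check that I would do case by case on the four families in $\nat$. Everything else is bookkeeping.
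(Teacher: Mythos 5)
Your proposal is correct and follows essentially the same route as the paper's proof: both reduce to showing the fiber size is constant on $\gN$-neighbors, and both hinge on the same two ingredients — the existence of a reverse neighborhood function $\gN^* \in \nat$ with $y \in \gN(x) \iff x \in \gN^*(y)$, and local injectivity of $\rho$ applied on $\gN^*(\tilde{y})$ (the paper phrases this as pairwise disjointness of $\{\gN(x') \mid x' \in \rho^{-1}(x)\}$ plus two opposite inequalities, whereas you package it as an explicit bijection between fibers, but the mechanism is identical). The final propagation along the Hasse graph and the count over $0$-cells also match the paper.
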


\begin{proof}
    Since $\rho$ is surjective and rank-preserving, the above is equivalent to $\forall x,y \in \gX$, $|\rho^{-1}(y)| = |\rho^{-1}(x)|$. Since $\gX$ is connected, it suffices to show that this equality holds for any $x, y \in \gX$ such that $y \in \gN(x)$ for some function $\gN \in \nat$. We first show that for any natural neighborhood function $\gN \in \nat$ and cell $x \in \gX$ the sets $\{\gN(x') \mid x' \in \rho^{-1}(x)\}$ are pairwise disjoint. To see this, assume by contradiction that for a pair of cells $x'_1, x'_2 \in \rho^{-1}(x)$ we have $\gN(x'_1) \cap \gN(x'_2) \neq \emptyset$. If $z' \in \gN(x'_1) \cap \gN(x'_2)$, then there is a neighborhood function $\gN^* \in \nat$ such that $x'_1, x'_2 \in \gN^*(z')$. Given that $\rho(x'_1) = \rho(x'_2)$, this would imply that $\rho$ is not injective on $\gN^*(z')$, contradicting the definition of a covering map. Now, since $y \in \gN(x)$, for any $x' \in \rho^{-1}(x)$ there exists a $y' \in \gN(x')$ such that $\rho(y') = y$. Since the set $\{ \gN(x') \mid x' \in \rho^{-1}(x)\}$ is pairwise disjoint this implies that $|\rho^{-1}(y)| \geq |\rho^{-1}(x)|$. Since $y \in \gN(x)$, there exists a neighborhood function $\gN^* \in \nat$ such that $x \in \gN^*(y)$, implying by the same reasoning above that $|\rho^{-1}(y)| \leq |\rho^{-1}(x)|$. We thus have $|\rho^{-1}(y)| = |\rho^{-1}(x)|$ which concludes the proof.
\end{proof}
We are now ready to prove Theorem \ref{thm:covering_formal}.
\begin{proof}
     Let $\tilde{\gX}$ be a combinatorial complex that covers all connected components $\gZ \in C(\gX)$ and $\gZ' \in C(\gX')$ via maps the maps $\{\rho_\gZ\}_{\gZ \in C(\gX)}$ and $\{\rho_{\gZ'}\}_{\gZ' \in C(\gX')}$ respectively. Let $\M$ be a HOMP model with $T$ layers and let $\rvh^{(t)}$, $\rvh'^{(t)}$, and $\tilde{\rvh}^{(t)}$ denote the cell feature maps of $\gX$, $\gX'$, and $\tilde{\gX}$ respectively at layer $t$. Lemma \ref{lemma:covering_homp} implies that for every $\gZ \in C(\gX)$, $\gZ' \in C(\gX')$ and every $z \in \gZ$, $z' \in \gZ'$ we have
    \begin{equation}
        \begin{aligned}
            \rvh^{(T)}_{z} &= \tilde{\rvh}^{(T)}_y \quad \forall y \in \rho_{\gZ}^{-1}(z), \\
            \rvh'^{(T)}_{z'} &= \tilde{\rvh}^{(T)}_y \quad \forall y \in \rho_{\gZ'}^{-1}(z').
        \end{aligned}
    \end{equation} 
    This implies that the sets of unique values corresponding to the multisets $\ldblbrace \rvh^{(T)}_x \mid x \in \gX \rdblbrace$, $\ldblbrace \rvh'^{(T)}_{x'} \mid x' \in \gX'\rdblbrace$ and $\ldblbrace \tilde{\rvh}^{(T)}_y \mid y \in \tilde{\gX}\rdblbrace$ are the same. Let $n_y, n'_y, \tilde{n}_y$ be the number of times the value $\tilde{\rvh}^{(T)}_y$  appear in the multisets $\ldblbrace \rvh^{(T)}_x \mid x \in \gX\rdblbrace$, $\ldblbrace \rvh'^{(T)}_{x'} \mid x' \in \gX' \rdblbrace$ and $\ldblbrace \tilde{\rvh}^{(T)}_y \mid y \in \tilde{\gX}\rdblbrace$ respectively. Since each $\gZ, \gZ'$ are connected, we can use Lemma \ref{lemma:covering_fiber_size} to get that $\forall z \in \gZ$, $\forall z' \in \gZ'$, $|\rho_{\gZ}^{-1}(z)|= \frac{|\tilde{\gX}_0|}{|\gZ_0|}$ and $|\rho_{\gZ'}^{-1}(z')|= \frac{|\tilde{\gX}_0|}{|\gZ'_0|}$. This implies that $\forall y \in \tilde{\gX}$ 
    \begin{equation}
       n_y = \tilde{n}_y \cdot \left(\sum_{\gZ \in C(\gX)} \frac{|\gZ_0|}{|\tilde{\gX}_0|}\right), \quad n'_y = \tilde{n}_y \cdot \left(\sum_{\gZ' \in C(\gX')} \frac{|\gZ'_0|}{|\tilde{\gX}_0|}\right).
    \end{equation}
    Since $\sum_{\gZ \in C(\gX)} |\gZ_0| = |\gX_0| = |\gX_0'| = \sum_{\gZ' \in C(\gX')} |\gZ_0'|$, this implies that $\forall y \in \tilde{\gX}$, $n_y = n'_y$. We have shown the set of unique values corresponding to multisets $\ldblbrace \rvh^{(T)}_x \mid x \in \gX\rdblbrace$ and  $\ldblbrace \rvh'^{(T)}_x \mid x' \in \gX'\rdblbrace$ is the same, and that the number of times each value appears in the multisets is the same, thus the two multisets are equal. Since the readout of a HOMP model can is a function this multiset, $\gX$ and $\gX'$ are indistinguishable by HOMP.
\end{proof} 

\subsection{Topological and Metric Limitations}\label{apx:topological_and_metric_limitations}

In this section, we rigorously state and prove all results regarding \homp's inability to express topological/metric properties, presented in Section \ref{apx:limitations}. We begin by defining the $\ell$-dimensional torus CCs. As we will later see, this class provides us with examples of indistinguishable CCs that differ in both the diameter and all homology groups.

\paragraph{$\ell$-dimensional torus CCs.} An $\ell$ dimensional torus is a Cartesian product of $\ell$ cycles. More formally:

\vspace{5pt}
\begin{definition}[$\ell$-dimensional torus CCs] 
\label{def:tourus_as_cc}
For a sequence of integers $p_1, \dots, p_\ell$, the torus $T_{p_1, \dots, p_\ell}$ is a combinatorial complex $(S, \gX, \rank)$ defined by:
\begin{equation}
    S = [p_1] \times \dots \times [p_\ell],
\end{equation}
\begin{equation}\label{eq:cells_of_torus}
    \gX_r = \{\vs_\vk \mid \vs \in S, \vk \in \{0, 1\}^\ell, k_1 + \dots + k_\ell = r \},
\end{equation}
where $\vs_\vk$ is defined by:
\begin{equation}\label{eq:node_corresponding_to_cell_in_torus}
    \vs_\vk = \{\vs + \vk' \mid \vk' \in \{0, 1\}^\ell, \vk' \leq \vk \}.
\end{equation}
The sum $\vs + \vk'$ is coordinate-wise, where at coordinate $j$ result is taken modulo $p_j$, and $\vk' \leq \vk$ if $k'_j \leq k_j, \forall j \in \{1, \dots, \ell\}$.
\end{definition}

By slight abuse of notation, we sometimes refer to the set of cells of the torus by $T_{p_1, \dots, p_\ell}$ as well. We note that the torus $T_{p_1, \dots, p_\ell}$ as defined above is only one possible realization of the $\ell$-dimensional torus as a combinatorial complex. An example of a two-dimensional torus can be seen in Figure \ref{fig:diameter}. As the next lemma shows, all $\ell$ dimensional tori are locally isometric.

\vspace{5pt}
\begin{lemma}\label{lemma:torus_cover}
Let $T_{p_1, \dots, p_\ell}$ and $T_{p_1', \dots, p_\ell'}$ be two $\ell$-dimensional tori such that $\forall j \in \{1, \dots, \ell\}$, $p_j, p'_j \geq 3$. The torus $T_{p_1 \cdot p_1', \dots, p_\ell \cdot p_\ell'}$ covers both $T_{p_1, \dots, p_\ell}$ and $T_{p'_1, \dots, p'_\ell}$.
\end{lemma}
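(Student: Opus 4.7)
The plan is to define the covering map $\rho: T_{p_1 p_1', \dots, p_\ell p_\ell'} \to T_{p_1, \dots, p_\ell}$ explicitly via coordinate-wise modular reduction, and verify directly that it satisfies the conditions of Definition \ref{def:cc_covering}. The analogous map onto $T_{p_1', \dots, p_\ell'}$ is defined symmetrically by reducing modulo $(p_1', \dots, p_\ell')$, so it suffices to handle the first factor.

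Concretely, on the vertex set $[p_1 p_1'] \times \cdots \times [p_\ell p_\ell']$, I would define $\rho(\vs) = (s_1 \bmod p_1, \dots, s_\ell \bmod p_\ell)$ and extend to cells by $\rho(\vs_\vk) = (\rho(\vs))_\vk$. Since the cell generators in Equation \ref{eq:node_corresponding_to_cell_in_torus} are formed by componentwise addition modulo the cycle lengths, and reduction modulo $\vp$ commutes with such addition, $\rho$ is well-defined as a map on cells, is rank-preserving (since $\vk$ is unchanged), and is surjective (any $\vt_\vk \in T_{p_1, \dots, p_\ell}$ is hit by interpreting $\vt$ as a vertex of the larger torus).

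The main step is verifying the local-isomorphism condition for each $\gN \in \nat$. The unifying observation is that for any cell $\vs_\vk$, every cell in $\gN(\vs_\vk)$ is of the form $\vt_{\vk''}$ with $\vt - \vs \in \{-1, 0, 1\}^\ell$ modulo the cycle lengths: indeed, for the incidence relations $\gB_{r_1, r_2}$ and $\gB_{r_1, r_2}^\top$ the relation $\vs_\vk \subseteq \vt_{\vk''}$ directly forces each coordinate of $\vt - \vs$ to lie in $\{-1, 0\}$ or $\{0, 1\}$ by the vertex formula; for the adjacencies $\gA_{r_1, r_2}$ and $\mathrm{co}\gA_{r_1, r_2}$, which factor through a common incidence, the same bound holds with values in $\{-1, 0, 1\}$. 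With $p_j \geq 3$ for all $j$, the reduction modulo $\vp$ is injective on any $\{-1, 0, 1\}^\ell$ window around $\vs$, so $\rho$ maps $\gN(\vs_\vk)$ bijectively onto a subset of $\gN(\rho(\vs_\vk))$, and equality follows because $\rho$ commutes with the defining set-theoretic operations of each neighborhood function. The condition $p_j' \geq 3$ plays no role in this direction, but is needed so that the target $T_{p_1', \dots, p_\ell'}$ is itself a well-formed CC (otherwise cells of the form $\vs_\vk$ would collapse) and, symmetrically, for the second covering map.

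The main obstacle is organizing the case analysis over the four natural neighborhood functions without unnecessary repetition. A clean strategy is to first prove the claim for incidences $\gB_{r_1, r_2}$ and $\gB_{r_1, r_2}^\top$ directly from the vertex-set description, and then deduce the adjacency cases $\gA_{r_1, r_2}(x) = \bigcup_{z \in \gB_{r_1, r_2}(x)} \gB_{r_2, r_1}^\top(z) \setminus \{x\}$ (and analogously for $\mathrm{co}\gA$) by composing the bijections already established, again using $p_j \geq 3$ to ensure that the composition remains injective on the resulting $\{-2, \dots, 2\}^\ell$ window. Once the local bijections are verified for all $\gN \in \nat$, all conditions of Definition \ref{def:cc_covering} are satisfied, completing the proof.
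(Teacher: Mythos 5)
Your proposal matches the paper's proof essentially step for step: the same coordinate-wise modular-reduction map, the same extension to cells via $\rho(\vs_\vk) = \rho(\vs)_\vk$, the same verification of rank preservation and surjectivity, and the same use of $p_j, p_j' \geq 3$ to obtain local injectivity on neighborhoods. One small caveat: in your final paragraph you place the adjacency case on a $\{-2, \dots, 2\}^\ell$ window, on which reduction modulo $3$ is \emph{not} injective; your earlier direct analysis correctly bounds the relevant displacements by $\{-1, 0, 1\}^\ell$, and that is the version of the argument you should keep.
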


\begin{proof}
    Denote $\vp = (p_1, \dots ,p_\ell)$,  $\vp' = (p'_1, \dots , p'_\ell)$, $\tilde{\vp} = (\tilde{p}_1, \dots ,\tilde{p}_\ell) = (p_1 \cdot p'_1, \dots, p_\ell \cdot p_\ell')$. Additionally, denote by $S$, $S'$, $\tilde{S}$, and  $\gX$, $\gX'$, $\tilde{\gX}$ the nodes and cell sets of $T_\vp$, $T_{\vp'}$ and $T_{\tilde{\vp}}$ respectively. Define $\rho: \tilde{S} \to S$, $\rho': \tilde{S} \to S'$ by:
    \begin{equation}
        \begin{aligned}
            \rho(\tilde{\vs}) &=  \tilde{\vs} \mod \vp, \\
            \rho'(\tilde{\vs}) &= \tilde{\vs} \mod \vp', 
        \end{aligned}
    \end{equation}
    
    where $\tilde{\vs} \mod \vp := (\tilde{s}_1 \mod p_1, \dots , \tilde{s}_\ell \mod p_\ell)$. We extend $\rho$ and $\rho'$ to $\tilde{\gX}$ by $\rho(x) = \{\rho(s) \mid s \in x\}$. We now prove that $\rho$ is a covering map. We start by showing that $\forall r \in \{0, \dots \ell\}$, $\rho(\tilde{\gX}_r) = \gX_r$ (i.e. $\rho$ is rank-preserving). Recall that all elements of $\tilde{\gX}_r$ are of the form $\tilde{\vs}_{\vk}$ for some $\tilde{\vs} \in \tilde{S}$ and $\vk \in \{0,1\}^\ell$ such that $k_1+\cdots +k_\ell = r$. Since $\vp < \tilde{\vp}$, for every $\vk' \leq \vk$:
    \begin{equation}
        (\tilde{\vs} +\vk' \mod \tilde{\vp}) \mod \vp =  (\tilde{\vs} \mod \vp) + (\vk' \mod \vp) = \rho(\tilde{\vs}) + \vk' \mod \vp.
    \end{equation}
    Therefore, 
    \begin{equation}
        \rho(\tilde{\vs}_\vk) = \rho(\tilde{\vs})_\vk \in \gX_r
    \end{equation}
    and $\rho$ is rank-preserving. To show that $\rho$ is a covering map, all that remains is to show that it preserves natural neighborhood functions and that it is surjective. For the former, notice that since $\rho$ is defined on the node set $\tilde{S}$, for every $x,y,z \in \tilde{\gX}$ we have:
    \begin{itemize}
        \item $x \subseteq y \Rightarrow \rho(x) \subseteq \rho(y)$.
        \item $x,y \subseteq z \Rightarrow \rho(x), \rho(y) \subseteq \rho(z)$.
        \item $z \subseteq x, y \Rightarrow \rho(z) \subseteq  \rho(x), \rho(y)$.
    \end{itemize}
    Thus, $\rho$ preserves all natural neighborhood functions. Finally, since $p_1, \dots, p_\ell \geq 3$ it is easy to check that for any $x, y \in \tilde{\gX}$ and $\gN \in \nat$:
    \begin{equation}
        y \in \gN(x) \Rightarrow \rho(x) \neq \rho(y).
    \end{equation}
    This implies that $\rho$ is a covering map. An equivalent argument shows that $\rho'$ is also a covering map, completing the proof.
\end{proof}
Lemma \ref{lemma:torus_cover} gives rise to the following useful corollary.

\vspace{5pt}
\begin{corollary}\label{cor:torus_indistiguishability}
    If $T_{p_1, \dots, p_\ell}$ and $T_{p_1', \dots, p_\ell'}$ are $\ell$-dimensional tori such that $p_1 \cdots p_\ell = p_1' \cdots p_\ell'$ (i.e. $T_{p_1, \dots, p_\ell}$ and $T_{p_1', \dots, p_\ell'}$ have the same number of $0$-cells) and $\forall j \in \{1, \dots, \ell\}$, $p_j, p_j' \geq 3$, then for every \homp model $\M$, $\M(T_{p_1, \dots, p_\ell}) = \M(T_{p_1', \dots, p_\ell'})$.
\end{corollary}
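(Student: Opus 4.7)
The plan is to derive this corollary as a direct consequence of the \homp-indistinguishability criterion (Theorem \ref{thm:covering_formal}) applied to the covering constructed in Lemma \ref{lemma:torus_cover}. The only nontrivial verification required is checking that both tori are connected, so that they may be taken as single connected components in the statement of Theorem \ref{thm:covering_formal}.

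First, I would note that both $T_{p_1, \dots, p_\ell}$ and $T_{p_1', \dots, p_\ell'}$ are connected as CCs. This follows because any two nodes $\vs, \vs' \in [p_1] \times \cdots \times [p_\ell]$ can be connected by a sequence of coordinate-wise $\pm 1$ steps (the condition $p_j \geq 3$ ensures the cycle in each coordinate has $\geq 3$ nodes so one can always move), and each such step corresponds to traversing a $1$-cell by Equations \ref{eq:cells_of_torus}--\ref{eq:node_corresponding_to_cell_in_torus}; hence the Hasse graph of the torus is connected.

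Next, I would verify the hypotheses of Theorem \ref{thm:covering_formal} with $\gX = T_{p_1, \dots, p_\ell}$ and $\gX' = T_{p_1', \dots, p_\ell'}$. By assumption, $|\gX_0| = p_1 \cdots p_\ell = p_1' \cdots p_\ell' = |\gX'_0|$. Set $\tilde{\gX} = T_{p_1 p_1', \dots, p_\ell p_\ell'}$. Since $p_j p_j' \geq 9 \geq 3$ for all $j$, Lemma \ref{lemma:torus_cover} applies and yields covering maps $\tilde{\gX} \to \gX$ and $\tilde{\gX} \to \gX'$. Because both $\gX$ and $\gX'$ are connected, their decompositions into connected components are trivial (a single component each), and $\tilde{\gX}$ covers every component of both complexes.

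Finally, applying Theorem \ref{thm:covering_formal} directly gives $\M(T_{p_1, \dots, p_\ell}) = \M(T_{p_1', \dots, p_\ell'})$ for every \homp model $\M$, completing the proof. The main obstacle is conceptual rather than technical: one must confirm that the hypothesis $|\gX_0| = |\gX'_0|$ in Theorem \ref{thm:covering_formal} is exactly what the torus-product condition $p_1 \cdots p_\ell = p_1' \cdots p_\ell'$ supplies, ensuring that the ``fibre-size balancing'' step in the proof of Theorem \ref{thm:covering_formal} (via Lemma \ref{lemma:covering_fiber_size}) goes through.
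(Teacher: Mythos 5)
Your proposal is correct and follows essentially the same route as the paper: both invoke Lemma~\ref{lemma:torus_cover} to obtain the common cover $T_{p_1 p_1', \dots, p_\ell p_\ell'}$, observe that the two tori are connected and have equally many $0$-cells, and conclude via Theorem~\ref{thm:covering_formal}. Your added justification of connectedness via coordinate-wise $\pm 1$ steps is a small elaboration the paper leaves implicit, but it does not change the argument.
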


\begin{proof}
    Both $T_{p_1, \dots, p_\ell}$ and $T_{p_1', \dots, p_\ell'}$ are connected, have the same number of $0$-cells ($(T_{p_1, \dots, p_\ell})_0 = p_1 \cdots p_\ell = p_1' \cdots p_\ell' = (T_{p_1', \dots, p_\ell'})_0$), and are covered by $T_{p_1 \cdot p_i', \dots, p_\ell \cdot p_\ell'}$. Therefore, Theorem \ref{thm:covering_formal} implies that $T_{p_1, \dots, p_\ell}$ and $T_{p_1', \dots, p_\ell'}$ are indistinguishable by \homp.
\end{proof}
Note, that tori with the same number of nodes can still differ on a number of topological and metric properties. In the following we use the family of $\ell$ dimensional tori to produce examples of topologically/metrically distinct CCs that are indistinguishable by \homp.

\paragraph{Diameter.} 
For a given adjacency neighborhood function $(\mathrm{co})\gA_{r_1,r_2}$, the $(r_1,r_2)$-diameter of a combinatorial complex $\gX$ is defined by:
\begin{equation}\label{eq: diameter main}
\mathrm{diam}_{(\mathrm{co})\gA_{r_1,r_2}}(\gX) = \max_{\substack{x, x'\in \gX_{r_1} }} d_{(co)\gA_{r_1,r_2}}(x,x'), 
\end{equation}
where $d_{(co)\gA_{r_1,r_2}}$ is the shortest path distance with respect to neighborhood function $(\mathrm{co})\gA_{r_1,r_2}$. Additionally, for $k \in \{1, \dots, \ell\}$, the $(r_1, r_2, k)$ cross diameter is defined by:
\begin{equation}\label{eq:cross_diameter}
\mathrm{diam}^k_{(\mathrm{co})\mathcal{A}_{r_1,r_2}}(\gX) = \max_{\substack{x \in \gX_{r_1} \\ y \in \gX_k}} \min_{x' \subseteq y} d_{(co)\mathcal{A}_{r_1,r_2}}(x,x').
\end{equation}

In this section we show that \homp is unable to compute diameters of CCs, using $\ell$-dimensional tori as a counter example. Corollary \ref{cor:torus_indistiguishability} implies that any pair of $\ell$-dimensional tori with the same number of nodes ($0$-cells) is indistinguishable by \homp, therefore it is enough to construct such tori with different diameters. E.g. the tori $T_{4, 4, 32}$ and $T_{8, 8, 8}$ have the same number of $0$-cells but different diameters and cross-diameters for any  (co)adjacency function and $k=1, 2, 3$. This can be extended to tori of any dimensions. More formally we have the following proposition for the $(0, 1)$-diameter.

\vspace{5pt}
\begin{proposition}
\label{prop:homp_cannot_copmpute_homology}
If $T_{p1, \dots, p_\ell}$ and $T_{p_1', \dots, p_\ell'}$ are $\ell$-dimensional tori satisfying
\begin{enumerate}
    \item  $p_1 \cdots p_\ell = p'_1 \cdots p'_\ell$, \label{cond:first}
    \item $\forall j \in \{1, \dots, \ell\}$, $p_j, p_j' \geq 3$, and \label{cond:second}
    \item  $\sum_{j=1}^\ell \lfloor \frac{p_j}{2} \rfloor \neq \sum_{j=1}^\ell \lfloor \frac{p'_j}{2} \rfloor$, \label{cond:third}
\end{enumerate}
then 
\begin{equation}
    \text{diam}_{\gA_{0,1}}(T_{p_1, \dots, p_\ell}) \neq \text{diam}_{\gA_{0,1}}(T_{p_1', \dots, p_\ell'}) 
\end{equation}
but for any \homp model $\M$,
\begin{equation}
    \M(T_{p_1, \dots, p_\ell}) = \M(T_{p_1', \dots, p_\ell'}).
\end{equation}
\end{proposition}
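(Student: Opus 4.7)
The HOMP-indistinguishability part is immediate: conditions (\ref{cond:first}) and (\ref{cond:second}) are exactly the hypotheses of Corollary~\ref{cor:torus_indistiguishability}, so any HOMP model $\M$ satisfies $\M(T_{p_1,\dots,p_\ell}) = \M(T_{p'_1,\dots,p'_\ell})$ without further work. The substantive content is therefore to compute the $(0,1)$-diameter of an $\ell$-dimensional torus CC and show it equals $\sum_{j=1}^\ell \lfloor p_j/2 \rfloor$; condition (\ref{cond:third}) then finishes the proof.

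The plan is to show that the $\gA_{0,1}$-graph on the $0$-skeleton of $T_{p_1,\dots,p_\ell}$ is precisely the Cartesian product $C_{p_1} \square \cdots \square C_{p_\ell}$ of cycles. To see this, observe that by Definition~\ref{def:tourus_as_cc} the $1$-cells are exactly sets of the form $\{\vs, \vs + \ve_j\}$ for $\vs \in S$ and $j \in [\ell]$ (i.e. $\vk = \ve_j$ in Equation~\ref{eq:cells_of_torus}); hence two singletons $\{\vs\}, \{\vs'\}$ are $\gA_{0,1}$-adjacent iff $\vs$ and $\vs'$ differ in exactly one coordinate $j$ by $\pm 1 \bmod p_j$. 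This is exactly the edge set of the Cartesian product $C_{p_1} \square \cdots \square C_{p_\ell}$, where each cycle $C_{p_j}$ is on the vertex set $\sZ/p_j\sZ$ (the assumption $p_j \geq 3$ ensures each factor is a simple cycle rather than a multigraph).

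Next, I would invoke the standard fact that graph distance on a Cartesian product decomposes additively: $d_{G_1 \square \cdots \square G_\ell}(\vs, \vs') = \sum_{j=1}^\ell d_{G_j}(s_j, s'_j)$. Consequently
\begin{equation*}
    \text{diam}_{\gA_{0,1}}(T_{p_1,\dots,p_\ell}) \;=\; \sum_{j=1}^\ell \text{diam}(C_{p_j}) \;=\; \sum_{j=1}^\ell \lfloor p_j/2 \rfloor,
\end{equation*}
using the elementary fact that the diameter of the $p$-cycle is $\lfloor p/2 \rfloor$. The same identity holds for $T_{p'_1,\dots,p'_\ell}$, so assumption (\ref{cond:third}) gives $\text{diam}_{\gA_{0,1}}(T_{p_1,\dots,p_\ell}) \neq \text{diam}_{\gA_{0,1}}(T_{p'_1,\dots,p'_\ell})$, completing the proof.

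The only mild obstacle is being careful with the case $p_j = 3$ or $p_j = 4$, where one must check the product-graph identification and the cycle-diameter formula still hold; the hypothesis $p_j \geq 3$ rules out degenerate two-vertex ``cycles'' where the two parallel edges would collapse, so the argument is uniform. Everything else is a direct unwinding of Definition~\ref{def:tourus_as_cc} and the additivity of Cartesian-product distance.
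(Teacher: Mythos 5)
Your proposal is correct and follows essentially the same route as the paper's proof: indistinguishability via Corollary~\ref{cor:torus_indistiguishability} (conditions (\ref{cond:first}) and (\ref{cond:second})), identification of the $\gA_{0,1}$-graph with the Cartesian product of cycles, and additivity of diameter over Cartesian factors giving $\sum_j \lfloor p_j/2\rfloor$. Your explicit unwinding of the $1$-cells as $\{\vs,\vs+\ve_j\}$ is a slightly more detailed justification of the product-graph identification than the paper provides, but the argument is the same.
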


\begin{proof}
    Conditions \ref{cond:first} and \ref{cond:second} imply that $T_{p_1, \dots, p_\ell}$ and $T_{p_1', \dots, p_\ell'}$ are indistinguishable by \homp. To see that they have different diameters, observe that the graph induced on the nodes of $T_{p_1, \dots, p_\ell}$ by the adjacency neighborhood $\gA_{0,1}$ is the Cartesian product of the cyclic graphs  $\text{Cyc}(p_1), \dots, \text{Cyc}(p_\ell)$. Consequently, since the diameter of a Cartesian product is equal to the sum of diameters over the factors of the product, we have:
    \begin{equation}
        \begin{split}
            \text{diam}_{\gA_{0,1}}(T_{p_1, \dots, p_\ell}) = \sum_{j=1}^\ell \text{diam}(\text{Cyc}(p_j)) = \sum_{j=1}^\ell \left\lfloor \frac{p_j}{2}  \right\rfloor &\neq 
            \sum_{j=1}^\ell \left\lfloor \frac{p_j'}{2}  \right\rfloor \\
            &= \sum_{j=1}^\ell \text{diam}(\text{Cyc}(p_j')) \\
            &= \text{diam}_{\gA_{0,1}}(T_{p_1', \dots, p_\ell'}).
        \end{split}
    \end{equation}
\end{proof}

\paragraph{Homology and Betti numbers.} The $r$-th homology group of a cellular complex \footnote{Homology is not defined for general combinatorial complexes, only for simplicial / cellular complexes.} encodes the structure of ``$r$-dimensional holes'' in the space (e.g. a circle has a single 1-dimensional hole, a sphere has a single 2-dimensional hole, etc). We denote the $r$-th homology of a CC $\gX$ by $H_r(\gX)$. The \emph{rank} of the $r$-th homology group (i.e. the size of the minimal generating set) is called the $r$-th \emph{Betti number}, denoted by $b_r(\gX)$.

\vspace{5pt}
\begin{proposition}[\homp cannot distinguish complexes based on homology]
    Let $T = T_{p_1, \dots, p_\ell}$ be an $\ell$-dimensional torus and $T' = T_{p_1^1, \dots, p_\ell^1} \sqcup T_{p_1^2, \dots, p_\ell^2}$ be a disjoint union of two disconnected tori. If $p_1 \cdots p_\ell = p_1^1 \cdots p_\ell^1 + p_1^2 \cdots p_\ell^2$ and $\forall j \in \{1, \dots, \ell\}$, $p_j, p_j^1, p_j^2 \geq 3$, then $T$ and $T'$ are \homp-indistinguishable but have different homology groups and Betti number of all orders: $\forall r \in \{ 0, \dots, \ell \}$, $H_{r}(T) \neq H_{r}(T')$, $b_r(T) \neq b_r(T')$.
\end{proposition}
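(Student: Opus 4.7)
The proof splits naturally into two independent parts: establishing HOMP-indistinguishability via Theorem \ref{thm:covering_formal}, and computing the homology groups to show they differ.

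For the indistinguishability part, the plan is to construct a single common cover $\tilde{T}$ of $T$ and both connected components of $T'$, then invoke Theorem \ref{thm:covering_formal}. The natural candidate is the torus $\tilde{T} = T_{q_1, \dots, q_\ell}$ with $q_j = p_j \cdot p_j^1 \cdot p_j^2$. An inspection of the proof of Lemma \ref{lemma:torus_cover} (which only uses that the cover's coordinates are multiples of the base's coordinates and that all coordinates are $\geq 3$) shows that the modular reduction maps $\rho(\tilde{\vs}) = \tilde{\vs} \bmod \vp$, $\rho^1(\tilde{\vs}) = \tilde{\vs} \bmod \vp^1$, and $\rho^2(\tilde{\vs}) = \tilde{\vs} \bmod \vp^2$ are all covering maps. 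The size condition in Theorem \ref{thm:covering_formal} is exactly the given $p_1 \cdots p_\ell = p_1^1 \cdots p_\ell^1 + p_1^2 \cdots p_\ell^2$. Since $T$ is connected and $T'$ decomposes into two connected components each covered by $\tilde{T}$, Theorem \ref{thm:covering_formal} gives HOMP-indistinguishability immediately.

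For the homology part, I would use the fact that the geometric realization of the CC $T_{p_1, \dots, p_\ell}$ (Definition \ref{def:tourus_as_cc}) is a regular cellular decomposition of the standard $\ell$-dimensional torus $(S^1)^\ell$, which is a classical topological invariant of the complex. By the K\"unneth formula,
\begin{equation}
H_r\big((S^1)^\ell\big) \;\cong\; \mathbb{Z}^{\binom{\ell}{r}}, \qquad b_r\big((S^1)^\ell\big) = \binom{\ell}{r},
\end{equation}
for all $0 \leq r \leq \ell$. For the disjoint union, $H_r(T') = H_r(T_{p_1^1, \dots, p_\ell^1}) \oplus H_r(T_{p_1^2, \dots, p_\ell^2})$ (this is an elementary property of singular/cellular homology on disjoint unions, valid for all $r \geq 0$ with unreduced homology), so $b_r(T') = 2\binom{\ell}{r}$. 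Since $\binom{\ell}{r} \geq 1$ for every $0 \leq r \leq \ell$, we get $b_r(T) = \binom{\ell}{r} \neq 2\binom{\ell}{r} = b_r(T')$, and the homology groups, being free abelian of different ranks, are non-isomorphic.

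The two parts together yield the proposition. No step here is really an obstacle: the covering construction is essentially a direct application of Lemma \ref{lemma:torus_cover} (extended from two to three factors, which is immediate from its proof), and the homology computation is a standard K\"unneth calculation. The only subtlety worth flagging is ensuring that the CC-theoretic notion of ``torus'' in Definition \ref{def:tourus_as_cc} genuinely realizes the topological torus $(S^1)^\ell$ as its underlying space, so that the topological invariant ``homology group'' is well-defined on the CC and takes the claimed value; this is clear from the product-cell structure specified in equations \eqref{eq:cells_of_torus}--\eqref{eq:node_corresponding_to_cell_in_torus}.
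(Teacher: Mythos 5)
Your proposal is correct and follows essentially the same route as the paper: a common covering torus (obtained by multiplying the cycle lengths, via the argument of Lemma \ref{lemma:torus_cover}) combined with Theorem \ref{thm:covering_formal} for indistinguishability, and the standard computation $H_r(T)\cong\mathbb{Z}^{\binom{\ell}{r}}$ together with additivity of homology over disjoint unions for the topological part. If anything, you are slightly more explicit than the paper in noting that Lemma \ref{lemma:torus_cover} must be extended from two to three factors to produce a single cover of $T$, $T_{p^1}$, and $T_{p^2}$ simultaneously, which is a worthwhile clarification.
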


\begin{proof}
    First, Lemma \ref{lemma:torus_cover} implies that the $T, T_1$, and $T_2$ have a common cover. Thus, since $T$ and $T'$ have the same number of cells, Theorem \ref{thm:covering_formal} implies they are \homp-indistinguishable. Additionally, for every $H_r(T) = \mathbb{Z}^{\binom{\ell}{r}}$ (see e.g. \cite{MR1867354}) and since, $T'$ is a disjoint union of $T_1$ and $T_2$, $H_r(T') =  H_r(T_1) \times H_r(T_2) = \sZ^{\binom{\ell}{r}} \times \sZ^{\binom{\ell}{r}}  = \sZ^{2\binom{\ell}{r}}$.
    Therefore, $\forall r \in \{0 ,\dots, \ell \}$, $H_r(T) \neq H_r(T')$ and $b_r(T) = \binom{\ell}{r} \neq 2 \binom{\ell}{r} = b_r(T')$.
\end{proof}

\paragraph{Orientability.}
We now turn our attention to \homp's capability to to detect another common topological property: orientability. Loosely speaking, a surface is orientable if one can distinguish between an ``inner'' and an ``outer'' side of the surface. A common example of two locally isomorphic surfaces where one is orientable and the other is not is the Möbius strip and a cylinder. For an in-depth discussion about orientability and the Möbius strip see \cite{MR1867354}. We now realize both of these surfaces as cellular complexes. A visualization of the construction can be seen in Figure \ref{fig:orientability_and_planarity}. We begin by defining two auxiliary functions.

\vspace{5pt}
\begin{definition}
For $h, p \in \sN$ define $\rho_\text{cyl}^{h, p}, \rho_\text{möb}^{h, p}: \sZ^2 \to \sZ^2$ by
\begin{align}
    \rho_\text{cyl}^{h, p}(\vs) &= (s_1 , s_2 \mod p) \\
    \rho_\text{möb}^{h, p}(\vs) &= \begin{cases}
        s_1, s_2 \mod r & s_2 \mod 2p \leq p \\
        (h + 1 - s_1, s_2  \mod r) & s_2 \mod 2p > p.
    \end{cases}
\end{align}
\end{definition}

Using $\rho_\text{cyl}^{h, p}$ and $\rho_\text{möb}^{h, p}$ we can costurct the cylinder and the Möbius strip.

\vspace{5pt}
\begin{definition}[Cylinder as CC]
Given two integers $h, p$, the cylinder $\text{Cyl}_{h, p}$ is a 2-dimensional combinatorial complex $(S, \gX, \rank)$ defined by:
\begin{gather}
    S =  [h] \times [p], \\
    \gX_r = \{\vs_\vk \mid \vs \in S, \vk \in \{0, 1\}^2, k_1 + k_2 = r , \rho^{h,p}_\text{cyl}(\vs + \vk) \in S\}, \\
    \gX = \gX_0 \cup \gX_1 \cup \gX_2,
\end{gather}
where $\vs_\vk$ is defined by:
\begin{equation}
    \vs_\vk = \{\rho^{h,p}_\text{cyl}(\vs + \vk') \mid \vk' \in \{0, 1\}^2, \vk' \leq \vk \}.   
\end{equation}
\end{definition}

\vspace{5pt}
\begin{definition}[Möbius strip as a CC] 
\label{def: Mobius as CC}
Given two integers $h, p$, the Möbius strip  $\text{Möb}_{h, p}$ is a 2-dimensional combinatorial complex $(S, \gX, \rank)$ defined by:
\begin{gather}
    S =  [h] \times [p], \\
    \gX_r = \{\vs_\vk \mid \vs \in S, \vk \in \{0, 1\}^2, k_1 + k_2 = r , \rho^{h,p}_\text{möb}(\vs + \vk) \in S\}, \\
    \gX = \gX_0 \cup \gX_1 \cup \gX_2,
\end{gather}
where $\vs_\vk$ is defined by:
\begin{equation}
    \vs_\vk = \{\rho^{h,p}_\text{möb}(\vs + \vk) \mid \vk' \in \{0, 1\}^2, \vk' \leq \vk \}. 
\end{equation}
\end{definition}
We now show \homp is unable to distinguish between CCs based on orientability:

\vspace{5pt}
\begin{proposition}[\homp cannot detect orientability]
\label{prop:homp_cannot_detect_orientability}
For any two integers \(h, p \in \mathbb{N}\) such that $h, p \geq 3$,  and for every \homp model \(M\), $\text{Cyl}_{h, p}$ and $\text{Möb}_{h, p}$ are \homp-indistinguishable, but $\text{Cyl}_{h, p}$ is orientable as a topological space while $\text{Möb}_{h, r}$ is not.
\end{proposition}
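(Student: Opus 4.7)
The plan is to invoke the HOMP-indistinguishability criterion (Theorem~\ref{thm:cc_covering}) by exhibiting a single CC that covers both $\text{Cyl}_{h,p}$ and $\text{Möb}_{h,p}$. Since both target complexes are connected and share the same number of $0$-cells (namely $hp$), producing such a common cover immediately yields the indistinguishability conclusion. Following the intuition from the proof sketch after Theorem~\ref{thm:topological_blindspots}, the natural candidate is $\text{Cyl}_{h, 2p}$, which should double-wrap the cylinder and unfold the twist of the Möbius strip.

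First I would define two maps on the node set $[h] \times [2p]$ of $\text{Cyl}_{h, 2p}$ and extend each to cells by direct image $\rho(x) = \{\rho(s) \mid s \in x\}$. For the cylinder cover, set $\rho_{\text{cyl}}(s_1, s_2) = (s_1,\, s_2 \bmod p)$, mirroring the construction of $\rho_{\text{cyl}}^{h,p}$. For the Möbius cover, set
\begin{equation*}
\rho_{\text{möb}}(s_1, s_2) = \begin{cases} (s_1,\, s_2) & s_2 \le p, \\ (h+1-s_1,\, s_2 - p) & s_2 > p, \end{cases}
\end{equation*}
which implements the reflection identification built into Definition~\ref{def: Mobius as CC} and is essentially a restriction of $\rho_{\text{möb}}^{h,2p}$ modulo $p$ in the second coordinate.

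Next I would verify that each map satisfies Definition~\ref{def:cc_covering}. Surjectivity and rank-preservation follow from the $\vs_\vk$ parametrisation of cells in both target complexes, using essentially the bookkeeping in the proof of Lemma~\ref{lemma:torus_cover}. For the local isomorphism property, since natural neighborhoods are defined entirely through subset inclusion, it suffices to show that $\rho$ restricts to a bijection $\gN(x') \to \gN(\rho(x'))$ for every cell $x'$ and every $\gN \in \nat$. The hypothesis $p \ge 3$ guarantees that a cell and its image under the shift $s_2 \mapsto s_2 + p$ are never in a common neighborhood, while $h \ge 3$ ensures the reflection $s_1 \mapsto h+1-s_1$ cannot identify two distinct neighbors of any cell touching the boundary row. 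Applying Theorem~\ref{thm:cc_covering} with both covers then yields HOMP-indistinguishability. The topological claim -- that $\text{Cyl}_{h,p}$ is orientable while $\text{Möb}_{h,p}$ is not -- is a classical fact about the underlying CW realizations of each complex (see \citet{MR1867354}).

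The main obstacle I anticipate is the case analysis required for the Möbius cover $\rho_{\text{möb}}$. Cells straddling the seam $s_2 \in \{p, 2p\}$ have neighborhoods that cross between the two branches of the piecewise definition, and one must check, for all four families $(\mathrm{co})\gA_{r_1,r_2}$, $\gB_{r_1,r_2}$, $\gB^\top_{r_1,r_2}$, that the twist neither creates collisions (failure of injectivity on $\gN(x')$) nor misses neighbors of $\rho(x')$ (failure of surjectivity onto $\gN(\rho(x'))$). The assumptions $h, p \ge 3$ are precisely what rule out such pathologies, and identifying them as the minimal sufficient conditions will be the most delicate part of the argument.
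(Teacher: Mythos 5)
Your proposal is correct and follows essentially the same route as the paper: both exhibit $\text{Cyl}_{h,2p}$ as a common cover of $\text{Cyl}_{h,p}$ and $\text{Möb}_{h,p}$ via the projection $(s_1,s_2)\mapsto(s_1, s_2 \bmod p)$ and the twisted piecewise map (the paper writes these as restrictions of $\rho^{h,p}_{\text{cyl}}$ and $\rho^{h,p}_{\text{möb}}$ to the node set of $\text{Cyl}_{h,2p}$), verify rank-preservation and the local-isomorphism property using $h,p\geq 3$, and then invoke Theorem~\ref{thm:cc_covering}. The seam case analysis you flag is exactly the two-case computation of $\rho'(\tilde{\vs}_{\vk})$ carried out in the paper's proof, so no gap remains.
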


\begin{proof}
    First, the fact that the cylinder is orientable, whereas the Möbius strip is not is well known (see e.g. \cite{MR1867354} for proof). As for \homp-indistinguishably, consider the wide cylinder $\text{Cyl}_{h, 2p}$ with height $h$ and perimeter $2p$. We show that $\text{Cyl}_{h, 2p}$ covers both $\text{Cyl}_{h, p}$ and $\text{Möb}_{h, p}$. Since the two CCs are connected and have the same number of nodes, Theorem \ref{thm:covering_formal} implies that they are \homp-indistinguishable. Denote by $\tilde{S}$, $S^{\text{cyl}}$, $S^{\text{möb}}$ and $\tilde{\gX}$, $\gX^{\text{cyl}}$, $\gX^{\text{möb}}$ the sets of nodes and cells of $\text{Cyl}_{h, 2p}$, $\text{Cyl}_{h, p}$ and $\text{Möb}_{h, p}$ respectively. Define $\rho: \tilde{S} \to S^{\text{cyl}}$ and $\rho': \tilde{S} \to S^{\text{möb}}$ by $\rho = \restr{\rho^{h,p}_\text{cyl}}{\tilde{S}}$ and $\rho' = \restr{\rho^{h,p}_\text{möb}}{\tilde{S}}$. It's easy to verify that $\rho(\tilde{S}) = S^{\text{cyl}}$ and $\rho'(\tilde{S}) = S^{\text{möb}}$, thus $\rho$ and $\rho'$ are well defined and surjective. $\rho, \rho'$ induce maps $\gP(\tilde{S}) \to \gP(S^{\text{cyl}})$ and $\gP(\tilde{S}) \to \gP(S^{\text{möb}})$; by abuse of notation we refer to these maps by $\rho, \rho'$ as well. To show that $\rho$ and $\rho'$ are covering maps, we first show that they are rank-preserving (i.e. that $\rho(\tilde{\gX}_r) = \gX^{\text{cyl}}_r$ and $\rho'(\tilde{\gX}_r) = \gX^{\text{möb}}_r$), and then show that they are local isomorphisms. Recall that all elements of $\tilde{\gX}_r$ are of the form $\tilde{\vs}_{\vk}$ for some $\tilde{\vs} \in \tilde{S}$ and $\vk \in \{0,1\}^2$ such that $k_1 +k_2 = r$. For every $\vk' \leq \vk$
    \begin{equation}
        \rho(\rho_{\text{cyl}^{h,2p}}(\tilde{\vs} + \vk')) =  \rho_{\text{cyl}^{h,p}}(\rho(\tilde{\vs}) + \vk'),
    \end{equation}
    so $\rho(\tilde{\vs}_{\vk})  = \rho(\tilde{\vs})_{\vk}$.
    Additionally, 
    \begin{equation}
    \rho'(\rho_\text{cyl}^{h,2p}(\tilde{\vs} + \vk')) = 
    \begin{cases}
        \rho_\text{möb}^{h,p}(\rho'(\tilde{\vs}) + \vk') & \tilde{s}_1  \leq p\\
        \rho_\text{möb}^{h,p}(\rho'(\tilde{\vs}) + (-k'_1, k'_2)) & \tilde{s}_1  > p.
    \end{cases} 
    \end{equation}
    so
    \begin{equation}
    \rho'(\tilde{\vs}_{\vk}) = 
    \begin{cases}
        \rho'(\tilde{\vs})_{\vk} & \tilde{s}_1  \leq p \\
        (\rho'(\tilde{\vs}) + (-1,0) )_{\vk} & \tilde{s}_1  > p.
    \end{cases}
    \end{equation}
    By the definitions $\tilde{\gX}_r$, $\gX^{\text{cyl}}$ and $\gX^{\text{möb}}$ we now have $\rho(\tilde{\gX}_r) = \gX^{\text{cyl}}_r$ and $\rho'(\tilde{\gX}_r) = \gX^{\text{möb}}_r$ as needed. Since $\rho$ and $\rho'$ are extended to $\gP(\tilde{S})$ from $\tilde{S}$, for every $x,y,z \in \tilde{\gX}$
    \begin{itemize}
        \item $x \subseteq y \Rightarrow \rho(x) \subseteq \rho(y)$ and $\rho'(x) \subseteq \rho'(y)$.
        \item $x,y \subseteq z \Rightarrow \rho(x), \rho(y) \subseteq \rho(z)$ and $\rho'(x), \rho'(y) \subseteq \rho'(z)$
        \item $z \subseteq x,y \Rightarrow \rho(z) \subseteq  \rho(x), \rho(y)$ and $\rho'(z) \subseteq \rho'(x), \rho'(y)$.
    \end{itemize}
    Therefore, $\rho$ and $\rho'$ preserve all natural neighborhood functions. Finally, since $h, p \geq 3$, for $x, y \in \tilde{\gX}$ and $\gN \in \nat$, $y \in \gN(x) \Rightarrow \rho(x) \neq \rho(y)$ and $\rho'(x) \neq \rho'(y)$. This implies that $\rho$ and $\rho'$ are local isomorphisms, completing the proof.    
\end{proof}

\paragraph{Planarity.} A topological space is considered planar if it can be continuously embedded in $\sR^2$. Proposition \ref{prop:homp_cannot_detect_orientability} provides us with the following corollary. 

\vspace{7pt}
\begin{corollary}[\homp cannot detect planarity]\label{cor:planarity}
    There exist pairs of cellular complexes $\gX, \gX'$ such that the induced topology of $\gX$ is planar while the induced topology of $\gX'$ is not, but $\gX$ and $\gX'$ are \homp-indistinguishable.
\end{corollary}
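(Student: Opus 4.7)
The plan is to reuse the very same pair of complexes used in Proposition \ref{prop:homp_cannot_detect_orientability}, namely $\gX = \text{Cyl}_{h,p}$ and $\gX' = \text{Möb}_{h,p}$ for any $h, p \geq 3$. Proposition \ref{prop:homp_cannot_detect_orientability} already gives us the \homp-indistinguishability half of the statement for free, so the only remaining task is to verify that the induced topological space of the cylinder is planar while that of the Möbius strip is not.

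For the positive side, I would exhibit an explicit continuous embedding of $\text{Cyl}_{h,p}$ into $\sR^2$ as an annulus: map the underlying topological realization $S^1 \times [0,1]$ to the planar annulus $\{(x,y) \in \sR^2 \mid 1 \leq x^2+y^2 \leq 4\}$ via $(\theta, t) \mapsto ((1+t)\cos\theta, (1+t)\sin\theta)$. This is a homeomorphism onto its image, which witnesses planarity.

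For the negative side, the standard argument is to invoke the fact that any $2$-dimensional topological manifold (with or without boundary) that embeds in $\sR^2$ inherits an orientation from the ambient plane, since $\sR^2$ is orientable and orientability passes to open subsets of a manifold. Because the Möbius strip is non-orientable (as used in Proposition \ref{prop:homp_cannot_detect_orientability}), no such embedding can exist. The cleanest formulation is a short contradiction: if $\text{Möb}_{h,p}$ embedded continuously in $\sR^2$, its interior would be an open subset of $\sR^2$ and hence orientable, contradicting the non-orientability established earlier.

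The main (and only real) obstacle is that the non-planarity of the Möbius strip is a classical topological fact rather than something combinatorial, so I would want to cite a standard reference (e.g. \citet{MR1867354}) for the ``open subset of $\sR^2$ is orientable'' step rather than reproving it. Everything else is either an explicit parametrization or a direct appeal to the already-proven Proposition \ref{prop:homp_cannot_detect_orientability}, so the proof itself should be only a few lines once the non-planarity fact is cited.
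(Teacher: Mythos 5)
Your proposal is correct and follows essentially the same route as the paper: both use the pair $\text{Cyl}_{h,p}$ and $\text{Möb}_{h,p}$, invoke Proposition \ref{prop:homp_cannot_detect_orientability} for \homp-indistinguishability, and appeal to the classical facts that the cylinder is planar while the Möbius strip is not (the paper simply cites \cite{MR1867354} for the latter, whereas you sketch the explicit annulus embedding and the orientability-of-open-subsets argument). No gaps.
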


\begin{proof}
    The CCs $\text{Cyl}_{h, p}$ and $\text{Möb}_{h, p}$ for $p, h \geq$ are \homp-indistinguishable according to Proposition \ref{prop:homp_cannot_detect_orientability}. The Möbius strip is not planar (see e.g., \cite{MR1867354}), whereas the cylinder is.
\end{proof}

\subsection{Lifting and Pooling}\label{apx:lifting_and_pooling}
In this section, we rigorously state and prove Proposition \ref{prop:lifting_and_pooling}. We begin by focusing on lifting operations, proving Proposition \ref{prop:lifting_and_pooling} for triangular lifting, as used in \citet{pmlr-v139-bodnar21a} and \citet{bodnar2021weisfeiler}. Next, we address pooling operations, proving the proposition for MOG pooling \citep{hajij2018mog}, which was used to in conjunction with \homp in \citet{hajij2022topological}. While we only provide proofs for these triangular lifting and MOG, we note that this phenomenon generalizes to other lifting and pooling methods as well.

\paragraph{Lifting.} We first define triangular lifting on graphs, denoted by 3-CL.

\vspace{5pt}
\begin{definition}[Triangular lifting]\label{def:trianglular_lifting}
     The triangular lift of a graph  $\gG = (\gV, \gE)$ is a combinatorial complex denoted by $3-\mathrm{CL}(\gG)$, with $S = \gV$, $\gX_0 = \{\{\ v\} \mid v \in \gV\}$, $\gX_1 = \gE$, and $\gX_2 = \{\{x, y, z\} \mid x \sim y, x \sim z, y \sim z \}$.
\end{definition}

We now formally state Proposition \ref{prop:lifting_and_pooling} for triangular lifting
\vspace{5pt}
\begin{proposition}\label{prop:inability_to_use_lifting_well}
    There exist pairs of graphs $\gG$ and $\gG'$ such that the combinatorial complexes  $\gX = 3\mathrm{-CL}(\gG)$  and  $3\gX' = 3\mathrm{-CL}(\gG')$ are indistinguishable by \homp. This occurs despite the fact that the cross diameter $\mathrm{diam}^2_{\gA_{0,1}}(\gX)$ is finite while $\mathrm{diam}^2_{\gA_{0,1}}(\gX')$ is infinite.
\end{proposition}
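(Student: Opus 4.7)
The plan is to adapt the torus covering construction of Lemma \ref{lemma:torus_cover} and Corollary \ref{cor:torus_indistiguishability} to graphs that contain actual triangles, so that triangular lifting produces non-trivial $2$-cells. For $p_1, p_2 \geq 3$ let $G^\triangle_{p_1, p_2}$ denote the triangulated-torus graph on vertex set $[p_1] \times [p_2]$ whose edges are $\{(i,j),(i{+}1,j)\} \cup \{(i,j),(i,j{+}1)\} \cup \{(i,j),(i{+}1,j{+}1)\}$ (indices modulo $p_1,p_2$). A short enumeration shows that every vertex belongs to exactly six triangles, so $3\mathrm{-CL}(G^\triangle_{p_1,p_2})$ has $p_1 p_2$ nodes, $3 p_1 p_2$ edges, and $2 p_1 p_2$ triangular $2$-cells.

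The explicit pair I would propose is
\[
  \gG \;=\; G^\triangle_{p_1, p_2}, \qquad
  \gG' \;=\; G^\triangle_{p_1^1, p_2^1} \,\sqcup\, G^\triangle_{p_1^2, p_2^2},
\]
with $p_j, p_j^i \geq 3$ chosen so that $p_1 p_2 = p_1^1 p_2^1 + p_1^2 p_2^2$ (e.g., $(p_1, p_2) = (6, 6)$, $(p_1^1, p_2^1) = (3, 3)$, $(p_1^2, p_2^2) = (3, 9)$), which ensures $|\gX_0| = |\gX'_0|$. To invoke Theorem \ref{thm:cc_covering} I would take a fine triangulated torus $\tilde{\gG} = G^\triangle_{P_1, P_2}$ with $P_j$ a common multiple of $p_j, p_j^1, p_j^2$, and let $\rho, \rho_1, \rho_2$ be the coordinate-reduction maps onto $[p_1] \times [p_2]$ and $[p_1^i] \times [p_2^i]$. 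Arguing exactly as in the proof of Lemma \ref{lemma:torus_cover}, each map extends to $3\mathrm{-CL}(\tilde{\gG})$ as a surjective, rank-preserving local isomorphism: it carries each of the three edge types and each of the two triangle types per grid cell to the corresponding object in the base, and (because $p_j, p_j^i \geq 3$) is injective on the $1$-ring around every cell. Hence $3\mathrm{-CL}(\tilde{\gG})$ covers the single connected component of $\gX$ and each of the two components of $\gX'$, and Theorem \ref{thm:cc_covering} gives $\M(\gX) = \M(\gX')$ for every \homp model $\M$.

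For the cross-diameter gap, $\gG$ is connected and every vertex belongs to a triangle, so $\mathrm{diam}^2_{\gA_{0,1}}(\gX) \leq \mathrm{diam}_{\gA_{0,1}}(\gX) < \infty$. In contrast, $\gG'$ splits into two connected components, each containing triangles entirely within itself, so for any vertex $x$ in one component and any $2$-cell $y$ in the other, $\min_{x' \subseteq y} d_{\gA_{0,1}}(x, x') = \infty$, forcing $\mathrm{diam}^2_{\gA_{0,1}}(\gX') = \infty$. Since both lifts contain only $\mathcal{O}(n)$ triangular $2$-cells, the cross-diameter is itself easy to compute (e.g., by BFS from each triangle vertex), reinforcing the paper's point that \homp ignores topological invariants that are otherwise cheap to extract from a sparse higher-order structure.

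The main subtlety is in verifying that the coordinate-reduction maps yield a genuine \emph{CC} covering and not merely a graph covering: beyond preserving edges, one must confirm that every natural neighborhood function ($\gA_{r_1,r_2}, \mathrm{co}\gA_{r_1,r_2}, \gB_{r_1,r_2}, \gB_{r_1,r_2}^\top$) is mapped bijectively on the local neighborhoods at ranks $0$, $1$, and $2$. This reduces to a short enumeration of the six triangles and six edges meeting every vertex, plus the observation, enabled by $p_j, p_j^i \geq 3$, that the reduction never sends two of these incident cells to the same image; once this is established the argument of Lemma \ref{lemma:torus_cover} transfers verbatim to our triangulated setting.
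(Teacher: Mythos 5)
Your overall strategy is the same as the paper's (exhibit a connected lift and a two-component lift with equal node counts that share a common cover, then invoke Theorem \ref{thm:cc_covering} and note that disconnection forces $\mathrm{diam}^2_{\gA_{0,1}}(\gX')=\infty$), but your witness graphs differ: the paper uses ``star'' graphs (a long cycle with $k$ pendant apex vertices, each forming a single triangle with two consecutive cycle vertices), whereas you use fully triangulated torus grids. The paper's construction has the advantage that the triangles are sparse and obviously exhaustively enumerated; yours puts a triangle pair in every grid cell, which is where the trouble arises.

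The gap is in your claim that ``every vertex belongs to exactly six triangles'' and, consequently, that the coordinate-reduction maps are CC coverings. This is false precisely for the side lengths you allow and in fact use. In $G^\triangle_{3,q}$ the triple $\{(0,j),(1,j),(2,j)\}$ is a wrap-around triangle (since $(2,j)\sim(3,j)=(0,j)$ mod $3$), and $G^\triangle_{3,3}$ additionally has column and diagonal wrap-around triangles. These become genuine $2$-cells of $3\mathrm{-CL}(G^\triangle_{3,3})$ and $3\mathrm{-CL}(G^\triangle_{3,9})$, but their vertex-preimages in the fine torus $G^\triangle_{P_1,P_2}$ are paths, not triangles, so they have no $2$-cell above them. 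Consequently the reduction map is not even surjective on $2$-cells, and it fails to be a local isomorphism with respect to $\gB_{0,2}$ (a base vertex lies in strictly more $2$-cells than any of its preimages), so Definition \ref{def:cc_covering} is violated and Theorem \ref{thm:cc_covering} cannot be applied to your explicit pair $(6,6)$ versus $(3,3)\sqcup(3,9)$. The fix is cheap: require every side length to be at least $4$ (e.g.\ $G^\triangle_{6,6}$ versus $G^\triangle_{4,4}\sqcup G^\triangle_{4,5}$, with $36=16+20$), in which case a difference-vector enumeration shows the only triangles are the two standard ones per grid cell, and the rest of your argument --- local injectivity of the reduction on the six incident edges and six incident triangles of each vertex, the transfer of Lemma \ref{lemma:torus_cover}, and the finiteness/infiniteness of the cross-diameters --- goes through. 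This is exactly the verification the paper performs for its own construction when it checks that $n\cdot k>3$ rules out any triangles other than the intended apex triangles; your proposal names this subtlety but then selects parameters that fall afoul of it.
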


\begin{proof}
Given $n,k \in \sN$ where $k > 3$, the star graph  $\mathrm{Star}_{n,k}$ is constructed as follows. We begin with a cyclic graph of length $n \cdot k$ with nodes  $a_1, \dots, a_{n \cdot k}$ where $a_i \sim a_j \Leftrightarrow i - j = \pm 1 \mod{n\cdot k}$. Secondly, we add $k$ additional nodes, denoted as $b_1, \dots, b_k$, and connect them to the existing graph via $b_i \sim a_{n \cdot i}$ and  $b_i \sim a_{n \cdot i + 1}$, with all index computations carried out modulo $n \cdot k$. We consider a pair of graphs $\gG$ and $\gG'$ defined by $\gG = \mathrm{Star}_{n,2k}$ and $\gG' = \mathrm{Star}_{n,k} \sqcup \mathrm{Star}_{n,k}$ where $\sqcup$ represents disjoint union. See Figure \ref{fig:triangilar_lifting_ccs} for an illustration of the case $k=3, n=2$. 

\begin{figure}[htbp]
    \centering
    \begin{minipage}{0.45\textwidth}
        \centering
        \includegraphics[width=\textwidth]{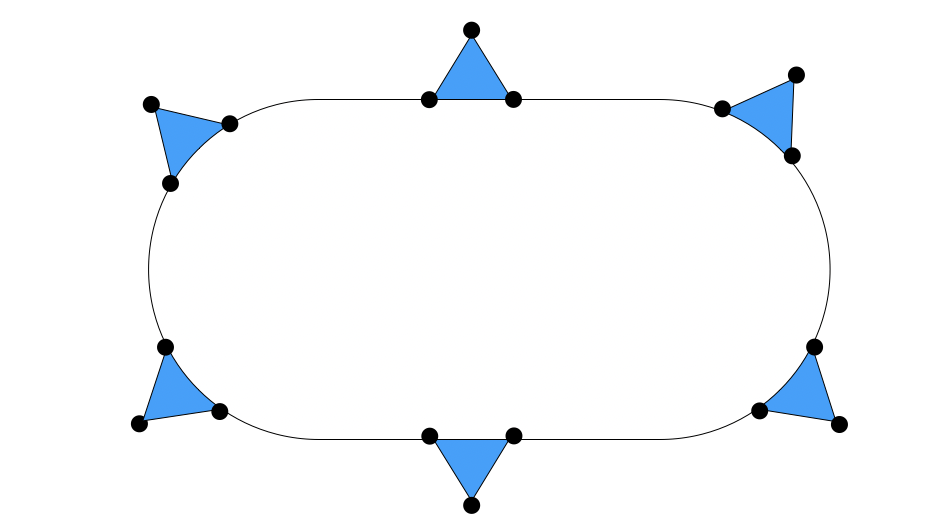}
        $\gX$
    \end{minipage}
    \hspace{0.04\textwidth} %
    \begin{minipage}{0.45\textwidth}
        \centering
        \includegraphics[width=\textwidth]{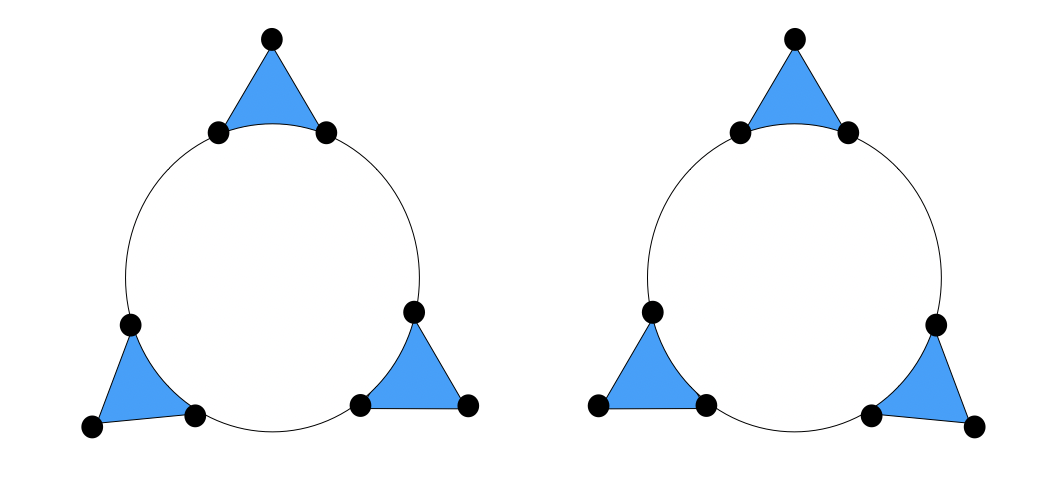}
        \vspace{0.1cm}
        {$\gX'$}
        \label{fig}
    \end{minipage}
    \caption{A pair of indistinguishable CCs produced by triangular lifting. The left-hand CC covers each connected component of the right-hand CC.}
    \label{fig:triangilar_lifting_ccs}
\end{figure}

Since $n\cdot k>3$, the only triangles in $\gG$ and $\gG'$ are of the form $\{ b_i, a_{i\cdot n}, a_{i\cdot n + 1}\}$. Denote the combinatorial complexes constructed from $\gG$ and $\gG'$ by applying triangular lifting as $\gX$ and $\gX'$, respectively. Additionally, denote $3\mathrm{-CL}(\mathrm{Star}_{n,k})$ by $\gX^*$. Since $\gX'$ consists of two disconnected copies of $\gX^*$, and the complexes $\gX'$ and $\gX$ are of equal size, Theorem \ref{thm:covering_formal} implies that in order to show \homp cannot distinguish between $\gX$ and $\gX'$, it suffices to show that $\gX$ is a cover of $\gX^*$. Letting $S$ ad $S^*$ be the node sets corresponding to CCs $\gX, \gX^*$, we construct a covering map $\rho:S \rightarrow S^*$  defined by:
 \begin{equation}
     \begin{aligned}
         \rho(a_i) &= a'_{i \text{ mod } n\cdot k}\\
         \rho(b_i) &= b'_{i \text{ mod }  k}.
     \end{aligned}
 \end{equation}
 Note that $\rho$ induces a map from $\mathcal{P}(S)$ to $\mathcal{P}(S^*)$ where  $\mathcal{P}(\cdot)$ denotes the power set. We abuse notation and refer to this function by $\rho$ as well. We notice that $\rho$ is surjective, and that for any pair of nodes $u, v$ in graph $\gG$ we have:
  \begin{equation}
      u \sim_G v \Rightarrow \rho(u) \sim_{G^*} \rho(v).
  \end{equation}
  This implies that $\rho$ preserves triangles as well. In addition, since $n \cdot k > 3$  $\rho$ is locally injective. Thus, $\rho$ is a covering map, and $\gX$ and $\gX'$ are indistinguisable by \homp. Finally, it is evident that $\text{diam}^2_{\mathcal{A}_{0,1}}(\gX') = \infty$ since it consists of two disjoint connected components, each containing a non-empty set of nodes (0-cells) and triangles (2-cells). Conversely, $\text{diam}^2_{\mathcal{A}_{0,1}}(\gX) < \infty$ because it consists of a single connected component.
\end{proof}

\paragraph{Pooling.} For the pooling example we focus on the Mapper algorithm \cite{singh2007topological, hajij2018mog, dey2016multiscale}, a topology preserving pooling algorithm which was previously used in combination with \homp in \cite{hajij2022topological}. We now define \textit{mapper on graphs} (MOG), a pooling procedure that takes a general graph as input and produces a 2-dimensional combinatorial complex.

\vspace{5pt}
\begin{definition}[Mapper on graphs]
\label{def:Mapper}
    Let $\gG = (\gV, \gE)$ be a graph, $g: \gV \to \sR$ be a node function, and $\gU = \{U_\alpha\}_{\alpha \in I}$ an open covering of $\sR$. The MOG pooling of the graph, $\mathrm{MOG}(\gG)=(S, \gX, \rank)$  is given by the following consturction.
    \begin{enumerate}
        \item Compute the pull-back cover $g^*(\gU) = \{g^{-1}(U_\alpha)\}_{\alpha \in I}$.
        \item Construct $\gV_{\text{MOG}}$ to be the set connected components of the sub-graphs induced by $g^*(\gU)$.
        \item Construct the pooled CC to be $(S, \gX, \rank)$ with nodes $S = \gV$, cells $\gX = \gV \cup \gE \cup \gV_{\text{MOG}}$, and rank 
        \begin{equation*}
            \rank(x) =
                \left\{
                	\begin{array}{ll}
                	  0  & x \in \gV \\
                		1  & x \in \gE \\
                        2  & x \in \gV_{\text{MOG}}.
                	\end{array}
                \right.
        \end{equation*}
    \end{enumerate}
\end{definition}
In the context of shape detection, a natural choice for $g$ is the average shortest-path distance $g(v) = \frac{1}{|\gV|}\sum_{u \in V} d(u, v)$, as it only depends on the graph structure and is thus invariant to geometric transformations of the features. As for the covering $\gU$, a natural choice is $\{(\eta \cdot i, \eta \cdot i + \epsilon)\}_{i \in \sN}$ for hyper-parameters $\eta, \epsilon \in \sR$. We now formally state Proposition \ref{prop:lifting_and_pooling} for MOG pooling.

\vspace{5pt}
\begin{proposition}
\label{prop:inability_to_use_pooling_well}
   There exist pairs of graphs  $\gG$  and  $\gG'$ such that the combinatorial complexes $\gX=  \mathrm{MOG}(\gG)$  and $\gX' = \mathrm{MOG}(\gG')$ are indistinguishable by \homp. This occurs despite the fact that the $(0,1,2)$ cross diameters of the two CCs are different.
\end{proposition}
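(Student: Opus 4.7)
The plan is to mimic the strategy of Proposition \ref{prop:inability_to_use_lifting_well}: exhibit a graph $\gG$ that covers, as a graph, each connected component of a disconnected graph $\gG'$, and then lift this graph cover through the MOG construction by choosing the node function $g$ and the open covering $\gU$ equivariantly with respect to $\rho$.

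Concretely, I would fix integers $k, n$ sufficiently large and pick a base graph $\gG^*$ with $k$-fold rotational symmetry (for instance $\mathrm{Star}_{n, k}$ from Proposition \ref{prop:inability_to_use_lifting_well}, or a cycle $C_{nk}$ with labels repeating every $k$ vertices). Set $\gG' = \gG^* \sqcup \gG^*$ and let $\gG$ be the ``doubled'' graph with $2k$-fold symmetry (e.g.\ $\mathrm{Star}_{n, 2k}$ or $C_{2nk}$), so that the natural quotient $\rho: \gG \to \gG^*$ is a graph covering. Choose a node function $g^*: \gV(\gG^*) \to \sR$ that is invariant under a single $k$-step rotation of $\gG^*$ but non-constant within each period (so that the pullback cover produces nontrivial $2$-cells). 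Then set $g = g^* \circ \rho$ on $\gG$, copy $g^*$ onto each component of $\gG'$ to get $g'$, and use the same open cover $\gU$ in both MOG constructions. By construction, $|\gV(\gG)| = |\gV(\gG')| = 2 n k$, so the resulting CCs share the same number of $0$-cells.

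The key step is to verify that $\rho$ induces a CC-covering $\mathrm{MOG}(\gG) \to \mathrm{MOG}(\gG^*)$ in the sense of Definition \ref{def:cc_covering}, since then $\tilde{\gX} := \mathrm{MOG}(\gG)$ covers itself trivially and covers each component of $\mathrm{MOG}(\gG') = \mathrm{MOG}(\gG^*) \sqcup \mathrm{MOG}(\gG^*)$ via $\rho$, and Theorem \ref{thm:covering_formal} yields HOMP-indistinguishability. Preservation of the $0$- and $1$-skeleta is immediate from $\rho$ being a graph covering. For $2$-cells, the identity $g = g^* \circ \rho$ gives $g^{-1}(U_\alpha) = \rho^{-1}((g^*)^{-1}(U_\alpha))$ for each $U_\alpha \in \gU$, so $\rho$ maps each connected component of $g^{-1}(U_\alpha)$ in $\gG$ onto a connected component of $(g^*)^{-1}(U_\alpha)$ in $\gG^*$. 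Provided $k$ is chosen strictly larger than the maximum diameter (inside $\gG^*$) of any connected component of $(g^*)^{-1}(U_\alpha)$, the preimage of each such component in $\gG$ splits into exactly two disjoint components, so $\rho$ is locally bijective on the incidence and (co)adjacency neighborhoods $\gB_{0, 2}, \gB_{2, 0}^\top, \gB_{1, 2}, \gB_{2, 1}^\top, \gA_{2, \cdot}, \mathrm{co}\gA_{2, \cdot}$.

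The cross-diameter discrepancy is then immediate: $\mathrm{MOG}(\gG')$ is disconnected with both $0$-cells and $2$-cells in each of its two components, so choosing a $0$-cell in one component and a $2$-cell in the other gives $\mathrm{diam}^2_{\gA_{0, 1}}(\mathrm{MOG}(\gG')) = \infty$; whereas $\mathrm{MOG}(\gG)$ is connected and finite, yielding a finite $(0, 1, 2)$ cross-diameter. The main technical obstacle is calibrating the triple $(g^*, \gU, k)$ so that simultaneously (i) the pullback cover yields nontrivial $2$-cells and (ii) the preimage under $\rho$ of every component of $(g^*)^{-1}(U_\alpha)$ splits into two disjoint components in $\gG$ rather than gluing into one (which would destroy the local-isomorphism property at $2$-cells). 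This is purely a combinatorial sizing argument on the period $k$ relative to the scale of $\gU$, and carrying it out cleanly is essentially the only nontrivial part of the construction.
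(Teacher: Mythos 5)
Your overall strategy --- produce two MOG-derived complexes with a common cover and invoke Theorem \ref{thm:covering_formal}, then read off the cross-diameter discrepancy --- is the same high-level route the paper takes, but the way you instantiate it diverges in a way that creates a genuine gap. The paper does \emph{not} try to make $\mathrm{MOG}$ commute with a graph covering $\rho:\gG\to\gG^*$. It takes two small \emph{connected} graphs, uses the canonical node function $g$ (average shortest-path distance, which is constant on automorphism orbits of each graph separately), argues that a sufficiently fine cover $\gU$ therefore yields a predictable set of $2$-cells in each complex, and then exhibits a third, hand-built complex $\tilde{\gX}$ (not itself a MOG output) covering both; the cross-diameters are $3$ and $2$, both finite. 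It then scales the example up by taking Cartesian products with cycles.

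The gap in your version is the choice of $g$. You set $g = g^*\circ\rho$ on the cover and copy $g^*$ onto each component of the disconnected $\gG'$. This makes the pullback covers match by fiat, but it means the node function is chosen \emph{per graph, with knowledge of the covering map} --- it is no longer a single pooling rule applied to both inputs, which is what Proposition \ref{prop:lifting_and_pooling} ("standard lifting and pooling methods") is about. If you instead use the canonical average-SPD function, your construction breaks in two places: (i) on the disconnected $\gG'$ the average SPD is infinite (or must be redefined per component, again a bespoke choice); and (ii) even per component, the numeric values of average SPD on $\mathrm{Star}_{n,2k}$ differ from those on $\mathrm{Star}_{n,k}$, so a \emph{fixed} cover $\gU$ can group the automorphism orbits into different $U_\alpha$'s in the two graphs, producing structurally different $2$-cells and destroying the local isomorphism on $\gB_{0,2}$, $\gB_{2,0}^\top$, etc. On top of this, the calibration of $(g^*,\gU,k)$ that guarantees both nontrivial $2$-cells and the two-sheeted splitting of their preimages is exactly the step you acknowledge as "the only nontrivial part" and leave undone. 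To repair the argument you would either have to justify covering-aware choices of $g$ as legitimate (which weakens the claim), or switch to the paper's tactic: compare two connected complexes, each built with the canonical $g$, to a third common cover constructed directly.
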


\begin{figure}[htbp]
    \centering
    \includegraphics[width=0.8\linewidth]{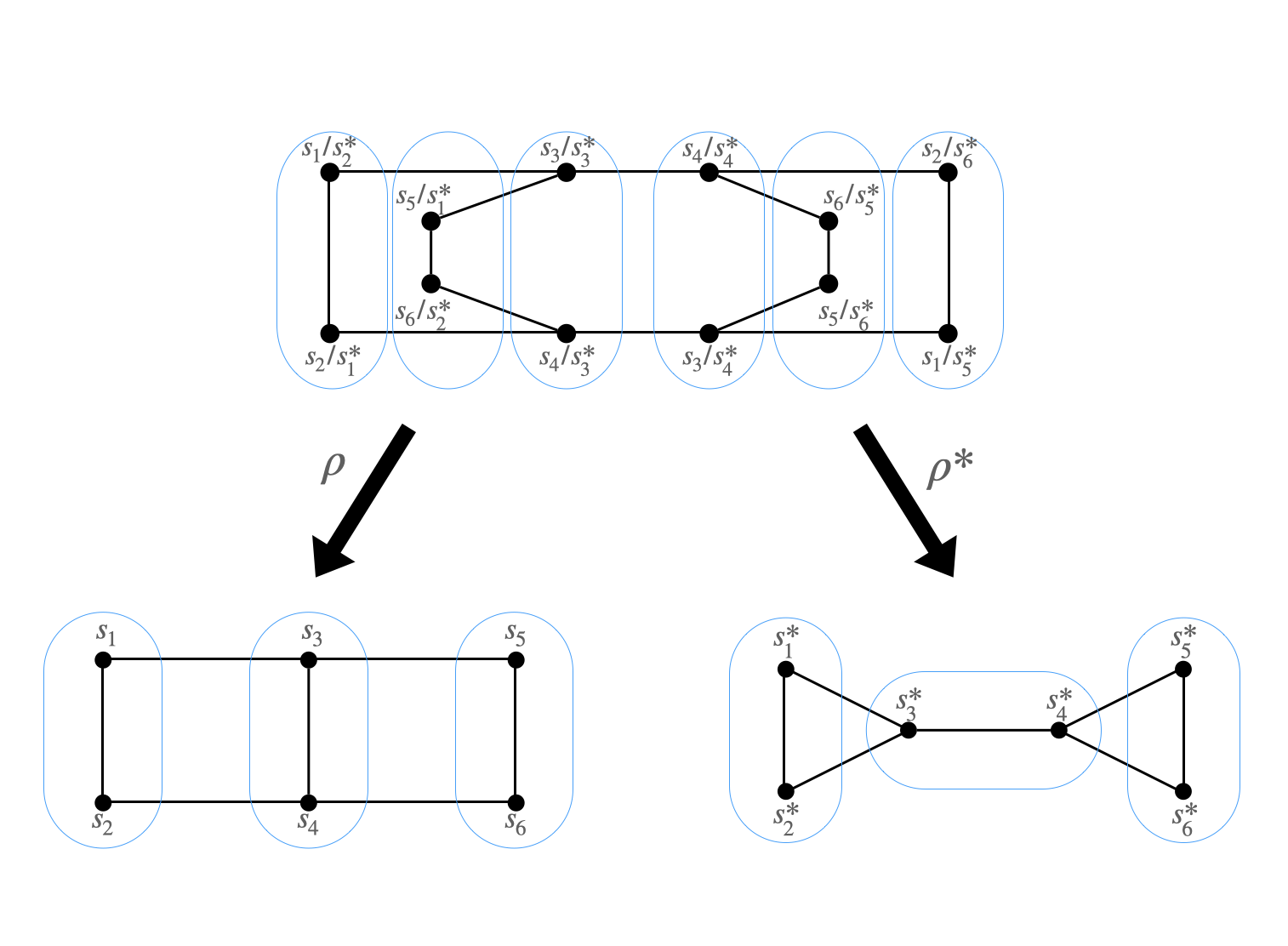}
    \caption{Two combinatorial complexes constructed by MOG pooling and their common cover. The two cells constructed by the MOG algorithm are marked in blue and the covering map is defined based on the node labels.}\label{fig:mog_pooling}
\end{figure}

\begin{proof}
   We begin with a base example. Let \( \gG \) and  \( \gG' \)  be the two graphs depicted at the bottom of Figure \ref{fig:mog_pooling} before applying the MOG lifting procedure, and denote their node sets by  $S = \{s_1, \dots, s_6\}$  and  $S' = \{s'_1, \dots, s'_6\}$ ,  with the order as shown in the Figure. Note that the sets $P_1 = \{s_1, s_2, s_5, s_6\}$ $P_2 = \{ s_3, s_4\}$ form a partition of $S$, and nodes within the same partition set are automorphic (i.e., there exists a graph automorphism mapping one node to the other). The same holds for $P'_1 = \{s'_1, s'_2, s'_5, s'_6\}$ $P'_2 = \{ s'_3, s'_4\}$. 
   Thus, the function  $g$, defined as the average shortest path distance (SPD) of each node, is constant on each of these sets. This implies that by choosing a sufficiently fine covering (i.e. selecting a small enough $\eta$), the 2-cells defined by the MOG algorithm for graphs $\gG$ and $\gG'$ will be $x_1 = \{ s_1, s_2\}, x_2 =  \{s_3, s_4\}, x_3 = \{s_5, s_6 \}$ and $x'_1 = \{ s'_1, s'_2\}, x'_2 =  \{s'_3, s'_4\}, x'_3 = \{s'_5, s'_6 \}$ respectively (We split sets $P_1$ and $P'_1$ to their connected components).  Defining $\gX = \mathrm{MOG}(\gG)$ and $\gX'= \mathrm{MOG}(\gG')$, we now aim to show that \homp cannot distinguish between these two CCs. 
    Figure \ref{fig:mog_pooling} depicts $\gX, \gX'$ and an additional combinatorial complex $\tilde{\gX}$  which covers both of these complexes. Since $\gX$ and $\gX'$ are connected and of the same size, Theorem \ref{thm:covering_formal} shows that these complexes are indistinguishable by \homp. Another quick calculation shows that $\mathrm{diam}^{2}_{\gA_{0,1}}(\gX) = 3$ while $\mathrm{diam}^2_{\gA_{0,1}}(\gX') = 2$ concluding the base example.

   To demonstrate that this phenomenon occurs in larger graphs where the MOG procedure produces a small number of 2-cells, we expand upon the aforementioned example. For each integer $n \geq 3$ , let $\text{Cyc}(n)$ denote the cyclic graph of length  $n$ with the node set  $\gV_n = \{v_1, \dots, v_{n}\}$, where  $v_i \sim v_j$  if and only if $i - j = \pm 1 \mod{n} $. Define $ \gG_n = \gG \times \text{Cyc}(n) $, $\gG_n' = \gG' \times \text{Cyc}(n) $, , where $\times$ denotes the Cartesian graph product. We now demonstrate that the proof above remains valid for  $\gG_n$  and $\gG'_n$ for any  $n \geq 3$ .  First, defining $P_{1, n} = P_1 \times \gV_n$, $P_{2, n} = P_2 \times \gV_n$ where here $\times$ denotes set cartesian product, we notice that since all nodes in $P_i$ are $\gG$ isomorphic, and all nodes in $\gV_n$ are $\text{Cyc}(n)$ isomorphic, all nodes in $P_{i,n}$ are $\gG_n$ isomorphic. The same holds for $P'_{1, n} = P'_1 \times \gV_n$, $P'_{2, n} = P'_2 \times \gV_n$. Thus, again, the function $g$ , defined as the average shortest path distance of each node, is constant on each of these sets. This implies that by choosing a sufficiently fine covering, the 2-cells defined by the MOG algorithm for graphs $\gG_n$ and $\gG'_n$ will be $x_{i,n} = x_i \times \gV_n$ and $x'_{i,n} = x'_i \times V_n$ respectively for all $i \in [3]$. Defining $\gX_n = \text{MOG}(\gG_n)$, $\gX'_n = \text{MOG}(\gG'_n)$  we now aim to show that \homp cannot distinguish between these two CCs. 
   
   We define $\tilde{\gG}$ to be the 1-skeleton of $\tilde{\gX}$, $\tilde{\gG}_n = \tilde{\gG} \times \text{Cyc}(n)$ and $\tilde{\gX}_n$ to be the CC whose 1-skeleton is $\tilde{\gG}_n$ and whose 2-cells are: $\{x \times \gV_n \mid x \in \tilde{\gX}_2\}$. Let $\rho:\tilde{S} \rightarrow S$, $\rho':\tilde{S} \rightarrow S'$ be the covering maps from $\tilde{\gX}$ to $\gX$ and $\gX'$ respectively, as depicted in Figure \ref{fig:mog_pooling}. Define $\rho_n:\tilde{S}\times \gV_n \rightarrow S \times \gV_n$ by $\rho_n(s,v)= (\rho(s), v)$ and $\rho'_n:\tilde{S}\times \gV_n \rightarrow S' \times \gV_n$ by $\rho'_n(s,v) = (\rho'(s), v)$. By the definitions of $\gX_n, \gX'_n$ and $\tilde{\gX}_n$, these two maps are covering maps from $\tilde{\gX}_n$ to $\gX_n, \gX'_n$ respectively. Since  $\gX_n, \gX'_n$ are connected and are of the same size Theorem \ref{thm:covering_formal} implies they are indistinguishable by \homp. Secondly Since $\gG_n$ is a graph cartesian product, for every $(s, v), (s', v') \in S\times 'gV_n$ we have:
 \begin{equation}
     d_{\gG_n}\left((s, v), (s', v') \right) = d_{\gG}(s, s') + d_{\text{Cyc}(n)}(v, v'). 
 \end{equation}
 Thus, it is easy to check that:
 $\text{diam}^2_{\mathcal{A}_{0,1}}(\gX_n) = \text{diam}^2_{\mathcal{A}_{0,1}}(\gX) = 3$. The same reasoning shows that $\text{diam}^2_{\mathcal{A}_{0,1}}(\gX'_n) =\text{diam}^2_{\mathcal{A}_{0,1}}(\gX')= 2$ concluding the proof.
\end{proof}

\section{Multi-cellular Networks}\label{apx:mcn}
In this section we motivate and formally define \ourmethod, expanding the discussion in Section \ref{sec:mcn}. We rigorously define both the equivariant linear updates and the general tensor diagram forward pass. 

\rebuttal{\paragraph{Multi-cellular cochain spaces} As discussed in Section \ref{sec:mcn}, given an $\ell$-dimensional CC $\gX$ and an $(\ell + 1)$-tuple $\vk \in \sN^{\ell + 1}$, the space of $\vk$-multi-cellular cochains is defied by:

\begin{equation}
    \gC^\vk(\gX, \sR^d) = \{ \rvh_{\vk}  \mid \rvh_{\vk}: \gX_0^{k_0} \times \cdots \times \gX_\ell^{k_\ell}\to \sR^d \}. 
\end{equation}

 Multi-cellular cochain spaces are a natural generalization of standard cochain spaces, providing a way to represent diverse types of CC data. For example, when $\vk = \ve_i$, we get that $\gC^{\ve_i}(\gX, \sR^d)$ is the space of function  $\rvh_{\ve_i}: \gX_i \to \sR^d$, i.e. $\gC^\vk(\gX, \sR^d)$ is the space of all possible $i$-rank cell features (i.e. $\gC^{\ve_i} \cong \gC^i$ ). In addition, for any pair of integers $r_1, r_2$, the incidence neighborhood function $\gB_{r_1,r_2}$ can be encoded as the map $\rvh: \gX_{r_1} \times \gX_{r_2} \to \sR^{d}$ defined by:
 \begin{equation}
     \rvh(x,y) = \begin{cases}
         1 & y \in \gB_{r_1,r_2}(x)\\
         0 & \text{otherwise.}
     \end{cases}
 \end{equation}

 Thus $\gB_{r_1,r_2}$ can be regarded as an element of the space $\gC^{\ve_{r_1}+ \ve_{r_2}}$. Finally, neighborhood functions of the type $(\mathrm{co})\gA_{r_1, r_2}$ can be similarly encoded as the map $\rvh: \gX_{r_1}^2 \to \sR^{d}$ defined by:
 \begin{equation}
     \rvh(x,y) = \begin{cases}
         1 & y \in (\mathrm{co})\gA_{r_1,r_2}(x)\\
         0 & \text{otherwise.}
     \end{cases}
 \end{equation}
 Thus $(co)\gA_{r_1,r_2}$ can be regarded as an element of the space $\gC^{2 \cdot \ve_{r_1}}$.
 
 Multi-cellular cochain spaces recover many linear spaces studied in several previous works. For example, $\gC^{k \ve_r}$ matches the features space of a $k$-IGN layer operating on $r$-cells, and $\gC^{\ve_{r_1} + \ve_{r_2}}$ corresponds to the input space of the exchangeable matrix layers introduced in \cite{hartford2018deep}.}

\paragraph{Equivariant linear maps between multi-cellular cochain spaces.}Let $\gX$ be a combinatorial complex. We define $n_r = |\gX_r|$, representing the size of $\gX_r$. For a tuple $\vk = (k_0, \dots, k_\ell)$, we define the product space $\gX^{\vk} = \gX_0^{k_0} \times \cdots \times \gX_\ell^{k_\ell}$. The group $S_{n_0} \times \cdots \times S_{n_\ell}$ is denoted by $G$. We aim to find a basis for the space of equivariant linear layers $L:\gC^{\vk}(\gX, \mathbb{R}^d) \rightarrow \gC^{\vk'}(\gX, \mathbb{R}^{d'}) $ for each pair of tuples $\vk, \vk'$. Since the space $\gC^\vk$ can be identified with $\sR^{n_0^{k_0} \times \dots \times n_\ell^{k_\ell} \times d}$, the space of linear maps between the two can be considered as the space of matrices:
\begin{equation}
    \gC^{\vk} \otimes \gC^{\vk'} = \mathbb{R}^{n_0^{k_0} \times \dots \times n_\ell^{k_\ell} \times d \times _0^{k'_0} \times \dots \times n_\ell^{k'_\ell} \times d'}.
\end{equation}
where we use $\otimes$ to denote the tensor product of vector spaces. The index set corresponding to this space of matrices is $\gX^{\vk} \times [d] \times \gX^{\vk'} \times [d']$. 

Notice that the group $G$ acts naturally on $\gX^{\vk} \times \gX^{\vk'}$. For each $G$-orbit $\gamma$ and integers \(j_1 \in [d_1]\), \(j_2 \in [d_2]\), we define a matrix $\tB^{\gamma, j_1, j_2} \in \mathbb{R}^{n_0^{k_0} \times \dots \times n_\ell^{k_\ell} \times d \times _0^{k'_0} \times \dots \times n_\ell^{k'_\ell} \times d'}$:
\begin{equation}
\tB^{\gamma, j_1, j_2}_{\va, i_1, \vb, i_2} = \begin{cases}
    1 & (\va, \vb) \in \gamma, \quad i_1 = j_1,\quad i_2 = j_2\\
    0  & \text{ otherwise.}
\end{cases}     
\end{equation}
Here $\va \in [n_0]^{k_0} \times \cdots \times [n_l]^{k_l}$, $\vb \in [n_0]^{k'_0} \times \cdots \times [n_l]^{k'_l}$ $i_1 \in [d] $, $i_2 \in [d']$.

If $\rvh \in \gC^{\vk}$ and $\rvh' = \tB^{\gamma, j_1, j_2} \rvh$ we have:
\begin{equation}
    \rvh'(\vb)_j = \begin{cases}
        \sum_{(\va,\vb)\in \gamma} \rvh(\va)_{j_1} & j=j_2\\
        0 & \text{otherwhise}.
    \end{cases}
\end{equation}
where here by abuse of notation $\va, \vb$ were used interchangeably to describe multi-indices and multi-cells. 
 These maps were established as a basis for the space of all equivariant linear functions $ L: \sR^{n_0^{k_0} \times \dots \times n_\ell^{k_\ell} \times d} \to \sR^{n_0^{k'_0} \times \dots \times n_\ell^{k'_\ell} \times d}$  in \cite{maron2018invariant} and thus can be used to characterize the space of equivariant linear layers $L:\gC^{\vk} \rightarrow \gC^{\vk'}$. This framework encompasses many layers from previously studied models. For example, by setting $\vk = \vk' = k \cdot \ve_r$, the space of equivariant linear layers corresponds to the space of $k$-IGN layers, which take as input graphs defined on the $k$-rank cells of the input complexes (i.e. graphs whose node sets are $ \gX_r $). Similarly, with  $\vk = \vk' = \ve_r + \ve_{r'} $ for $ r \neq r' \in \mathbb{N} $, this space aligns with the space of linear maps used to construct the exchangeable matrix layer as described in \cite{hartford2018deep}. 

We can now use this basis to construct learnable equivariant linear layers by taking a parametric linear combination of the basis functions:
\begin{equation}
\label{eq:mcn_equiv_layer_appendix}
    F(\rvh) = \beta \left(\sum_{\gamma, j_1, j_2} w^{\gamma, j_1, j_2}B^{\gamma, j_1, j_2}\rvh \right).
\end{equation}

 To construct \ourmethod, We augment \homp architectures with these equivariant layers. This is embodied by incorporating equivariant linear layers into the tensor diagram scheme. Figure \ref{fig:tensor_diagrams} depicts a \ourmethod tensor diagram. We now describe the components of the \ourmethod scheme.

\paragraph{Diagram.} Similar to \homp tensor diagrams, \ourmethod tensor diagrams are layered directed graphs with labeled nodes and edges. Each node is labeled by a multi-cellular cochain space, extending the class of node labels used in \homp tensor diagrams. Directed edges with source and target nodes labeled by $\gC^{\ve_r}$ can be labeled by any neighborhood function, while edges between nodes labeled by other types of multi-cellular cochain spaces are labeled with the new label ``equiv''. 

\paragraph{Input.} The input to the \ourmethod model is determined by the $0$-th layer of the tensor diagram, whose nodes can be labeled by the following types of multi-cellular cochain spaces: (1) nodes labeled by $\gC^{\ve_r}$ which take the $r$-rank cell features as input; (2) Nodes labeled by $\gC^{\ve_{r_1} + \ve_{r_2}}$ which take the matrix form of the incidence neighborhood $\gB_{r_1,r_2}$ as input; (3) Nodes labeled by $\gC^{2\ve_r}$ which take the matrix form of the (co)adjacency matrices $(\mathrm{co})\gA_{r,r'}$.

\paragraph{Update.} 
At each layer of an \ourmethod tensor diagram, if $v$ is a node labeled by $\gC^\vk$ we compute a multi-cellular cochain $\rvh^{(v)} \in \gC^\vk$ by 

\begin{equation}
    \rvh^{(v)}_\vx = \bigotimes_{u \in \text{pred}(v)} \vm_{u, v}(\vx)
\end{equation}

where $\vx \in \gC^\vk$ and $\text{pred}(v)$ denotes the set of predecessor nodes in the diagram. Here messages $\vm_{u,v} \in \gC^{\vk}$ are computed based on the label of the edge $(u,v)$.  If the edge is labeled by a neighborhood function $\gN$, (in which case $v$ and $u$ are labeled by standard cochain spaces), the message $\vm_{u, v}$ is computed by
\begin{equation}
    \bigoplus_{y \in \gN(x)} \mlp_{u, v}(\rvh^{(u)}_x, \rvh^{(u)}_y).
\end{equation}
Note that this is the exact message used in \homp tensor diagrams. If the label is ``equiv'', the message is computed as described in Equation \ref{eq:mcn_equiv_layer_appendix}. By slight abuse of notation, we often denote the collection of multi-cellular cochains associated with the nodes in layer $t$ by $\rvh^{(t)}$. For all other labels, the message follows the standard tensor diagram update process. The last layer of the tensor diagram contains a single node representing a final readout layer (See Appendix \ref{apx:model_implementation} for implementation details of both the update computation and the possible readout layers used in this paper).

\section{Scalable Multi-cellular Networks} 
\label{apx:smcn}
In this section, we provide an in-depth description of the \secondmethod model presented in Section \ref{sec:smcn}. To construct \secondmethod, We augment \ourmethod diagrams, reducing the number of possible node labels and adding a new ``SCL'' update. Figure \ref{fig:tensor_diagrams} depicts a \secondmethod tensor diagram. We now describe the components of the \ourmethod scheme. For the rest of this section, we borrow the notation scheme of Appendix \ref{apx:mcn}

\paragraph{Diagram.}  Similar to \homp and \secondmethod tensor diagrams, like \ourmethod, are layered directed graphs with labeled nodes and edges. Each node is labeled by a multi-cellular cochain space $\gC^{\vk}$, but unlike \ourmethod we restrict $\vk$ such that $\sum k_r \leq 2$, Thus reducing memory complexity. Like before, directed edges with source and target nodes labeled by $\gC^{\ve_r}$ can be labeled by any neighborhood function, while edges between nodes labeled by other types of multi-cellular cochain spaces can be labeled with the new label ``equiv''.  Additionally, directed edges with source and target nodes labeled by $\gC^{\ve_r + \ve_{r'}}$ can be labeled by `SCL''.

\paragraph{Input.} The input to the \secondmethod model is identical to the input to the \ourmethod model. As we will now show, the matrix form of neighborhood functions $\gB_{r_1, r_2 }$ play a similar role to that of node marking policies in subgraph networks \cite{bevilacqua2021equivariant, frasca2022understanding, zhang2023complete}. Subgraph networks process feature maps of the form $\rvh: \gV \times \gV \to \sR^d$ where $\gV$ is a set of nodes. Node marking policies employ an initial feature map of: 
\begin{equation}
    \rvh^{(0)}(v,u) = \begin{cases}
        1 & u=v\\
        0 & \text{otherwise.}
    \end{cases}
\end{equation}

Similarly, SCL updates process cochains of the form $\rvh: \gX_{r_1} \times \gX_{r_2}  \to \sR^d$, and the matrix form of $\gB_{r_1, r_2 }$ satisfies:
\begin{equation}
    \mB(x,y) = \begin{cases}
        1 & x \subseteq y\\
        0 & \text{otherwise.}
    \end{cases}
\end{equation}

\paragraph{Update.}  The \secondmethod update is computed in the same way as the \ourmethod update is computed (see Appendix \ref{apx:mcn}), where the message of a directed edge $(u, v)$ labeled by ``SCL'' is computed by
\begin{equation}\label{eq:scn_update}
        \vm_{u,v}(x, y) = \bigotimes_{r = 0, r' = 0}^\ell \mlp^{r,r'} \biggr(\rvh^{(t)}_{x, y}, \rvh^{(t)}_{(\mathrm{co})\gA_{r_1, r}(x), y}, \rvh^{(t)}_{x, (\mathrm{co})\gA_{r_2, r'}(y)}, \rvh_{x, \gB_{r_1, r_2}(x)}, \rvh_{\gB_{r_2, r_1}^\top(y), y}\biggr), 
\end{equation}
where if $\gQ_1 \subseteq \gX_{r_1}$ and $\gQ_2 \subseteq \gX_{r_2}$ are sets of cells,  $\rvh_{\gQ_1, y} := \sum_{x' \in \gQ_1} \rvh_{x', y}$ and $\rvh_{x, \gQ_2} := \sum_{y' \in \gQ_2} \rvh_{x, y'}$. The SCL update can be considered an aggregation of subgraph updates on the augmented Hasse graphs induced by the input CC. For each choice of $r, r'$ the compotation inside the aggregation function in Equation \ref{eq:scn_update} is identical to a CS-GNN \citep{bar2024flexible} update on the augmented Hasse graph (see Definition \ref{def:augmented_hasse_graph}) $\gH_{(co)\gA{r_1 ,r}}$ where the set of  ``super-nodes''\footnote{CS-GNN is a subgraph model which uses subsets of nodes to construct the bag of subgraphs. These sets of nodes are termed  ``super-nodes''.} is $\gX_{r_2}$ and the connectivity of the super nodes is given according to the Hasse graph  $\gH_{(co)\gA{r_2 ,r'}}$. More specifially, for $r_1=r_2=0$, $r, r'=1$ we recreate the GNN-SSWL+ \citep{zhang2023complete} update.  

\paragraph{\rebuttal{Computational complexity.}}
\rebuttal{
    The computational complexity of \secondmethod depends on the choice of multi-cellular cochains spaces in the tensor diagram. For $\gC^{\ve_{r_1} + \ve_{r_2}} \to \gC^{\ve_{r_1} + \ve_{r_2}}$ updates the worst-case computational complexity for the most general model is $\gO\left( \ell^2 \cdot d \cdot n_{r_1} \cdot n_{r_2}\right)$, where $d$ is the maximal degree w.r.t any neighborhood function. In our experiments, we use tensor diagrams containing a single type of multi-cellular cochains spaces resulting in models with a runtime complexity of $\gO(d \cdot n_0 \cdot n_1)$ and  $\gO(d \cdot n_0 \cdot n_2)$. As we demonstrate in Proposition \ref{prop:second_method_pooling_lifting_informal}, \secondmethod architectures with runtime $\gO(d \cdot n_0 \cdot n_2)$ are still strictly more expressive than \homp. This is useful, since in most natural cases $n_2 \ll n_0$. This allows for flexibility in trading off computational complexity and expressive power.
}

\section{\ourmethod Expressive Power}\label{apx:expressivity_of_ourmethod}

In this section, we analyze the expressive power of \ourmethod defined in Section \ref{sec:smcn}. We begin by formally defining CC isomorphism, as described in \cite{hajij2022topological}.  

\vspace{5pt}
\begin{definition}[CC isomorphism]\label{def:cc_iso}
    A pair of CCs $(S, \gX, \rank)$, $(S', \gX', \rank')$ are isomorphic if there exists a bijective map $\rho :\gX \to \gX'$ such that:
    \begin{enumerate}
        \item $\rank(x) = \rank'(\rho(x))  \quad \forall x \in \gX$, 
        \item $x \subseteq y \Rightarrow \rho(x) \subseteq \rho(y) \quad \forall x,y \in \gX$ . 
    \end{enumerate}
\end{definition}

Is there exists an isomorphism $\rho: \gX \to \gX'$ we say that $\gX$ and $\gX'$ are \emph{isomorphic}; if such an isomorphism does not exist, we say that $\gX$ and $\gX'$ are \emph{non-isomorphic}. 

\vspace{5pt}
\begin{proposition}[\ourmethod expressive power]
    If $\gX$ and $\gX'$ are non-isomorphic there exists an \ourmethod model $\M$ such that
    \begin{equation}
        \M(\gX) \neq \M(\gX').
    \end{equation}
\end{proposition}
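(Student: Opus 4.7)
The plan is to establish full expressivity through an analogue of the $k$-IGN universality result of Maron et al.\ (2019), adapted to multi-cellular cochain spaces under the product group $G = S_{n_0} \times \cdots \times S_{n_\ell}$, which acts by independent coordinate permutations. The central observation is that a CC $\gX$ is determined up to isomorphism by its collection of incidence matrices $\{\gB_{r_1, r_2}\}_{0 \leq r_1, r_2 \leq \ell}$, which are precisely the inputs an MCN receives in the cochain slots $\gC^{\ve_{r_1} + \ve_{r_2}}$. By Definition \ref{def:cc_iso}, two CCs are isomorphic if and only if some single element $\boldsymbol{\sigma} \in G$ simultaneously transforms one collection of incidence matrices into the other.

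First, I would use MCN's equivariant linear layers to assemble the inputs into a single high-order master cochain $\rvh^\star \in \gC^{\vk^\star}$ with $\vk^\star = (n_0, \ldots, n_\ell)$, storing each incidence matrix in distinct channels via broadcast-type maps drawn from the equivariant basis of Maron et al.\ (2018). Distinct $G$-orbits of $\rvh^\star$ then correspond exactly to non-isomorphic CCs. The goal reduces to constructing an MCN that evaluates a $G$-invariant function of $\rvh^\star$ separating the orbits of $\rvh^\star_\gX$ and $\rvh^\star_{\gX'}$. I would then establish a universality claim: an MCN equipped with cochain spaces up to order $\vk^\star$ can approximate any continuous $G$-invariant function on $\gC^{\vk^\star}$ arbitrarily well on any compact set. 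By Stone--Weierstrass it suffices to realize $G$-invariant polynomials, which can be assembled from tensor products (simulated via elementwise products applied through pointwise MLPs after suitable broadcasting) and contractions (via equivariant linear layers), closing with an invariant pooling step. Since $\gX \not\cong \gX'$ forces $\rvh^\star_\gX$ and $\rvh^\star_{\gX'}$ into distinct $G$-orbits, some continuous $G$-invariant function separates them (for instance, a smooth bump supported near one orbit), and the universality claim then supplies a concrete MCN $\M$ with $\M(\gX) \neq \M(\gX')$.

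The main obstacle is the universality step for the product group. Maron et al.\ (2019) establishes universality for $S_n$-IGNs using the classical invariant theory of the symmetric group, and two extensions are required. First, one must show that polynomials on $\gC^{\vk^\star}$ invariant under the product action $G = S_{n_0} \times \cdots \times S_{n_\ell}$ are generated by products of symmetrized monomials, one per coordinate group; this follows readily from the direct-product structure, since each factor acts independently on its own index. Second, one must verify that MCN's allowed equivariant transitions up to order $\vk^\star$---described by the orbit-indexed basis of Maron et al.\ (2018)---support the broadcasts and contractions needed to realize these monomials on multi-cellular tensors. Both extensions are largely mechanical, but this bookkeeping forms the technical core of the argument and is where care is most needed to keep the product-group symmetry intact throughout the construction.
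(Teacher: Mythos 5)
Your plan is sound and would yield the result, but it takes a genuinely different route from the paper. The paper does not prove any universality statement: it passes to the Hasse graph of the complex (nodes are cells, edges are rank-one inclusions), cites that CC isomorphism is equivalent to Hasse-graph isomorphism, invokes the known facts that non-isomorphic $n$-vertex graphs are separated by $n$-WL and that $k$-IGNs match $k$-WL, and then observes that because $G = S_{n_0}\times\cdots\times S_{n_\ell}$ is a subgroup of $S_n$, every $S_n$-equivariant linear layer of an $n$-IGN on the Hasse graph is in particular $G$-equivariant and hence realizable by \ourmethod layers on a suitable multi-cellular cochain space. You instead keep the product group throughout, encode the complex as a single high-order $G$-tensor built from the incidence matrices, and argue separation via Stone--Weierstrass and universality of $G$-invariant networks. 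Your route proves more (universal approximation of invariant functions, not just orbit separation) at the cost of re-establishing a universality theorem; the paper's route is shorter because it outsources all the hard work to existing graph results. Both ultimately require tensors whose order grows with the number of cells, so neither is more economical in that respect. Note also that your reduction of isomorphism to a common $G$-orbit of incidence matrices implicitly assumes $|\gX_r|=|\gX'_r|$ for all $r$; the complementary case is trivial but should be dispatched explicitly.

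One claim in your final paragraph is stated incorrectly and should be repaired before you rely on it: invariant polynomials for a product group acting on a \emph{tensor product} of representations are not generated by products of per-factor symmetrized monomials (e.g., for $S_n\times S_m$ acting on $\sR^{n\times m}$, the invariant $\sum_i\bigl(\sum_j x_{ij}\bigr)^2$ is not of that form). The correct statement is that a spanning set of degree-$d$ invariants is indexed by $G$-orbits of multi-indices, which is precisely the orbit basis of \citet{maron2018invariant} that \ourmethod's equivariant layers already implement; alternatively, universality for arbitrary permutation subgroups of $S_n$, including $G$, is available off the shelf from the invariant-networks universality literature. With that substitution your argument closes.
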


\begin{proof}
    First, let $\gH = (\gV, \gE)$ and $\gH'= (\gV', \gE')$ be the Hasse graphs of $\gX$ and $\gX^*$ respectively, defined by
    \begin{gather}
        \gV = \gX, \\
        \gV' = \gX', \\ 
        \gE = \{ (x,y) \in \gX \times \gX \mid x \subseteq y, \rank(x) = \rank(y)-1 \}, \\
        \gE' = \{ (x',y') \in \gX' \times \gX' \mid x' \subseteq y', \rank'(x') = \rank'(y')-1 \}.
    \end{gather}
    It was shown in \cite{hajij2022topological} that a pair of CCs is isomorphic if and only if their corresponding Hasse graphs are isomorphic. Therefore, in our case, $\gH$ and $\gH'$ are non-isomorphic graphs. Since any pair of non-isomorphic graphs of size $n$ are $n$-WL distinguishable, and  $k$-IGN networks can distinguish between any pair of $k$-WL indistinguishable graphs (see \citet{maron2019provably}), it is enough to prove that there exists a \ourmethod model which is able to simulate any $k$-IGN network on the Hasse graphs. Let $\mA$ be the adjacency matrix of $\gH$ and define $n = |\gX|$, $n_r = |\gX_r|$ for all $r \in \{0, \dots, \ell \}$. $\mA$ can be decomposed into block matrices $\mA^{r_1,r_2}$ for $r_1, r_2 \in \{0, \dots, \ell \}$ defined by:
    \begin{equation}
        \mA^{r_1,r_2} = \begin{cases}
            \mathbf{0}_{n_i \times n_j} & r_1 \neq r_2+1\\
            \mB_{r_1,r_2} & r_1 = r_2+1, 
        \end{cases}
    \end{equation}
    where $\mB_{r_1,r_2}$ is the matrix form of neighborhood function $\gB_{r_1,r_2}$. matrices $\mA^{r_1,r_2}$ can be view as a multi-cellular cochains $\rvh^{r_1,r_2} \in \gC^{\ve_{r_1}+\ve_{r_2}}(\gX, \mathbb{R})$ so $\mA$ can be realized as an element of $\gQ := \bigtimes_{r_1 = 0, r_2 = 0}^\ell \gC^{\ve_{r_1}+\ve_{r_2}}(\gX, \sR)$. Recall that all neighborhood matrices $\mB_{r_1,r_2}$ are given as input to the \ourmethod model and so we can recover $\mA$. To show that \ourmethod can simulate any $k$-IGN update on $\mA$, we need to show that it can compute $L(\mA)$ for any $S_n$-equivariant linear function $L: \gQ^{\otimes k} \to \gQ^{\otimes k'}$, where $\gQ^{\otimes k}$ represents taking the tensor product of $\gQ$ with itself $k$ times. Let $G < S_n$ be the subgroup of permutations preserving the subsets $\{1, \dots, n_0\}$, $\{n_0 + 1, \dots, n_0 + n_1\}$, $\dots, \{n_0+\cdots+n_{\ell-1} + 1, \dots, n_0 + \dots + n_\ell \}$; $G \cong S_{n_0} \times \dots \times S_{n_\ell} \subseteq [n]$. Since $G$ is a subgroup of $S_n$, all $S_n$ equivariant linear maps are also $G$-equivariant. Thus it is enough to show that we are can compute $L(\rvh)$ for all $G$-equivariant linear maps $L: \gQ^{\otimes k} \to \gQ^{\otimes k'}$. 
    
    The space $\gQ = \bigtimes_{r_1 = 0 ,r_2 = 0}^\ell \gC^{\ve_{r_1}+\ve_{r_2}}(\gX, \sR)$ can be embedded into the multi-cellular cochain space $\gC^{\vone_{\ell+1}}(\gX, \sR^{{(\ell+1)}^2})$ via the following map:
    \begin{equation}
        T(\rvh)(x_0, \dots x_\ell) =  \bcat_{r_1 = 0,r_2 = 0}^\ell \rvh^{r_1,r_2}(x_{r_1}, x_{r_2}),
    \end{equation}
    where $\mathop{\mathbin\Vert}$ stands for concatenation,  $\mathbf{1}_{\ell+1} = (1,\dots, 1) \in \sR^{\ell+1}$ is the all ones vector, $x_r \in \gX_r$ is a cell of rank $r$ and 
    $\rvh \in \gQ$ composed of the multi-cellular cochains $\rvh^{r_1,r_2} \in \gC^{\ve_{r_1}+\ve_{r_2}}(\gX, \sR)$. \ourmethod can use any linear function $L: \gC^{k \cdot \vone_{\ell+1}}(\gX, \sR^{{(\ell+1)}^2}) \to \gC^{k' \cdot \vone_{\ell+1}}(\gX, \sR^{{(\ell+1)}^2})$ which is $G$-equivariant, and so it can compute $L(\rvh)$ for all linear maps as defined above, concluding the proof.
\end{proof}

\section{\secondmethod Expressive Power}\label{apx:expressivity_of_secondmethod}
\subsection{Topological and Metric Properties}

In this section, we formally demonstrate the \secondmethod's ability to mitigate many of the expressive limitations demonstrated in Appendix \ref{apx:limitations}. We begin by providing a useful lemma that allows us to leverage several expressivity results from the subgraph GNN literature in our setting. We then provide an in-depth discussion on the ability of \secondmethod to express each one of the four aforementioned metric/topological properties: diameter, orientability, planarity, and homology.  

\vspace{5pt}
\begin{lemma}\label{lemma:secondmethod_can_implement_cs_gnn}
    For any CS-GNN \citep{bar2024flexible} model $\M$ operating on the Hasse graph $\gH_{(co)\gA_{r_1, r_2}}$ using cells of rank $r \geq r_1$ as super-nodes, there exits an \secondmethod model $\M'$, such that for any CC $\gX$ of dimension $\geq r_1, r_2, r$, $\M(\gH_{(co)\gA_{r_1, r_2}}) = \M'(\gX)$. 
\end{lemma}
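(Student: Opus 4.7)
The plan is to construct $\M'$ as an SMCN tensor diagram whose main working space is $\gC^{\ve_{r_1} + \ve_r}(\gX, \sR^d)$. A cochain $\rvh \in \gC^{\ve_{r_1} + \ve_r}$ is a function on $\gX_{r_1} \times \gX_r$; fixing the second coordinate yields a bag of $|\gX_r|$ graph signals on the node set $\gX_{r_1}$, one per super-node $y \in \gX_r$. This is precisely the bag-of-marked-subgraphs representation that CS-GNN processes on $\gH_{(\mathrm{co})\gA_{r_1, r_2}}$.

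For initialization, I would form $\rvh^{(0)}_{x, y}$ by concatenating the raw $r_1$-cell features (broadcast along the $y$-axis via an equivariant map $\gC^{\ve_{r_1}} \to \gC^{\ve_{r_1} + \ve_r}$) with the super-node marking $(\gB_{r_1, r}^\top)_{x,y}$, which indicates whether $x \subseteq y$. Since $r \geq r_1$, the incidence $\gB_{r_1, r}$ is well defined and is provided as part of the SMCN input as an element of $\gC^{\ve_{r_1} + \ve_r}$. This matches the CS-GNN initialization, in which each subgraph indexed by $y$ carries the base node features plus a marking of the nodes belonging to $y$.

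For the inductive step, I would show that a single SCL layer on $\gC^{\ve_{r_1} + \ve_r}$ can implement any CS-GNN layer on $\gH_{(\mathrm{co})\gA_{r_1, r_2}}$. When the SCL formula in Equation \ref{eq:scn_update} is applied with the first rank equal to $r_1$ and the second rank equal to $r$, its inner sum over ranks $s, s'$ offers choices of which $(\mathrm{co})\gA$ to aggregate over in each coordinate. I would set all inner MLPs to zero except when $s = r_2$ (producing the intra-subgraph aggregation $\rvh_{(\mathrm{co})\gA_{r_1, r_2}(x), y}$) and for the $s'$ matching the super-node adjacency used by $\M$ (producing the cross-subgraph aggregation $\rvh_{x, (\mathrm{co})\gA_{r, s'}(y)}$). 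The remaining context terms $\rvh_{x, \gB_{r_1, r}(x)}$ and $\rvh_{\gB_{r, r_1}^\top(y), y}$ are harmless extra inputs, which the MLP can ignore by universal approximation. The surviving computation is exactly one CS-GNN update. Iterating this recipe layer by layer and stacking on a permutation-invariant readout matching that of $\M$ yields the desired $\M'$.

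The main obstacle is careful book-keeping of edge cases: when $\M$ has no cross-graph update (handled by zeroing all $s'$ slots so only intra-subgraph message-passing survives); when $r = r_1$ and CS-GNN degenerates to a node-marking subgraph GNN (the marking $\gB_{r_1, r_1}^\top$ becomes the identity, and the SCL update reduces to standard subgraph message-passing); and when the super-node adjacency used by $\M$ is not directly of the form $(\mathrm{co})\gA_{r, s'}$ for any $s'$ (in which case, since CS-GNN only uses graph structure derivable from the underlying CC, this adjacency can be pre-computed through a short prefix of SCL and equivariant updates and then fed into the simulation). None of these require new ideas beyond carefully instantiating the SCL update, which is precisely why it was designed to subsume the CS-GNN update pattern.
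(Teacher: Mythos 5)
Your proposal is correct and follows essentially the same route as the paper's proof: use the incidence cochain $\gB_{r_1,r} \in \gC^{\ve_{r_1}+\ve_r}$ as the super-node marking to recover the CS-GNN input, then zero out all inner MLPs in the SCL update except the one whose aggregation indices reproduce the CS-GNN message over $\gH_{(\mathrm{co})\gA_{r_1,r_2}}$. Your additional bookkeeping of edge cases goes slightly beyond what the paper writes down but does not change the argument.
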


\begin{proof}
    First, note that the incidence matrix $\gB_{r_1, r} \in \gC^{\ve_{r_1} + \ve_{r}}$ is equivalent to the ``simple node marking'' defined in \cite{bar2024flexible}, so \secondmethod can recover the input to the CS-GNN architecture. Second, by taking
    \begin{equation}
         \mlp^{r, r'}(x,y) = 
         \begin{cases}
             \mlp(x,y) & \text{if } r = r_2 \text{ and } r' = r_1, \\
             0 & \text{otherwise}
         \end{cases}
    \end{equation}
    for some fixed MLP, Equation \ref{eq:scn_update} becomes identical to the CS-GNN update.
\end{proof}

\begin{remark}\label{rem:gnn_sswl+}
    For the case where $r=r_1$ (i.e. super-nodes are regular Hasse graph nodes) the CS-GNN architecture becomes equivalent to GNN-SSWL+ \citep{zhang2023complete}.
\end{remark}

\paragraph{Diameter.}
We first show \secondmethod is capable of fully leveraging the information provided by the (cross) diameters of an input CC. see Appendix \ref{apx:limitations} for a definition. 

\vspace{5pt}
\begin{proposition}[\secondmethod can compute diameters]\label{prop:secondmethod_can_compute_diameter}
    If $\gX, \gX'$ are CCs such that
    \begin{equation}
        \mathrm{diam}^r_{\gA_{r_1,r_2}}(\gX) \neq \mathrm{diam}^r_{\gA_{r_1,r_2}}(\gX'),
    \end{equation}
    for $r_1,r_2,r \in \sN$ with $r_1 \leq r$, then there exists an \secondmethod model $\M$ such that $\M(\gX) \neq \M(\gX')$
\end{proposition}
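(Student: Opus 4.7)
The plan is to construct an \secondmethod model that computes a clipped version of the cross-diameter $\mathrm{diam}^r_{\gA_{r_1,r_2}}$ by running subgraph-GNN-style message passing on the augmented Hasse graph $\gH_{\gA_{r_1, r_2}}$, using the multi-cellular cochain space $\gC^{\ve_{r_1} + \ve_r}$. The key observation is that for each $y \in \gX_r$, the set $\gB^\top_{r, r_1}(y) = \{x' \in \gX_{r_1} \mid x' \subseteq y\}$ plays the role of a set of ``marked'' nodes inside the augmented Hasse graph on $\gX_{r_1}$, and the inner quantity $\min_{x' \subseteq y} d_{\gA_{r_1,r_2}}(x, x')$ is exactly the shortest-path distance from $x$ to this marked set in that graph.

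First, I would initialize $\rvh^{(0)} \in \gC^{\ve_{r_1} + \ve_r}$ from the incidence matrix $\gB_{r_1, r}$, which is available as input to \secondmethod, so that $\rvh^{(0)}_{x, y}$ encodes whether $x \subseteq y$ --- that is, the node marking of the $y$-indexed subgraph. Next, I would invoke Lemma \ref{lemma:secondmethod_can_implement_cs_gnn} (whose hypothesis $r \geq r_1$ matches the assumption $r_1 \leq r$ of the proposition) to argue that SCL updates on $\gC^{\ve_{r_1} + \ve_r}$ can simulate a CS-GNN update on $\gH_{\gA_{r_1, r_2}}$ with super-nodes indexed by $\gX_r$. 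Concretely, the term $\rvh^{(t)}_{\gA_{r_1, r_2}(x), y}$ inside the SCL formula aggregates over the $\gA_{r_1, r_2}$-neighbors of $x$ within the $y$-indexed subgraph; using MLPs to post-process this aggregate, I would realize a standard distance-propagation update
\begin{equation*}
    \rvh^{(t+1)}_{x,y} = \min\!\left(\rvh^{(t)}_{x,y},\; 1 + \min_{x'' \in \gA_{r_1, r_2}(x)} \rvh^{(t)}_{x'', y}\right),
\end{equation*}
which, after $T$ iterations, yields the clipped distance $\min\!\bigl(d_{\gA_{r_1, r_2}}(x, \gB^\top_{r, r_1}(y)),\, T\bigr)$ for every pair $(x, y)$.

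Finally, I would apply an invariant max-pool readout over $(x, y) \in \gX_{r_1} \times \gX_r$ to obtain $\min\!\bigl(\mathrm{diam}^r_{\gA_{r_1,r_2}}(\gX),\, T\bigr)$. Writing $D = \mathrm{diam}^r_{\gA_{r_1,r_2}}(\gX)$ and $D' = \mathrm{diam}^r_{\gA_{r_1,r_2}}(\gX')$ and assuming WLOG $D < D'$, I would choose any integer $T > D$: the resulting \secondmethod then outputs $D$ on $\gX$ and $\min(D', T) > D$ on $\gX'$, so the two CCs are distinguished.

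The main obstacle will be twofold. First, I need to verify that the $\min$ and ``$+1$'' operations in the distance-propagation rule can genuinely be realized by the MLPs and permutation-invariant aggregators permitted inside an SCL update --- this is a standard ``MPNNs can compute shortest paths'' argument, but must be transferred carefully through Lemma \ref{lemma:secondmethod_can_implement_cs_gnn}, noting in particular that sum-aggregated one-hot encodings let an MLP recover the $\min$ to the required precision. Second, the formal proof must cleanly handle the infinite-diameter case: when $D' = \infty$ because some $x$ lies in a different $\gA_{r_1, r_2}$-component than every $x' \subseteq y$, the clipped distance saturates at $T$ on $\gX'$ while equaling $D < T$ on $\gX$, so the readouts still differ, but the argument should pin down an explicit ``unreached'' sentinel value to make this rigorous.
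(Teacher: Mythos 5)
Your proposal is correct and follows essentially the same route as the paper: both identify the incidence matrix $\gB_{r_1,r}$ with a node-marking of the augmented Hasse graph $\gH_{\gA_{r_1,r_2}}$, invoke Lemma \ref{lemma:secondmethod_can_implement_cs_gnn} to simulate a CS-GNN with $\gX_r$ as super-nodes, compute the distance from each $r_1$-cell to the marked set, and read out a maximum. The only difference is that the paper outsources the distance computation to cited results of \citet{zhang2023complete} and \citet{bar2024flexible}, whereas you re-derive it via an explicit clipped min-propagation rule and handle the infinite-distance case by saturation --- a self-contained but equivalent justification.
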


\begin{proof}
     In \citet{zhang2023complete}, it was shown that GNN-SSWL+, with standard node marking applied to a graph $\gG = (\gV, \gE)$, can compute a final feature representation:
    \begin{equation}
         \rvh^{(T)}_{u,v} = d_\gG(u,v) \quad \text{for } u, v \in \gV.
    \end{equation}
    By taking the maximum over $\rvh^{(T)}_{u,v}$, GNN-SSWL+ can distinguish between graphs with different diameters. Similarly, It was shown in \citet{bar2024flexible} that CS-GNN with standard node marking applied to a graph $\gG = (\gV, \gE)$ and super-node set $\gV^*$ can compute a final feature representation
    \begin{equation}
        \rvh^{(T)}_{S,v} = d_\gG(S,v) \quad \text{for } v \in \gV \text{ and } S \in \gV^*.
    \end{equation}
    By taking the maximum over $\rvh^{(T)}_{S,v}$, CS-GNN with standard node marking can distinguish between graphs with different cross diameters. Thus, applying Lemma \ref{lemma:secondmethod_can_implement_cs_gnn} and Remark \ref{rem:gnn_sswl+} on the Hasse graph $\gH_{\gA_{r_1, r_2}}$ with $\gX_r$ as "super-nodes" we get that \secondmethod can distinguish between CCs with different (cross) diameters.
\end{proof}

\paragraph{Orientability and planarity.}
We now show \secondmethod is able to separate the cylinder and the Möbius strip. This implies that \secondmethod is strictly better than \homp at detecting planarity and orientability. Understanding SMCN’s ability to fully detect orientability or planarity is still and open question and is left for future work.

\vspace{5pt}
\begin{proposition}[\secondmethod can separate a cylinder and a Möbius strip]\label{prop:secondmethod_can_seperate_mob_and_cyl}
    For any two integers $h, p \in \sN$ such that $h, p \geq 3$,  there exists an \secondmethod model $\M$, such that:
    \begin{equation}
        \M(\text{Cyl}_{h, p}) \neq \M(\text{Möb}_{h, p}).
    \end{equation}
\end{proposition}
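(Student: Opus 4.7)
The plan is to follow the sketch given after the statement and turn each step into an explicit \secondmethod construction. At a high level, the idea is to (i) use a short sequence of \homp-style updates to produce a feature that marks exactly the boundary $1$-cells, (ii) then apply an SCL layer that, restricted to this marked subcomplex, simulates an expressive subgraph GNN on the induced boundary graph, and (iii) read out an invariant that distinguishes the two possible boundary graphs: a single cycle $C_{2p}$ versus a disjoint union $C_p \sqcup C_p$.

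For step (i), a direct inspection of the constructions of $\text{Cyl}_{h,p}$ and $\text{Möb}_{h,p}$ (Definition \ref{def: Mobius as CC} and the analogous one for the cylinder) shows that a $1$-cell $x$ lies in the topological boundary iff $|\gB_{1,2}(x)| = 1$, while interior $1$-cells satisfy $|\gB_{1,2}(x)| = 2$. A single \homp layer over the $\gB_{1,2}$ neighborhood with sum aggregation is therefore enough to write the boundary indicator into $\rvh^{(1)}_x$ for all $x \in \gX_1$. I would then promote this to $\gC^{\ve_0 + \ve_1}$ by combining it channel-wise with the incidence cochain $\gB_{0,1}$, which is available as part of the \secondmethod input. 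The resulting cochain assigns a nonzero marking to exactly those pairs $(v, x)$ for which $x$ is a boundary $1$-cell containing $v$, and zero elsewhere.

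For step (ii), I would stack SCL layers on $\gC^{\ve_0 + \ve_1}$, using this boundary marking as their input. By Lemma \ref{lemma:secondmethod_can_implement_cs_gnn} together with Remark \ref{rem:gnn_sswl+}, these layers can simulate GNN-SSWL+ on the augmented Hasse graph $\gH_{\gA_{0,1}}$, and because the marking vanishes off the boundary, the effective computation is a subgraph GNN with node marking on the boundary $1$-skeleton. As illustrated in Figure \ref{fig:boundary}, this $1$-skeleton is $C_p \sqcup C_p$ in the cylinder case and $C_{2p}$ in the Möbius case — the classical $1$-WL indistinguishable, subgraph-separable pair. GNN-SSWL+ with node marking computes, for each pair of marked nodes $(u,v)$, the shortest-path distance $d(u,v)$, and the multiset of such distances differs on the two graphs, since distances reach $p$ in $C_{2p}$ but never exceed $\lfloor p/2 \rfloor < p$ in $C_p \sqcup C_p$. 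A permutation-invariant readout then yields different output values on the two CCs, as was already done in the proof of Proposition \ref{prop:secondmethod_can_compute_diameter}.

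The main obstacle I anticipate is the rigorous justification of step (ii): showing that the SCL update, initialized with the masked marking, really does behave like a GNN-SSWL+ pass restricted to the boundary subgraph rather than the full $\gA_{0,1}$ Hasse graph. Concretely, one must use the universality of the inner $\mlp^{r,r'}$ and the choice of aggregation in Equation \ref{eq:scn_update} to zero out the contribution of non-boundary $1$-cells both as neighbors during propagation and at the final readout, so that the induced dynamics is exactly the one analyzed in Lemma \ref{lemma:secondmethod_can_implement_cs_gnn}. This is a standard but somewhat tedious bookkeeping argument; once it is in place, the known expressivity of GNN-SSWL+ on $C_{2p}$ vs.\ $C_p \sqcup C_p$ closes the proof.
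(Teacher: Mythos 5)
Your proposal is correct and follows essentially the same route as the paper's proof: detect boundary $1$-cells by their $\gB_{1,2}$-degree, observe that the boundary is $C_p \sqcup C_p$ for the cylinder versus $C_{2p}$ for the Möbius strip, and separate these with a subgraph GNN via Lemma \ref{lemma:secondmethod_can_implement_cs_gnn} and Remark \ref{rem:gnn_sswl+}. The only difference is in how the boundary graph is realized: the paper constructs its adjacency matrix explicitly in $\gC^{2\ve_1}$ from $\mathrm{co}\gA_{1,0}$ and the degree features via a memorization MLP, which is precisely how it discharges the masking/restriction bookkeeping you flag as the main remaining obstacle in your step (ii).
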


\begin{proof}
    First, using the terms ``edge'' and ``$1$-cell'' interchangeably, we define two types of edges on $\text{Cyl}_{h, p}$ and $\text{Möb}_{h, p}$. An edge $x \in \gX_1$ is called an \emph{interior edge} if $|\gB_{1,2}(x)| > 1$, otherwise it's called a \emph{boundary edge}. We denote the boundary edge graph (node set are the nodes contained in the boundary edges and edge set is the boundary edges themselves) of a CC $\gX$ by $\partial \gX$. We construct the model $\M$ by first using a $\gB_{1,2}$ aggregation to get the cochain $\rvh^{(1)} \in \gC^{\ve_1}(\gX, \sR)$
    \begin{equation}
        \rvh^{(1)}(x) = \text{deg}_{\gB_{1,2}}(x).
    \end{equation}
    Next, we use an equivariant linear update to construct the multi-cellular cochain $\rvh^{(2)} \in \gC^{2\ve_1}(\gX, \sR^2)$ defined by:
    \begin{equation}
        \rvh^{(2)}_{x_1, x_2} = \deg_{\gB_{1,2}}(x_1) \mathop{\mathbin\Vert} \deg_{\gB_{1,2}}(x_2),
    \end{equation}
    where, $\mathop{\mathbin\Vert}$ denotes concatenation. Recall that the matrix form of $\mathrm{co}\gA_{1,0}$ defines a cochain $\rvh_{\mathrm{co}\gA_{1, 0}} \in \gC^{2\ve_1}$ which can be used as input to \secondmethod. Using $\rvh_{co\gA_{1, 0}}$ can now construct
    \begin{equation}
        \rvh^{(3)}_{x_1, x_2} = (\rvh_{\mathrm{co}\gA_{1,0}})_{x_1, x_2}  \cat \deg_{\gB_{1,2}}(x_1) \cat \mathrm{deg}_{\gB_{1,2}}(x_2).
    \end{equation} 
    Finally, using a stack of equivariant linear layers, we can construct a fourth cochain $\rvh^{(4)}_{x_1, x_2} = \mlp(\rvh^{(3)}_{x_1, x_2})$. We use the Memorization Theorem \citep{yun2019small}, and choose $\mlp$ that satisfies
    \begin{equation}
        \mlp(a,b,c) = 
        \begin{cases}
            1 & a = b = c = 1\\
            0 & \text{otherwise.}
        \end{cases}
    \end{equation}
    $\rvh^{(4)}$ represents the adjacency matrix of $\partial \gX$. $\partial \text{Cyl}_{h, p}$ is composed of two disconnected cycles of length $p$; $\partial \text{Möb}_{h, p}$ is composed of a single cycle of length $2p$. These two graphs are distinguishable by subgraph architectures like GNN-SSWL+. Thus, using Lemma \ref{lemma:secondmethod_can_implement_cs_gnn} and Remark \ref{rem:gnn_sswl+} we can continue the construction of $\M$ so that it will be able to differentiate between  $\text{Cyl}_{h, p}$ and $\text{Möb}_{h, p}$.
\end{proof}

\paragraph{Homology.}
We first show that \secondmethod is able to count the number of connected components i.e. the $0$-th homology.

\vspace{5pt}
\begin{proposition}[\secondmethod can count connected components]\label{prop:secondmethod_can_compute_number_of_components}
Let $\gX, \gX'$ be CCs. If the number of connected components of the augmented Hasse graphs $\gH_{\gA_{r_1,r_2}}$ and $\gH'_{\gA_{r_1, r_2}}$ is different for some $r_1,r_2 \in \sN$ then there exists an \secondmethod model $\M$ such that $\M(\gX) \neq \M(\gX')$.
\end{proposition}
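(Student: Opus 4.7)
The plan is to reduce the statement to a known capability of GNN-SSWL+ by invoking Lemma \ref{lemma:secondmethod_can_implement_cs_gnn} together with Remark \ref{rem:gnn_sswl+}. Specifically, taking $r = r_1$ in the lemma, any GNN-SSWL+ computation on the augmented Hasse graph $\gH_{\gA_{r_1, r_2}}$ can be simulated by an \secondmethod model applied to $\gX$. Hence it suffices to exhibit a GNN-SSWL+ construction whose final readout depends only on the number of connected components of the input graph.

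First, I would recall from the proof of Proposition \ref{prop:secondmethod_can_compute_diameter} that GNN-SSWL+ with standard node marking can, for $T$ sufficiently large (e.g.\ at least the number of vertices), compute the pairwise feature $\rvh^{(T)}_{u,v} = d_\gG(u,v)$, where $d_\gG$ is the shortest-path distance. For pairs $(u,v)$ lying in different connected components, the node-marking initialization is never propagated between them, so their final joint feature is a distinguishable sentinel value (``unreached''). Using an MLP, one can then map $\rvh^{(T)}_{u,v}$ to an indicator $c_{u,v} \in \{0,1\}$ that is $1$ iff $u$ and $v$ lie in the same component.

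Next, aggregating over $u$ for fixed $v$ yields $n_v := \sum_u c_{u,v}$, the size of the connected component containing $v$. A second MLP produces $1/n_v$, and summing over $v$ gives
\begin{equation*}
    \sum_{v \in \gV} \frac{1}{n_v} \;=\; \#\{\text{connected components of } \gH_{\gA_{r_1, r_2}}\}.
\end{equation*}
By assumption this value differs between $\gH_{\gA_{r_1,r_2}}$ and $\gH'_{\gA_{r_1,r_2}}$, so the corresponding \secondmethod model distinguishes $\gX$ from $\gX'$.

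The main subtlety lies in the separation between ``same-component'' and ``different-component'' pairs at the readout stage: one must ensure that the sentinel value distinguishing unreached pairs is cleanly separated from finite-distance values so that a single thresholding MLP produces $c_{u,v}$ correctly. This is straightforward since after $T$ iterations only two regimes occur (a finite integer distance, or the unchanged initial feature), so a universal-approximation-style MLP suffices. All other ingredients — summation, inversion, and the equivariant aggregation implied by Lemma \ref{lemma:secondmethod_can_implement_cs_gnn} — are standard within the \secondmethod framework.
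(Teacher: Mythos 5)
Your proposal is correct and follows essentially the same route as the paper's proof: reduce to GNN-SSWL+ on the augmented Hasse graph via Lemma \ref{lemma:secondmethod_can_implement_cs_gnn} and Remark \ref{rem:gnn_sswl+}, compute pairwise shortest-path distances with a sentinel for unreached pairs, threshold with an MLP to a same-component indicator, invert the component sizes, and sum. The only cosmetic difference is that your readout sums $1/n_v$ over $v$ alone and so returns the component count directly, whereas the paper uses the standard double-sum readout over pairs, obtains $|\gV|\cdot|C(\gG)|$, and therefore adds a short case split on whether the two graphs have the same number of nodes.
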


\begin{proof}
    For a graph $\gG$, $C(\gG)$ represents the set of connected components of $\gG$, and $\gG_v$ denotes the connected component of a node $v \in \gV$. Using Lemma \ref{lemma:secondmethod_can_implement_cs_gnn} and Remark \ref{rem:gnn_sswl+}, it suffices to show that GNN-SSWL+ can distinguish graphs with different numbers of connected components. It was shown in \cite{zhang2023complete} that adding an additional aggregation of the form $\rvh_{u,v} \mapsto \sum_{v' \in V} \rvh_{u,v'}$ to GNN-SSWL+ does not affect its capacity to separate graphs. Therefore, for the remainder of this proof, we include this aggregation in GNN-SSWL+. As previously demonstrated, GNN-SSWL+ can compute a feature vector of the form:
    \begin{equation}
        \rvh^{(t)}_{u,v} = d_{\gG}(u,v) \quad \text{for } u, v \in \gV.
    \end{equation}
    If $u$ and $v$ are in different connected components, their distance is encoded as $-1$. Let $g_1: [-1, |\gV|] \rightarrow \mathbb{R}$ be a continuous function such that:
    \begin{equation}
        g_1(x) = 
        \begin{cases}
            0 & \text{if } x = -1, \\
            1 & \text{if } x \geq -\frac{1}{2}.
        \end{cases}
    \end{equation}
    We can approximate $g_1$ using an MLP and apply it to $\rvh^{(t)}_{u,v}$ to obtain:
    \begin{equation}
        \rvh^{(t+1)}_{u,v} = 
        \begin{cases}
            0 & \text{if } v \notin \gG_u, \\
            1 & \text{if } v \in \gG_u.
        \end{cases}
    \end{equation}
    We now take $\rvh^{(t+2)}_{u,v} = \sum_{v' \in V} \rvh^{(t+1)}_{u,v'}$, to get
    \begin{equation}
        \rvh^{(t+2)}_{u,v} = |\gG_u|.
    \end{equation}
    Define $g_2 [1, |\gV|] \rightarrow \mathbb{R}$ to be
    \begin{equation}
        g_2(x) = \frac{1}{x}.
    \end{equation}
    We can approximate $g_2$ using an MLP and apply it to $\rvh^{(t+2)}_{u,v}$ to obtain
    \begin{equation}
        \rvh^{(t+3)}_{u,v} = \frac{1}{|\gG_u|}.
    \end{equation}
    It was shown in \cite{zhang2023complete} that the final output of a GNN-SSWL+ model can be computed based on the final feature vector $h^{(T)}_{u,v}$ by
    \begin{equation}
        h_\text{out} = \sum_{u,v \in \gV} \rvh^{(T)}_{u,v}.
    \end{equation}
    Applying this to $\rvh^{(t+3)}_{u,v}$, we get
    \begin{equation}\label{eq:num_connected_components}
        \rvh_\text{out} = \sum_{u,v \in \gV} \frac{1}{|\gG_u|} = \sum_{\gG^* \in C(\gG)} \sum_{u \in \gG^*} \frac{|\gV|}{|\gG^*|} = \sum_{\gG^* \in C(\gG)} |\gV| = |\gV||C(\gG)|.
    \end{equation}
    Now let $\gG, \gG'$ be a pair of graphs with a different number of connected components. If these two graphs have a different number of nodes, they can be easily distinguished by GNN-SSWL+. On the other hand, if they have the same number of nodes they can be distinguished by GNN-SSWL+ based on Equation \ref{eq:num_connected_components}. Thus, we have shown that an augmented GNN-SSWL+ model can distinguish between $\gH_{\gA_{r_1,r_2}}$ and $\gH'_{\mathcal{A}_{r_1,r_2}}$, and therefore, there exists an \secondmethod model $\M$ that can separate $\gX$ and $\gX'$.
\end{proof}
Since the $0$-th homology satisfies $H_0(\gX) = \sZ^{|C(\gX)|}$ we additionally get the following corollary. 

\vspace{5pt}
\begin{corollary}[\secondmethod can compute the 0-th homology]\label{cor:secondmethod_can_compute_o_th_homology}
    If $\gX, \gX'$ are CCs such that the $0$-th homology group of their induced topological spaces are different, then there exists an \secondmethod model $\M$ such that $\M(\gX) \neq \M(\gX^*)$.
\end{corollary}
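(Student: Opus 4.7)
}
My plan is to reduce the corollary to Proposition \ref{prop:secondmethod_can_compute_number_of_components} by showing that a difference in the $0$-th homology groups implies a difference in the number of connected components of an augmented Hasse graph that \secondmethod can access. The strategy has three steps: (i) translate $H_0$ into a count of path-connected components of the topological realization; (ii) identify this count with the number of connected components of $\gH_{\gA_{0,1}}$; (iii) invoke the proposition.

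For step (i), recall the standard fact from algebraic topology that for any cellular complex $Y$ with topological realization $|Y|$, the $0$-th singular (equivalently cellular) homology satisfies $H_0(Y) \cong \sZ^{c}$, where $c$ is the number of path-connected components of $|Y|$. Thus, assuming $H_0(\gX) \ncong H_0(\gX')$ forces $|C_\mathrm{top}(\gX)| \neq |C_\mathrm{top}(\gX')|$, where $C_\mathrm{top}(\cdot)$ denotes the set of path-connected components of the underlying topological space.

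For step (ii), I will observe that each cell's closure contains at least one $0$-cell, and that a path in $|\gX|$ between two $0$-cells can be homotoped to a combinatorial path along the $1$-skeleton. Concretely, two $0$-cells $\{u\}, \{v\} \in \gX_0$ lie in the same path component of $|\gX|$ if and only if there is a sequence $\{u\} = x_0, x_1, \dots, x_k = \{v\}$ of $0$-cells such that consecutive pairs share a common $1$-cell, i.e.\ $x_{i+1} \in \gA_{0,1}(x_i)$. This is precisely connectedness in the augmented Hasse graph $\gH_{\gA_{0,1}}$, whose node set is $\gX_0$ and whose edges encode the $\gA_{0,1}$ relation. Therefore $|C_\mathrm{top}(\gX)| = |C(\gH_{\gA_{0,1}}(\gX))|$, and analogously for $\gX'$.

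Combining (i) and (ii), distinct $0$-th homology groups yield augmented Hasse graphs $\gH_{\gA_{0,1}}(\gX)$ and $\gH_{\gA_{0,1}}(\gX')$ with different numbers of connected components. Step (iii) is then immediate: Proposition \ref{prop:secondmethod_can_compute_number_of_components}, applied with $r_1 = 0$, $r_2 = 1$, furnishes an \secondmethod model $\M$ with $\M(\gX) \neq \M(\gX')$, completing the proof. The only delicate point is the topological argument in step (ii); I expect this to be the main (though entirely standard) hurdle, since a careful treatment must ensure that the notion of connectedness inherited from $\gH_{\gA_{0,1}}$ genuinely matches path-connectedness in the CW/cellular realization, which hinges on the fact that every higher-dimensional cell is attached to cells containing $0$-cells.
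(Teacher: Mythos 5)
Your proposal is correct and follows essentially the same route as the paper: the paper derives the corollary in one line from Proposition \ref{prop:secondmethod_can_compute_number_of_components} via the standard identification $H_0(\gX) \cong \sZ^{|C(\gX)|}$, exactly your steps (i)--(iii). You merely spell out the (standard, and correct) intermediate fact that path components of the cellular realization coincide with connected components of the augmented Hasse graph $\gH_{\gA_{0,1}}$, which the paper leaves implicit.
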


Exploring \secondmethod's capacity to differentiate between CCs based on their higher-order homology groups is left for future work. As a first step, we show that \secondmethod can successfully separate a natural family of CCs --- two-dimensional surfaces embeddable in $\sR^3$ --- based on any homology group/Betti number.

\vspace{5pt}
\begin{proposition}[\secondmethod can compute homology groups of surfaces]\label{prop:secondmethod_can_compute_homology_of_surfaces}
Let $\gX, \gX'$ be two cellular complexes that are realizations of $2$-dimensional manifolds (with or without boundary) $\gM, \gM'$ which are embeddable in $\mathbb{R}^3$. If $\exists r \in \sN$ such that $H_r(\gM) \neq H_r(\gM')$ then there is an \secondmethod model $\M$ such that $\M(\gX) \neq \M(\gX')$.
\end{proposition}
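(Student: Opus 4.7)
The plan is to reduce the expressivity question to the computation of three numerical invariants --- the number of connected components, the number of \emph{closed} (boundary-less) components, and the Euler characteristic --- and then show that \secondmethod can compute all three.

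\textbf{Step 1: Topological reduction to Betti numbers.} I would first invoke the classification of compact $2$-manifolds together with the classical fact that closed non-orientable surfaces cannot be embedded in $\sR^3$. Consequently every closed component of $\gM$ and $\gM'$ is orientable, and closed orientable surfaces have torsion-free homology; every component with non-empty boundary deformation retracts onto a $1$-complex and hence also has torsion-free homology. Therefore the homology groups $H_r(\gM), H_r(\gM')$ are determined by their Betti numbers $b_r$, and it suffices to build an \secondmethod model whose outputs determine the triple $(b_0, b_1, b_2)$.

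\textbf{Step 2: Expressing Betti numbers through computable quantities.} For a compact $2$-manifold, $b_0$ equals the number of connected components $c$, and $b_2$ equals the number of \emph{closed} components $c_{\mathrm{cl}}$ (each closed orientable component contributes one $\sZ$ summand to $H_2$; components with boundary contribute none). By Euler--Poincar\'e, $\chi = b_0 - b_1 + b_2$, where $\chi = n_0 - n_1 + n_2$, so $b_1 = c + c_{\mathrm{cl}} - \chi$. Thus it is enough to construct an \secondmethod model that recovers the triple $(\chi, c, c_{\mathrm{cl}})$.

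\textbf{Step 3: \secondmethod realisation.} Computing $\chi$ is immediate: a global sum over each cochain space $\gC^{\ve_r}$ of the constant function $1$ recovers $n_r$, and a linear combination gives $\chi$. The component count $c$ follows directly from Proposition \ref{prop:secondmethod_can_compute_number_of_components} applied to $\gH_{\gA_{0,1}}$, whose connected components coincide with those of the underlying manifold. For $c_{\mathrm{cl}}$, I would adapt the boundary-detection idea from the proof of Proposition \ref{prop:secondmethod_can_seperate_mob_and_cyl}: an edge $x \in \gX_1$ satisfies $|\gB_{1,2}(x)| = 1$ iff it lies on the topological boundary; a subsequent $\gB_{1,0}^\top$ aggregation then flags every $0$-cell incident to such an edge. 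Using the pairwise distance / same-component feature that GNN-SSWL+ can produce (as used in the proof of Proposition \ref{prop:secondmethod_can_compute_number_of_components}), an SCL update assigns to every $0$-cell $u$ the indicator $\mathbf{1}[\text{the component of } u \text{ contains no boundary flag}]$, obtained as $1$ minus the max of the boundary flag over nodes $v$ with $d(u,v) \neq -1$. Applying the reciprocal-of-component-size summation trick of Proposition \ref{prop:secondmethod_can_compute_number_of_components} to this restricted set then yields $c_{\mathrm{cl}}$. A final MLP combining $(\chi, c, c_{\mathrm{cl}})$ recovers $(b_0, b_1, b_2)$ and hence separates $\gX$ from $\gX'$ whenever any homology group differs.

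\textbf{Main obstacle.} Steps 1 and 2 are classical, and the $\chi$ and $c$ computations reuse existing results. The genuine technical step is $c_{\mathrm{cl}}$: producing, inside the equivariant \secondmethod pipeline, a per-$0$-cell label that depends on the \emph{presence or absence} of boundary edges \emph{anywhere} in its component, and then counting the components where this label vanishes. I expect this to require interleaving boundary detection via $\gB_{1,2}$-degree, component-level propagation via the distance/marker machinery of Proposition \ref{prop:secondmethod_can_compute_number_of_components}, and the reciprocal-size summation trick, while ensuring that each intermediate feature respects the multi-cellular equivariance structure of \secondmethod.
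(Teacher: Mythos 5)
Your proposal is correct and follows essentially the same route as the paper: both reduce the problem to recovering the triple consisting of the Euler characteristic $n_0 - n_1 + n_2$, the number of connected components, and the number of boundaryless components, using the same topological facts (Poincar\'e duality for closed orientable components, retraction to a $1$-complex for components with boundary, torsion-freeness of $H_1$ from embeddability in $\sR^3$), and both compute the third quantity by combining $\gB_{1,2}$-degree boundary detection with the component-counting machinery of Proposition \ref{prop:secondmethod_can_compute_number_of_components}. The only cosmetic difference is that you propagate the boundary flag to $0$-cells whereas the paper restricts the reciprocal-size summation to $1$-cells; this does not change the argument.
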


\begin{proof}
    First, since $\gM$ is $2$-dimensional, the only non-trivial homology groups it may have are of order $0 \leq r \leq 2$. The $0$-th homology group of $\gM$, is of the form $H_0(\gM) = \sZ^{k_0}$ where $k_0$ is the number of $\gM$'s connected components. Furthermore, as each connected component of $\gM$  is a connected $2$-dimensional manifold with a boundary that can be embedded in $\sR^3$, it must either be orientable or have a non-empty boundary. If such a component is orientable, then by the Poincaré duality, its second homology group is $\sZ$. On the other hand, if it has a boundary, it is homotopic to a $1$-dimensional cellular complex, and thus its second homology group is trivial. Therefor, $H_2(\gM) = \sZ^{k_2}$, where $k_2$ is the number of connected components of $\gM$ with no boundary. Finally, since $\gM$ is embeddable in $\sR^3$, its $1$-st homology groups is $H_1(\gM) = \sZ^{k_1}$ for some integer $k_1$. The Euler characteristic of the manifold $\gM$ defined by $k_0 - k_1 + k_2$ can be computed using the number of cells of $\gX$ using the following formula:
    \begin{equation}
        \chi(\gM) = k_2 - k_1 + k_0 = |\gX_2| - |\gX_1| +|\gX_0|.
    \end{equation}
    Thus in order to separate $\gX$ from $\gX'$ we need to be able to construct a \secondmethod model that is able to separate CCs that are different in either one of the following three quantities:
    \begin{enumerate}
        \item The Euler characteristic.
        \item The number of connected components.
        \item The number of connected components with no boundary.
    \end{enumerate}

    Computing the Euler characteristic is be computed by standard \homp updates, as it is a function of the sizes of $\gX_0, \gX_1$,  and $\gX_2$. For the second quantity, we have seen \secondmethod models can separate CCs with a different number of connected components in Proposition \ref{prop:secondmethod_can_compute_number_of_components}. As for the third quantity, a connected component of $\gX$ has a boundary if and only if it contains $1$-cells whose degree with respect to the neighborhood function $\gB_{1,2}$ is exactly $1$. We can use a stack of standard \homp layers to compute the $1$-cells features
    \begin{equation}
        \rvh_x = 
        \begin{cases}
            1 & x \text{ is in the same connected component as a boundary edge}\\
            0 & \text{otherwise}.
        \end{cases}
\end{equation}
    Using $\rvh$, we can adjust the proof of Proposition \ref{prop:secondmethod_can_compute_number_of_components} by summing in Equation \ref{eq:num_connected_components} only over $1$-cells for which $\rvh_x = 0$, resulting in the number of connected components of $\gX$ with no boundary. This shows that \secondmethod can distinguish CCs based on \emph{either one} of the aforementioned three properties, concluding the proof.
\end{proof}

\subsection{Lifting and Pooling}

\label{apx:lifting_and_pooling_secondmethod}
In this section, we rigorously state and prove Proposition \ref{prop:second_method_pooling_lifting_informal} which appears in Section \ref{subsec:lifting_and_pooling_secondmethod} for graph triangular lifting and for the MOG (graph Mapper) pooling algorithm. Corresponding results for the \homp case can be found in Appendix \ref{apx:lifting_and_pooling}. We start with triangular lifting (Definition \ref{def:trianglular_lifting}).
\newline
\begin{proposition}
\label{prop: secondmethod and lifting}
There exist pairs of graphs $\gG$  and $\gG'$ such that the combinatorial complexes $\gX =  3\text{-CL}(\gG)$ and $\gX' = 3\text{-CL}(\gG')$ are indistinguishable by \homp, but can be distinguished by an \secondmethod model with asymptotic runtime $\gO(m_\mathrm{deg} \cdot n_0 \cdot n_2 \cdot T)$ where $n_0$ is the number of nodes in the original graph, $n_2$ is the number of $2$-rank cells constructed by triangular lifting ,$m_\mathrm{deg}$ is the maximal degree and $T$ is the number of layers.
\end{proposition}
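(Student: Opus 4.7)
\textit{Proof proposal.} The plan is to reuse the construction from the proof of Proposition \ref{prop:inability_to_use_lifting_well}, and then invoke (a mild adaptation of) Proposition \ref{prop:secondmethod_can_compute_diameter} together with a careful runtime analysis of the SCL update on $\gC^{\ve_0 + \ve_2}$.

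First, I take $\gG = \mathrm{Star}_{n, 2k}$ and $\gG' = \mathrm{Star}_{n, k} \sqcup \mathrm{Star}_{n, k}$ (for $n \geq 1$, $k \geq 3$), and set $\gX = 3\text{-CL}(\gG)$, $\gX' = 3\text{-CL}(\gG')$. Proposition \ref{prop:inability_to_use_lifting_well} already establishes that $\gX$ and $\gX'$ are HOMP-indistinguishable and that $\mathrm{diam}^2_{\gA_{0,1}}(\gX) < \infty$ while $\mathrm{diam}^2_{\gA_{0,1}}(\gX') = \infty$ (because $\gX'$ is disconnected and each component contains $2$-cells). So the separation task reduces to showing that \secondmethod can separate a pair of CCs with different $(0,1,2)$-cross-diameters within the stated runtime budget.

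Next, I invoke Proposition \ref{prop:secondmethod_can_compute_diameter}: there exists an \secondmethod model $\M$ which, via an SCL stack operating on $\gC^{\ve_0 + \ve_2}$, simulates CS-GNN with simple node marking on the augmented Hasse graph $\gH_{\gA_{0,1}}$ using $\gX_2$ as super-nodes, and whose feature map $\rvh^{(T)}_{x,y}$ equals $d_{\gH_{\gA_{0,1}}}(x,y)$ for $y \in \gX_2$ (with the convention that unreachable pairs receive a distinct ``$\infty$-token'' after $T$ rounds). Taking any permutation-invariant readout that is sensitive to the presence of the $\infty$-token --- e.g.\ an indicator of whether any entry of $\rvh^{(T)}$ equals the $\infty$-token, followed by a sum --- separates $\gX$ from $\gX'$ as soon as $T \geq \mathrm{diam}^2_{\gA_{0,1}}(\gX)$, a finite number depending only on the base graph $\mathrm{Star}_{n,2k}$.

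Finally, the runtime. Each SCL layer on $\gC^{\ve_0 + \ve_2}$ updates one feature per pair $(x,y) \in \gX_0 \times \gX_2$, i.e.\ $n_0 \cdot n_2$ features. For each pair, the update in Equation~\ref{eq:scn_update} sums MLP messages over the natural neighborhoods of $x$ and $y$, whose sizes are bounded by $m_{\mathrm{deg}}$; the MLPs themselves are $\gO(1)$ in the combinatorial parameters. Hence a single SCL layer costs $\gO(m_{\mathrm{deg}} \cdot n_0 \cdot n_2)$, and stacking $T$ layers gives the claimed $\gO(m_{\mathrm{deg}} \cdot n_0 \cdot n_2 \cdot T)$ complexity. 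The only subtle point --- and the step I expect to require the most care --- is making the ``$\infty$-token'' behaviour of the simulated CS-GNN on disconnected Hasse graphs rigorous, since the cited expressivity results in \citet{zhang2023complete,bar2024flexible} are stated for connected graphs; this can be handled exactly as in the proof of Proposition~\ref{prop:secondmethod_can_compute_number_of_components}, where an MLP approximating an indicator of the ``$-1$'' distance value is used to flag unreachable pairs and turn them into a readable feature.
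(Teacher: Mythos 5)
Your proposal follows essentially the same route as the paper's proof: it reuses the $\mathrm{Star}_{n,2k}$ versus $\mathrm{Star}_{n,k} \sqcup \mathrm{Star}_{n,k}$ pair from Proposition \ref{prop:inability_to_use_lifting_well}, invokes Proposition \ref{prop:secondmethod_can_compute_diameter} to separate the lifted complexes via their $(0,1,2)$-cross-diameters, and obtains the $\gO(m_\mathrm{deg} \cdot n_0 \cdot n_2 \cdot T)$ bound by restricting the SCL update to the $\gC^{\ve_0 + \ve_2}$ terms. Your extra care about encoding unreachable pairs on the disconnected Hasse graph (handled as in Proposition \ref{prop:secondmethod_can_compute_number_of_components}) is a reasonable elaboration of a point the paper leaves implicit, but it does not change the argument.
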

\begin{proof}
    In Proposition \ref{prop:inability_to_use_lifting_well}, we find a family of graph pairs whose triangular lifts are indistinguishable by \homp despite having different diameters of type $\text{diam}^2_{\mathcal{A}_{0,1}}$ . In Proposition \ref{prop:secondmethod_can_compute_diameter} we saw that using only SCL updates, \secondmethod can compute this type of diameter and is thus able to distinguish between the aforementioned pairs of CCs. Recall the SCL update rule:

    \begin{equation*}
        \vm_{u,v}(x, y) = \bigotimes_{r = 0, r' = 0}^\ell \mlp^{r,r'} \biggr(\rvh^{(t)}_{x, y}, \rvh^{(t)}_{(\mathrm{co})\gA_{r_1, r}(x), y}, \rvh^{(t)}_{x, (\mathrm{co})\gA_{r_2, r'}(y)}, \rvh_{x, \gB_{r_1, r_2}(x)}, \rvh_{\gB_{r_2, r_1}^\top(y), y}\biggr), 
    \end{equation*}
    where if $\gQ_1 \subseteq \gX_{r_1}$ and $\gQ_2 \subseteq \gX_{r_2}$ are sets of cells,  $\rvh_{\gQ_1, y} := \sum_{x' \in \gQ_1} \rvh_{x', y}$ and $\rvh_{x, \gQ_2} := \sum_{y' \in \gQ_2} \rvh_{x, y'}$.
    Observing the proof of Proposition \ref{prop:secondmethod_can_compute_diameter}, we note that in order to be able to compute $\text{diam}^2_{\mathcal{A}_{0,1}}$. It is enough to use only values corresponding to $r= r'=1, r_1=0, r_2=2$. the asymptotic runtime of this type of SCL layer is $O(m_\mathrm{deg} \cdot n_0 \cdot n_2)$. Thus the overall runtime of our \secondmethod model is $\gO(m_\mathrm{deg} \cdot n_0 \cdot n_2 \cdot T)$, completing the proof.
\end{proof}

We now move on to MOG pooling.

\vspace{5pt}
\begin{proposition}
\label{prop: secondmethod and pooling}
There exist pairs of graphs $\gG$ and $\gG'$ such that the combinatorial complexes $\gX =  \text{MOG}(\gG) $ and  $\gX' =  \text{MOG}(\gG') $ are indistinguishable by \homp, but can be distinguished by an \secondmethod model with asymptotic runtime $\gO(d \cdot n_0 \cdot n_2 \cdot T)$ where $n_0$ is the number of nodes in the original graph, $n_2$ is the number $2$-rank cells constructed by mapper ,$d$ is the maximal degree and $T$ is the number of layers.
\end{proposition}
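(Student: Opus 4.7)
The plan is to reuse the HOMP-indistinguishable graph pairs constructed in Proposition \ref{prop:inability_to_use_pooling_well} and show that an \secondmethod model can detect the cross-diameter gap with the claimed runtime. First, I will recall from the proof of Proposition \ref{prop:inability_to_use_pooling_well} that the pairs $\gG_n = \gG \times \text{Cyc}(n)$ and $\gG'_n = \gG' \times \text{Cyc}(n)$ (for $n \geq 3$), when passed through the MOG procedure with the SPD function $g$ and a sufficiently fine covering of $\sR$, produce combinatorial complexes $\gX_n = \mathrm{MOG}(\gG_n)$ and $\gX'_n = \mathrm{MOG}(\gG'_n)$ that are \homp-indistinguishable yet satisfy $\mathrm{diam}^2_{\gA_{0,1}}(\gX_n) = 3 \neq 2 = \mathrm{diam}^2_{\gA_{0,1}}(\gX'_n)$. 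Thus the separation reduces to the ability of \secondmethod to capture this particular cross-diameter.

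Second, I will invoke Proposition \ref{prop:secondmethod_can_compute_diameter}, which guarantees the existence of an \secondmethod model distinguishing any pair of CCs with different $\mathrm{diam}^r_{\gA_{r_1,r_2}}$ values. Applied here with $r = 2$, $r_1 = 0$, $r_2 = 1$, this yields a model $\M$ separating $\gX_n$ and $\gX'_n$. The construction in that proposition relies on Lemma \ref{lemma:secondmethod_can_implement_cs_gnn} to simulate a CS-GNN on the augmented Hasse graph $\gH_{\gA_{0,1}}$ with cells in $\gX_2$ as super-nodes, using the SCL update of Equation \ref{eq:scn_update} with $r_1 = 0$, $r_2 = 2$.

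Third, I will establish the runtime bound. Following the strategy of Proposition \ref{prop: secondmethod and lifting}, the only SCL updates required to compute the cross-diameter $\mathrm{diam}^2_{\gA_{0,1}}$ are those acting on the multi-cellular cochain space $\gC^{\ve_0 + \ve_2}$ with the aggregation indices $r = r' = 1$ (along $\gA_{0,1}$ for the node factor) and $r_1 = 0, r_2 = 2$. A single such SCL update propagates messages along edges of the form $(x, y) \to (x', y)$ with $x' \in \gA_{0,1}(x)$ (and similarly on the $\gX_2$ factor, which is a no-op here), so its cost is $\gO(d \cdot n_0 \cdot n_2)$, where $d$ is the maximal degree with respect to any natural neighborhood function. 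Stacking $T$ such layers (enough for the distance-readout of Proposition \ref{prop:secondmethod_can_compute_diameter} to propagate throughout the complex) gives the asymptotic complexity $\gO(d \cdot n_0 \cdot n_2 \cdot T)$, as claimed.

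The only genuinely delicate point I anticipate is verifying that restricting the MLPs in Equation \ref{eq:scn_update} to the single index pair $(r,r') = (1,1)$, while zeroing out all other summands, preserves the expressive argument of Proposition \ref{prop:secondmethod_can_compute_diameter}. This is analogous to the simplification used in Proposition \ref{prop: secondmethod and lifting}, and I expect it to go through verbatim because the CS-GNN simulation in Lemma \ref{lemma:secondmethod_can_implement_cs_gnn} already isolates exactly one such index pair.
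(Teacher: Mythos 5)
Your proposal is correct and follows essentially the same route as the paper's proof: reuse the MOG-pooled pairs from Proposition \ref{prop:inability_to_use_pooling_well}, invoke Proposition \ref{prop:secondmethod_can_compute_diameter} to separate them via the $(0,1,2)$ cross-diameter, and bound the runtime by restricting the SCL aggregation to the single index pair $(r,r')=(1,1)$ on $\gC^{\ve_0+\ve_2}$, exactly as in Proposition \ref{prop: secondmethod and lifting}. The "delicate point" you flag is the same one the paper dispatches with a parenthetical remark, so no gap remains.
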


\begin{proof}
     In Proposition \ref{prop:inability_to_use_pooling_well}, we saw a family of graph pairs for which the CCs obtained by MOG pooling are indistinguishable by \homp despite having different $(0,1,2)$ cross diameters. In Proposition \ref{prop:secondmethod_can_compute_diameter} we saw that using only SCL updates, \secondmethod can compute this type of diameter and is thus able to distinguish between the aforementioned pairs of CCs. 
     As seen in the proof above, by choosing an appropriate aggregation function $\bigotimes$  the runtime of each SCL layer becomes  $O(d\cdot n_0 \cdot n_2)$ (note that by stacking layers that use this aggregation we are still able to compute $\text{diam}^2_{\mathcal{A}_{0,1}}$). Thus the overall runtime of our \secondmethod model is $O(d \cdot n_0 \cdot n_2 \cdot T)$, completing the proof.
\end{proof}

\subsection{\rebuttal{Comparing the Expressive Power of \secondmethod with Baselines}}
\label{apx:expressivity_compared_to_baselines}
\rebuttal{Section \ref{sec:experiments} compares the empirical performance of \secondmethod with several relevant baselines. In this section, we briefly discuss the expressive power of \secondmethod in comparison to each one of these baselines.

\paragraph{GNNs.}
In Section \ref{sec:experiments}, we evaluate \secondmethod against various MPNNs (e.g., GIN, GCN), subgraph networks (e.g., DS-GNN \citep{bevilacqua2021equivariant}, SUN \citep{frasca2022understanding}, GNN-SSWL+ \citep{zhang2023complete}), and other expressive GNNs (e.g., PPGN \citep{maron2019provably}, PPGN+ \citep{puny2023equivariant}). It is important to note that since GNNs are designed to process graphs, a meaningful comparison of expressivity with \secondmethod is only valid when \secondmethod is applied to lifted graphs (see discussion in Section \ref{subsec:lifting_and_pooling}. Among all MPNNs and subgraph networks, GNN-SSWL+ is the most expressive. Since \secondmethod can implement GNN-SSWL+ using the $\gA_{0,1}$ neighborhood function (which corresponds to the original graph), it follows that \secondmethod is at least as expressive as all the MPNNs and subgraph networks, regardless of the chosen lifting procedure. Furthermore, \secondmethod can implement edge deletion subgraph policies, which, as demonstrated in \citet{bevilacqua2021equivariant}, are capable of distinguishing certain instances of graphs that are indistinguishable by the 3-WL test. Since the expressivity of GNN-SSWL+ has been shown in \citet{zhang2023complete} to be bounded by the 3-WL test, it follows that \secondmethod is \emph{strictly} more expressive than all the aforementioned MPNNs and subgraph networks. Finally, the expressive power of PPGN and PPGN+ has also been shown to be bounded by the 3-WL test \citep{maron2019provably, puny2023equivariant}, indicating that there are instances of graphs distinguishable by \secondmethod but not by PPGN or PPGN+. A comprehensive comparison of the expressive power of PPGN and \secondmethod is deferred to future work.

\paragraph{Topological neural networks.}
We compare \secondmethod with four topological neural networks: CIN \citep{bodnar2021weisfeiler}, CIN++ \citep{giusti2023cin++}, CIN + CycleNet \citep{yan2024cycle} and Cellular Transformer \citep{Ballester2024AttendingTT}. \secondmethod can directly implement CIN and CIN++ through the use of suitable tensor diagrams, establishing it as at least as expressive as these architectures. Furthermore,  CIN and CIN++ are HOMP-based architectures. As demonstrate in Section \ref{sec:smcn_expressivity}, \secondmethod can distinguish between CCs that HOMP architectures cannot, making it strictly more expressive. 

Additionally, CycleNet processes graphs by applying BasisNet + spectral embedding \citep{lim2022sign}  to the 1-Hodge Laplacian of the input graph. The resulting output is then used as edge features, which are subsequently processed by a final CIN applied to the graph lifted to a cellular complex by cyclic lift. In cases where the input graphs are simple and undirected, the 1-Hodge Laplacian is exactly equal to the coadjacency matrix $\mathrm{co}\mathcal{A}_{0,1}$ (plus a diagonal term of $2I$ which can be ignored). \secondmethod can apply subgraph GNNs to the $\mathrm{co}\mathcal{A}_{0,1}$ Hasse graph, and use the resulting features as inputs for a CIN model. This reduces our proof to demonstrating that subgraph GNNs are more expressive than BasisNet. As recently shown in \cite{zhang2024expressive}, BasisNet + spectral embedding is strictly less expressive than PSWL, a subclass of subgraph GNNs that is itself less expressive than the base subgraph GNN used in \secondmethod. This shows that \secondmethod is strictly more expressive that CIN + CycleNet.

Finally, the Cellular Transformer breaks equivariance with respect to $G = S_{n_1} \times \cdots \times S_{n_\ell}$ using Laplacian positional encoding, allowing it to gain expressivity.  Architectures that break equivariance are often fully expressive, but they can also incorrectly differentiate between instances of isomorphic CCs. Consequently, the Cellular Transformer is not a fair comparison to \secondmethod in terms of expressivity.}

\section{Experimental Details}\label{apx:experimental_details}
\paragraph{Setup.}
All models are implemented in PyTorch \citep{paszke2019pytorch} using the PyTorch Geometric framework \citep{fey2019fast}. We used TopoNetX \citep{hajij2024topox}  and NetworkX \citep{hagberg2008exploring} to perform lifting operations. Hyperparameter tuning is carried out using Weights and Biases \citep{wandb}. All experiments were conducted on a single NVIDIA A100-SXM4-40GB GPU. For each experiment, we report the mean and standard deviation over 5 runs with random seeds from $1$ to $5$. Reported test scores are computed at the epoch achieving the best score.

\subsection{Model Implementation}\label{apx:model_implementation}
The \secondmethod framework is highly flexible, allowing for a wide range of model construction approaches. To narrow down the search space, we focus on two types of tensor diagrams, sequential and parallel, each composed of smaller blocks and updates formally defined below.  As in the main body, we sometimes omit the subscript in $\rvh_r$ when the rank is clear from context. We also use the notation $\rvh_\gQ = \sum_{x \in \gQ} \rvh_x$. Additionally, we use the augmented concatenation operator $\mathsf{Cat}(\rvx \cat \rvy) := \mlp_1(\mlp_2(\rvx) \cat \mlp_3(\rvy))$. All MLPs have a single hidden layer.

\begin{wrapfigure}[9]{r}{0.3\textwidth}
    \vspace{-10pt}
    \resizebox{0.3\textwidth}{!}{
    \begin{tikzpicture}
        \fill[gray!10] (-0.6, -0.9) rectangle (4.5, 2.5);
        
        \node[] at (0, -0.5) {$\gC^0$};
        \node[] at (2, -0.5) {$\gC^1$};
        \node[] at (4, -0.5) {$\gC^2$};
        
        \node[draw, line width=1pt, circle, fill=blue!70] (A1) at (0,0) {};
        \node[draw, line width=1pt, circle, fill=yellow!70] (B1) at (2,0) {};
        \node[draw, line width=1pt, circle, fill=burgundy!70] (C1) at (4,0) {};
        
        \node[draw, line width=1pt, circle, fill=blue!70] (A2) at (0,2) {};
        \node[draw, line width=1pt, circle, fill=yellow!70] (B2) at (2,2) {};
        \node[draw, line width=1pt, circle, fill=burgundy!70] (C2) at (4,2) {};
    
        \draw[very thick] (A1) edge[->] node[midway, font=\tiny, fill=gray!10, text=black] {$\gA_{0, 1}$} (A2);
        \draw[very thick] (A1) edge[->] node[midway, font=\tiny, fill=gray!10, text=black] {$\gB_{0, 1}$} (B2);
        \draw[very thick] (B1) edge[->] node[midway, font=\tiny, fill=gray!10, text=black] {$\gA_{1, 2}$} (B2);
        \draw[very thick] (B1) edge[->] node[midway, font=\tiny, fill=gray!10, text=black] {$\gB_{1, 2}$} (C2);
        \draw[very thick] (C1) edge[->] node[midway, font=\tiny, fill=gray!10, text=black] {Id} (C2);
    \end{tikzpicture}
}
    \vspace{-20pt}
    \caption{CIN Block.}\label{fig:cin_block}
\end{wrapfigure}
\paragraph{Initialization.}
In the case $\gX$ is lifted from a graph with node and edge features, these features are used as the $0$ and $1$ cochains respectively. Otherwise, the $0$, and $1$ cochains are initialized with zeros. In case the initial features are categorical we apply an embedding layer. Similarly to \citet{bodnar2021weisfeiler}, we initialize the $2$-cell cochain by
\begin{equation}
    (\rvh_2^{(0)})_x = (\rvh_0^{(0)})_{\gB_{2, 0}^\top(x)} = \sum_{x' \in \gB_{2,0}^\top(x)}(\rvh_0^{(0)})_{x'}.
\end{equation}

\paragraph{CIN block.}
The first \homp block we consider is the CIN block from \citet{bodnar2021weisfeiler}, whose tensor diagram is illustrated in Figure \ref{fig:cin_block}. A CIN block updates the cochains $\rvh_0^{(t)}$, $\rvh_1^{(t)}$ and $\rvh_2^{(t)}$ via the following update rules. For $x \in \gX_0$,
\begin{equation}
    \rvh^{(t+1)}_x = \mlp_{0, 1}^{(t)}\left( (1+\epsilon_0) \rvh^{(t)}_x + \sum_{x' \in \gA_{0, 1}(x)} \mlp_{0, 2}^{(t)} \left(\rvh^{(t)}_{x'} \bcat \rvh^{(t)}_{\gE(x, x')}\right)\right),
\end{equation}
for $x \in \gX_1$,
\begin{equation}
    \rvh^{(t+1)}_x = \mathsf{Cat}\left((1+\epsilon_1^1) \rvh^{(t)}_x + \rvh^{(t)}_{\gB_{1, 0}^\top(x)} \bbbcat (1+\epsilon_1^2) \rvh^{(t)}_x + \sum_{x' \in \gA_{1, 2}(x)} \mlp_{0, 2}^{(t)} \left(\rvh^{(t)}_{x'} \bcat \rvh^{(t)}_{\gE(x, x')}\right)\right)
\end{equation}
and for $x \in \gX_2$,
\begin{equation}
    \rvh^{(t+1)}_x = \mlp_2^{(t)}\left( (1+\epsilon_2) \rvh^{(t)}_x +  \rvh^{(t)1}_{\gB_{2, 1}^\top(x)}\right),
\end{equation}
where if $x, x' \in \gX_r$, $\gE(x, x') = \{y \in \gX_{r+1} \mid x, x' \subseteq y \}$ and $\epsilon_0, \epsilon_1^1, \epsilon_1^2, \epsilon_2$ are non-learnable hyperparameters.

\begin{wrapfigure}[10]{l}{0.3\textwidth}
    \vspace{-15pt}
    \resizebox{0.3\textwidth}{!}{
    \begin{tikzpicture}
        \fill[gray!10] (-0.6, -0.9) rectangle (4.5, 2.5);
        
        \node[] at (0, -0.5) {$\gC^0$};
        \node[] at (2, -0.5) {$\gC^1$};
        \node[] at (4, -0.5) {$\gC^2$};
        
        \node[draw, line width=1pt, circle, fill=blue!70] (A1) at (0,0) {};
        \node[draw, line width=1pt, circle, fill=yellow!70] (B1) at (2,0) {};
        \node[draw, line width=1pt, circle, fill=burgundy!70] (C1) at (4,0) {};
        
        \node[draw, line width=1pt, circle, fill=blue!70] (A2) at (0,2) {};
        \node[draw, line width=1pt, circle, fill=yellow!70] (B2) at (2,2) {};
        \node[draw, line width=1pt, circle, fill=burgundy!70] (C2) at (4,2) {};
    
        \draw[very thick] (A1) edge[->] node[midway, font=\tiny, fill=gray!10, text=black] {$\gA_{0, 1}$} (A2);
        \draw[very thick] (A1) edge[->] node[pos=0.25, font=\tiny, fill=gray!10, text=black] {$\gB_{0, 2}$} (C2);
        \draw[very thick] (C1) edge[->] node[pos=0.25, font=\tiny, fill=gray!10, text=black] {$\gB_{2, 0}^\top$} (A2);
        \draw[very thick] (B1) edge[->] node[pos=0.23, font=\tiny, fill=gray!10, text=black] {Id} (B2);
        \draw[very thick] (C1) edge[->] node[midway, font=\tiny, fill=gray!10, text=black] {Id} (C2);
    \end{tikzpicture}
}
    \vspace{-20pt}
    \caption{Custom \homp.}
\end{wrapfigure}

\paragraph{Custom \homp block.}
In addition to CIN blocks, we also use ``custom \homp'' blocks that are designed to minimize the influence of $1$-cells in the block update, thereby reducing dependence on the edge structure of graphs. Empirical results, as shown in Table \ref{tab:property_prediction_with_mse}, demonstrate that this approach improves performance on certain topological property prediction tasks. The exact update for this block is defined via the following rule. For $x \in \gX_0$,
\begin{equation}
    \rvh^{(t+1)}_x = \mathsf{Cat} \left((1+\epsilon_0^1) \rvh^{(t)}_x + \rvh^{(t)}_{\gB_{0,2}(x)} \bcat (1+\epsilon_0^2) \rvh^{(t)}_x + \rvh^{(t)}_{\gA_{0,1}(x)}\right)
\end{equation}
for $x \in \gX_1$,
\begin{equation}
     \rvh^{(t+1)}_x = \rvh^{(t)}_x,
\end{equation}
and for $x\in \gX_2$
\begin{equation}
    \rvh^{(t+1)}_x = \mlp^{(t)}\left( (1 + \epsilon_2) \rvh^{(t)}_x + \rvh^{(t)}_{\gB^\top_{2,0}(x)}\right). 
\end{equation}. 

\paragraph{Multi-cellular cochain initialization.}
SCL layers take as input multi-cellular cochains of the form $\rvh_{0, 1}^{(t)} \in \gC_{0, 1}$ and $\rvh_{0, 2}^{(t)} \in \gC_{0, 2}$. These multi-cellular cochains are initialized with  
\begin{equation}
    (\rvh^{(t)}_{0, r})_{x_1, x_2} =  \mlp_{r, 1}^{(t)}((\rvh^{(t)}_0)_{x_1}) + \mlp_{r, 2}^{(t)}((\rvh^{(t)}_r)_{x_2}) + \mlp_{r, 3}^{(t)}(\mathrm{mark}(x_1,x_2)),
\end{equation}
where $\text{mark}(x_1,x_2)$ is a marking strategy. Similarly to \citet{zhang2023complete} and \citet{bar2024flexible} we consider two marking strategies: (1) binary marking:
\begin{equation}
    \text{mark}_{\text{B}}(x_1,x_2) = \begin{cases}
        1 & x_1 \in \gB_{0, r}(x_2)\\
        0 & \text{otherwise.}
    \end{cases}
\end{equation}
(2) distance-based marking:
\begin{equation}
    \text{mark}_{\text{D}}(x_1,x_2) = \min_{x \in \gB^\top_{r, 0}(x_2)}d_{\gA_{0,1}}(x_1, x),
\end{equation}
where $d_{\gA_{0, 1}}$ is the shortest path distance on $\gH_{\gA_{0, 1}}$.

\paragraph{SCL updates.} 
We use two types of SCL updates, $1$-SCL and $2$-SCL, defined by
\begin{equation}
    \rvh^{(t+1)}_{x_1, x_2} = \mathsf{Cat}_1\left((1 + \epsilon_1^1)\rvh^{(t+1)}_{x_1, x_2} + \sum_{x' \in \gA_{0,1}(x_1)} \mlp_1^{(t)}\left(\rvh^{(t)}_{x', x_2} \bcat \rvh^{(t)}_{\gE(x_1, x')} \right) \bbbcat (1+\epsilon_1^2) \rvh^{(t)}_{x_1,x_2} +\rvh^{(t)}_{x_1, \gB_{0,1}(x_1)}\right)
\end{equation}
for $x_1 \in \gX_0$, $x_2 \in \gX_1$, and 
\begin{equation}
    \rvh^{(t+1)}_{x_1, x_2} = \mathsf{Cat}_2\left((1 + \epsilon_2^1)\rvh^{(t+1)}_{x_1, x_2} + \sum_{x' \in \gA_{0,1}(x_1) } \mlp_2^{(t)}\left(\rvh^{(t)}_{x', x_2} \bcat \rvh^{(t)}_{\gE(x_1, x')} \right) \bbbcat (1+\epsilon_2^2) \rvh^{(t)}_{x_1,x_2} +\rvh^{(t)}_{x_1, \gB_{0,2}(x_1)}\right)
\end{equation}
for $x_1 \in \gX_0$, $x_2 \in \gX_2$, respectivley.

\paragraph{SCL pooling.}
To use a \homp block after an SCL block we need to pool the information from multi-cellular cochains to standard cochains. This is done via an SCL pooling block, defied by 
\begin{equation}
    \begin{aligned}\label{eq:subcomplex_pooling_implementation}
            (\rvh_0)^{(t)}_x &= \mlp_0^{(t)} \left((\rvh^{(t)}_{0, r})_{x, \gX_r} \right) = \mlp_0^{(t)} \left(\sum_{x' \in \gX_r}  (\rvh^{(t)}_{0, r})_{x, x'} \right) \\
            (\rvh_r)^{(t)}_x &= \mlp_r^{(t)} \left((\rvh^{(t)}_{0, r})_{\gX_0, x} \right) = \mlp_r^{(t)} \left(\sum_{x' \in \gX_0}  (\rvh^{(t)}_{0, r})_{x', x} \right) \\
    \end{aligned}
\end{equation}

\paragraph{Readout.}
All tasks considered in this paper require predicting a single value per input CC. Therefore, we employ a final readout layer of the form:
\begin{equation}\label{eq:readout_implimentation}
\begin{split}
    h_\text{out} = \mlp\bigg(&\mathsf{agg}_0\left(\ldblbrace\rvh^{(T)}_x \mid x \in \gX_0 \rdblbrace\right) + \mathsf{agg}_1\left(\ldblbrace\rvh^{(T)}_x \mid x \in \gX_1 \rdblbrace\right) \\
    & + \mathsf{agg}_2\left(\ldblbrace\rvh^{(T)}_x \mid x \in \gX_2 \rdblbrace\right) + \mathsf{agg}_3\left(\ldblbrace\rvh^{(T)}_{x_1,x_2} \mid x_1 \in \gX_0, x_2 \in \gX_1 \rdblbrace\right) \\
    & + \mathsf{agg}_4\left(\ldblbrace\rvh^{(T)}_{x_1,x_2} \mid x_1 \in \gX_0, x_2 \in \gX_2 \rdblbrace\right)\bigg),
\end{split}
\end{equation}
where $T$ is the final layer and $\mathsf{agg}_i$ are either mean aggregation, sum aggregation, or the zero function.

\paragraph{Tensor diagrams.} 
We use two types of tensor diagrams: sequential and parallel, both illustrated in Figure \ref{fig:smcn_tensor_diagrams}. In sequential tensor diagram, a stack of \homp blocks is followed by a stack of SCL updates and then another stack of \homp blocks. The parallel tensor diagram uses concurrent stacks of \homp and SCL layers. Both the types of blocks/layers and the number of blocks/layers within each stack are treated as hyperparameters in the model.

\subsection{Synthetic Benchmarks}
\label{apx:synthetic_benchmarks}
\paragraph{Torus dataset.}
To construct the torus dataset we first select three parameters: $m$ which specifies the number of nodes in the smallest CC in the dataset, $M$ which specifies the number of nodes in the largest CC, and $n$, which specifies the maximum number of connected components in any CC within the dataset. The dataset is then constructed by iterating over all possible choices for the number of nodes and connected components, generating all possible disjoint unions of $2$-dimensional tori with the specified parameters. We then select all the pairs that have the same size (number of nodes). As mentioned in the main text, each such pair is indistinguishable by \homp despite differing in basic metric/topological properties: they either have distinct homology, or they differ in the diameters of some of the components. In our experiments, we use $m = 18$ (the smallest size that admits indistinguishable pairs), $M = 40$, and $n = 3$, resulting in $223$ pairs. 
 
To evaluate the ability of both \homp and \secondmethod to distinguish between each pair, we follow the training and evaluation protocol presented in \cite{wang2024empirical}. For each pair, we generate $64$ copies where the order of cells is randomly permuted. The model is then trained to minimize the cosine similarity between the outputs corresponding to the two CCs in each pair. We measure the number of pairs where the output difference is statistically significant. Our results show that, while \homp fails to distinguish any of the pairs, \secondmethod successfully differentiates all of them.

We use an \secondmethod model implementing a sequential tensor diagram composed of two CIN blocks, followed by four $1$-SCL updates, and concluding with two additional CIN blocks. In comparison, the \homp model consists of a stack of four consecutive CIN blocks, designed to have a comparable number of learnable parameters. The readout layer of both models uses a zero function as the aggregation for all multi-cellular cochains and a sum aggregation for all standard cochains. All models are trained for $20$ epochs on each individual pair using a constant learning rate of $0.001$. The embdding dimension used by all models is $128$.

\paragraph{Lifted ZINC cross-diameter.}
We construct a CC dataset by adding cycles of length $\leq 18$ as $2$-cells to graphs taken from the \textsc{zinc-12k} dataset \citep{sterling2015zinc}. We remove edge and node features, and predict the ($0$, $1$, $2$) cross diameter, computed by:
\begin{equation}
    \max_{\substack{x \in \gX_0, \\ y \in \gX_2}} \min_{x' \in y} d_{\gA_{0, 1}}(x, x')
\end{equation}
where $d_{\gA_{0, 1}}(x, x')$ is the shortest path distance w.r.t the original graph.  Training targets are normalized to have mean $0$ and standard deviation $1$. The model is trained using an MSE loss.  At test time, we evaluate both the MSE of the normalized target as well as the accuracy in predicting the cross-diameter value, which has 18 possible outcomes. We compare three architectures: the first two are \homp models which employ a stack of four CIN blocks, and a stack of four custom \homp blocks respectively. The third is an \secondmethod model, which implements a sequential tensor diagram constructed by a single stack of six $2$-SCL blocks, followed by a non-learnable pooling procedure as described in Equation \ref{eq:subcomplex_pooling_implementation}. All models are constrained to a budget of 500K learnable parameters. The readout layer of all three models is defined according to Equation \ref{eq:readout_implimentation} where $\mathsf{agg}_1, \mathsf{agg}_3, \mathsf{agg}_4$ are taken to be the zero function and $\mathsf{agg}_0, \mathsf{agg}_2$ are taken to be the mean function. Models are trained for 200 epochs using a constant learning rate of 0.0001. The hidden dimension used by all models is 64.

\paragraph{Lifted ZINC second Betti number.}
For the second topological property prediction task we tested our model's ability to learn to predict the second order Betti numbers- the ranks of the second homology group. To this end we constructed our benchmark dataset the following way: We started with the \textsc{zinc-full} datasets (containing 250$k$ molecular graphs), lifting all graphs to CCs as in the cross-diameter task. We then computed the second Betti number for each of the lifted graphs and randomly selected $850$ samples from each of the $6$ most common values (which were $0, 1, 2, 3, 4$ and $6$), resulting in a balanced dataset of size 5,100. We used a $60 \%$, $20 \%$, $20 \%$ random split for training, validation, and test sets. As before, we remove all node and edge features, and normalized training targets to have mean $0$ and standard deviation $1$. The models are then trained using an MSE loss. At test time we evaluated both the MSE of the normalized target as well as the accuracy of predicting the seconnd Betti number. We use the same 3 models reported in the last experiment with the same exact hyperparameters.

\begin{figure}[t]
    \centering
    \begin{subfigure}[Sequential]{
        \resizebox{0.5\width}{!}{
    \begin{tikzpicture}
        \node[] at (0.2, -1.3) {\large $\gC^0$};
        \node[] at (2, -1.3) {\large $\gC^1$};
        \node[] at (3.8, -1.3) {\large $\gC^2$};
        
        \node[draw, line width=1pt, circle, fill=magenta!70] at (0.2, -0.8) {};
        \node[draw, line width=1pt, circle, fill=yellow!50] at (2, -0.8) {};
        \node[draw, line width=1pt, circle, fill=blue!50] at (3.8, -0.8) {};
        
        \draw[very thick]  (0.2, -0.55) edge[->]  (0.2, -0.05);
        \draw[very thick]  (2, -0.55) edge[->] (2, -0.1);
        \draw[very thick]  (3.8, -0.55) edge[->]  (3.8, -0.05);
        
        \fill[green!50] (0, 0) rectangle (4, 1);
        \draw[] (0, 0) -- (0, 1) -- (4, 1) -- (4, 0) -- cycle;
        \node[fill=green!50] at (2, 0.5) {CIN Block};

        \node[] at (2, 1.4){\large .};
        \node[] at (2, 1.5){\large .};
        \node[] at (2, 1.6){\large .};
        
        \fill[green!50] (0, 2) rectangle (4, 3);
        \draw[] (0, 2) -- (0, 3) -- (4, 3) -- (4, 2) -- cycle;
        \node[fill=green!50] at (2, 2.5) {CIN Block};

        \draw[decorate, line width=3pt, decoration={brace,amplitude=5pt,raise=5pt},very thick] (-0.3, -0.1) -- (-0.3, 3.1) node[midway,above=15pt] {};
        
        \node[] at (-1, 1.5) {$n \times$};
        
        \draw[very thick]  (0.2, 3.05) edge[->]  (0.2, 3.55);
        \draw[very thick]  (2, 3.05) edge[->] (2, 3.55);
        \draw[very thick]  (3.8, 3.05) edge[->]  (3.8, 3.55);
        
        \node[draw, line width=1pt, circle, fill=magenta!70] at (0.2, 3.8) {};
        \node[draw, line width=1pt, circle, fill=yellow!50] at (2, 3.8) {};
        \node[draw, line width=1pt, circle, fill=blue!50] at (3.8, 3.8) {};
        
        \draw[very thick]  (0.4, 3.95) edge[->]  (1.8, 4.6);
        \draw[very thick]  (2, 4.05) edge[->] (2, 4.55);
        \draw[very thick]  (3.6, 3.95) edge[->]  (2.2, 4.6);

        \node[draw, line width=1pt, circle, fill=pink!70] at (2, 4.8) {};
        
        \node[] at (3, 4.8) {\large $\gC^{\ve_0 + \ve_r}$};
        
        \draw[very thick]  (2, 5.05) edge[->] node[midway, fill=white, text=black] {$r$-SCL} (2, 6.05);
        
        \node[draw, line width=1pt, circle, fill=pink!70] at (2, 6.3) {};
        
        \node[] at (3, 6.3) {\large $\gC^{\ve_0 + \ve_r}$};
        
        \node[] at (2, 6.825){\large .};
        \node[] at (2, 6.925){\large .};
        \node[] at (2, 7.025){\large .};
        
        \node[draw, line width=1pt, circle, fill=pink!70] at (2, 7.55) {};
        
        \node[] at (3, 7.55) {\large $\gC^{\ve_0 + \ve_r}$};
        
        \draw[very thick]  (2, 7.8) edge[->] node[midway, fill=white, text=black] {$r$-SCL} (2, 8.8);
        
        \node[draw, line width=1pt, circle, fill=pink!70] at (2, 9.05) {};
        
        \node[] at (3, 9.05) {\large $\gC^{\ve_0 + \ve_r}$};
        
        \draw[decorate, line width=3pt, decoration={brace,amplitude=5pt,raise=5pt},very thick] (-0.3, 4.7) -- (-0.3, 9.25) node[midway,above=15pt] {};
        \node[] at (-1, 6.975) {$m \times$};

        \draw[very thick]  (1.9, 9.3) edge[->] (0.4, 9.9);
        \draw[very thick]  (2, 9.3) edge[->] (2, 9.8);
        \draw[very thick]  (2.1, 9.3) edge[->] (3.6, 9.9);
        
        \node[draw, line width=1pt, circle, fill=magenta!70] at (0.2, 10.05) {};
        \node[draw, line width=1pt, circle, fill=yellow!50] at (2, 10.05) {};
        \node[draw, line width=1pt, circle, fill=blue!50] at (3.8, 10.05) {};
        
        \draw[very thick]  (0.2, 10.3) edge[->]  (0.2, 10.8);
        \draw[very thick]  (2, 10.3) edge[->] (2, 10.8);
        \draw[very thick]  (3.8, 10.3) edge[->]  (3.8,10.8);
        
        \fill[green!50] (0, 10.85) rectangle (4, 11.85);
        \draw[] (0, 10.85) -- (0, 11.85) -- (4, 11.85) -- (4, 10.85) -- cycle;
        \node[fill=green!50] at (2, 11.35) {CIN Block};

        \node[] at (2, 12.25){\large .};
        \node[] at (2, 12.35){\large .};
        \node[] at (2, 12.45){\large .};
        
        \fill[green!50] (0, 12.85) rectangle (4, 13.85);
        \draw[] (0, 12.85) -- (0, 13.85) -- (4, 13.85) -- (4, 12.85) -- cycle;
        \node[fill=green!50] at (2, 13.35) {CIN Block};
        
        \draw[decorate, line width=3pt, decoration={brace,amplitude=5pt,raise=5pt},very thick] (-0.3, 10.75) -- (-0.3, 13.95) node[midway,above=15pt] {};
        
        \node[] at (-1, 12.35) {$k \times$};
        
        \draw[very thick]  (0.2, 13.9) edge[->]  (0.2, 14.4);
        \draw[very thick]  (2, 13.9) edge[->] (2, 14.4);
        \draw[very thick]  (3.8, 13.9) edge[->]  (3.8, 14.4);
        
        \node[draw, line width=1pt, circle, fill=magenta!70] at (0.2, 14.65) {};
        \node[draw, line width=1pt, circle, fill=yellow!50] at (2, 14.65) {};
        \node[draw, line width=1pt, circle, fill=blue!50] at (3.8, 14.65) {};
        
        \draw[very thick]  (0.4, 14.9) edge[->]  (1.8, 15.4);
        \draw[very thick]  (2, 14.9) edge[->] (2, 15.4);
        \draw[very thick]  (3.6, 14.9) edge[->]  (2.2,15.4);

        \node[draw, line width=1pt, circle, fill=gray!50] at (2, 15.65) {};
        
        \node[] at (2, 16.15) {$h_{\text{out}}$};
    \end{tikzpicture}
    }\label{fig:sequential}
    }
    \end{subfigure}
    \begin{subfigure}[Parallel]{
        \resizebox{0.4\textwidth}{!}{
    \begin{tikzpicture}
        \node[] at (0.2, -0.5) {\large $\gC^0$};
        \node[] at (2, -0.5) {\large $\gC^1$};
        \node[] at (3.8, -0.5) {\large $\gC^2$};
        
        \node[draw, line width=1pt, circle, fill=magenta!70] at (0.2, 0) {};
        \node[draw, line width=1pt, circle, fill=yellow!50] at (2,0) {};
        \node[draw, line width=1pt, circle, fill=blue!50] at (3.8,0) {};
        
        \draw[very thick]  (0.2, 0.25) edge[->]  (-0.85, 1.3);
        \draw[very thick]  (2, 0.25) edge[->] (-0.8, 1.35);
        \draw[very thick]  (3.8, 0.25) edge[->]  (-0.75, 1.4);
        
        \draw[very thick]  (0.2, 0.25) edge[->]  (3.2, 1.4);
        \draw[very thick]  (2, 0.25) edge[->] (5, 1.4);
        \draw[very thick]  (3.8, 0.25) edge[->]  (6.8, 1.4);
        
        \node[] at (-1, 0.75) {\large $\gC^{\ve_0 + \ve_r}$};
    
        \node[draw, line width=1pt, circle, fill=pink!70] at (-1, 1.5) {};
        
        \draw[very thick]  (-1, 1.75) edge[->] node[midway, fill=white, text=black] {$r$-SCL} (-1, 2.75);
        
        \node[draw, line width=1pt, circle, fill=pink!70] at (-1, 3) {};
        
        \fill[green!50] (3, 1.5) rectangle (7, 3);
        \draw[] (3, 1.5) -- (3, 3) -- (7, 3) -- (7, 1.5) -- cycle;
        \node[fill=green!50] at (5, 2.25) {CIN Block};
        
        \node[] at (-1, 3.65){\large .};
        \node[] at (-1, 3.75){\large .};
        \node[] at (-1, 3.85){\large .};
        
        \node[draw, line width=1pt, circle, fill=pink!70] at (-1, 4.5) {};
        
        \node[] at (5, 3.65){\large .};
        \node[] at (5, 3.75){\large .};
        \node[] at (5, 3.85){\large .};
        
        \fill[green!50] (3, 4.5) rectangle (7, 6);
        \draw[] (3, 4.5) -- (3, 6) -- (7, 6) -- (7, 4.5) -- cycle;
        \node[fill=green!50] at (5, 5.25) {CIN Block};
    
        \draw[very thick]  (-1, 4.75) edge[->] node[midway, fill=white, text=black] {$r$-SCL} (-1, 5.75);
        
        \node[draw, line width=1pt, circle, fill=pink!70] at (-1, 6) {};
    
        \node[draw, line width=1pt, circle, fill=magenta!70] at (0.2, 7.5) {};
        \node[draw, line width=1pt, circle, fill=yellow!50] at (2, 7.5) {};
        \node[draw, line width=1pt, circle, fill=blue!50] at (3.8, 7.5) {};
        
        \draw[very thick]  (-1, 6.25) edge[->]  (0.08, 7.3);
        \draw[very thick]  (-1, 6.25) edge[->]  (1.83, 7.35);
        \draw[very thick]  (-1, 6.25) edge[->]  (3.6, 7.4);
        
        \draw[very thick]  (3.2, 6) edge[->]  (0.38, 7.3);
        \draw[very thick]  (5, 6) edge[->] (2.17, 7.35);
        \draw[very thick]  (6.8, 6) edge[->]  (4, 7.4);
    
        \draw[very thick]  (0.35, 7.7) edge[->]  (1.9, 8.25);
        \draw[very thick]  (2, 7.75) edge[->] (2, 8.25);
        \draw[very thick]  (3.65, 7.7) edge[->]  (2.1, 8.25);
        
        \node[draw, line width=1pt, circle, fill=gray] at (2, 8.5) {};
        
        \draw[decorate, line width=3pt, decoration={brace,amplitude=5pt,raise=5pt},very thick] (-1.5, 1.25) -- (-1.5, 6.25) node[midway,above=15pt] {};
        
        \node[] at (-2.2, 3.75) {$n \times$};
        
        \draw[decorate, line width=3pt, decoration={brace,amplitude=5pt,raise=5pt, mirror},very thick] (7.25, 1.25) -- (7.25, 6.25) node[midway,above=15pt] {};
        
        \node[] at (8, 3.75) {$\times m$};
    \end{tikzpicture}  
}\label{fig:parllel}
    }
    \end{subfigure}
    \vspace{-5pt}
    \caption{Experiments are using two types of \secondmethod tensor diagrams. Sequential diagrams \ref{fig:sequential} stacks CIN blocks and SCL updates; Parallel diagrams \ref{fig:parllel} performs SCL updates and CIN blocks in parallel.}\label{fig:smcn_tensor_diagrams}
    \vspace{-10pt}
\end{figure}

The results of both lifted \textsc{zinc} experiments are presented in Tables \ref{tab:property_prediction_with_mse}. Following the experimental setting of \citep{rieck2023expressivity} in which TDL models are tasked with learning metric properties of graphs taken from the \textsc{MOLHIV} dataset \citep{hu2020open}, we report the model's accuracy in predicting the correct target value. We additionally provide the normalized MSE score of the model.

\secondmethod significantly outperforms both \homp methods in learning both the cross-diameter and the second Betti numbers, achieving higher accuracy as well as significantly lower standard deviation indicating a more stable learning process. This is particularly evident in the custom \homp model, which, although it surpasses the CIN model in performance, suffers from a considerably larger standard deviation. Additionally, the \secondmethod model achieves strong results after a significantly lower number of epochs compared to both \homp models, as illustrated in Figure \ref{fig:accuracy_per_epoch_synthetic_benchmarks}.

Moreover, the results of the three synthetic experiments further demonstrate \secondmethod's superior capability in capturing the topological properties of CCs compared to existing \homp architectures. While our theoretical analysis established that \secondmethod can express topological information beyond the capabilities of any \homp architecture, the synthetic experiments validate that \secondmethod models can effectively learn and leverage these properties in practice.

\begin{figure}[htbp]
    \centering
    \begin{minipage}[b]{1\textwidth}
        \centering
        \includegraphics[width=\textwidth]{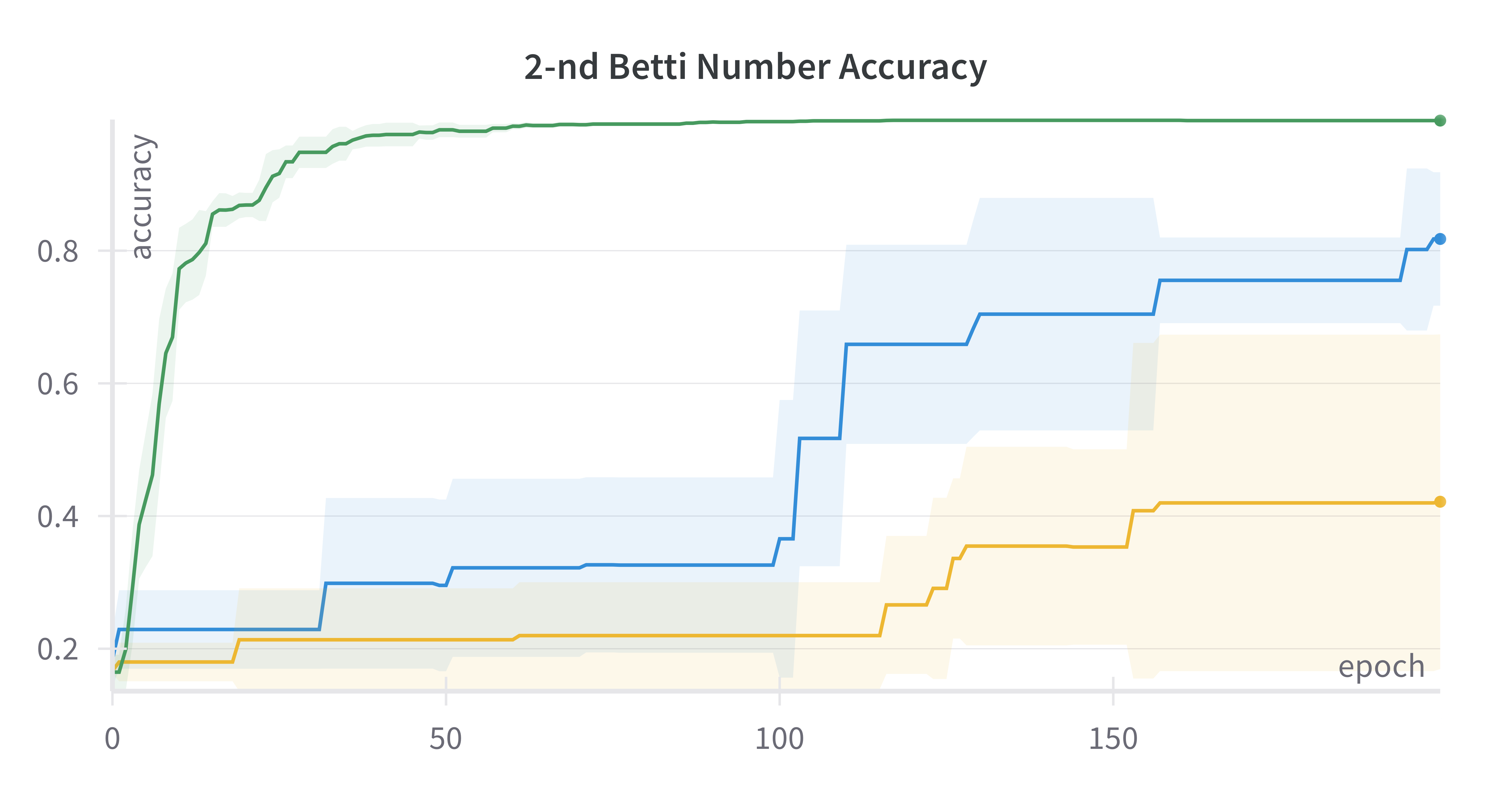}
        \label{fig:image1}
    \end{minipage}
    \hfill
    \begin{minipage}[b]{1\textwidth}
        \centering
        \includegraphics[width=\textwidth]{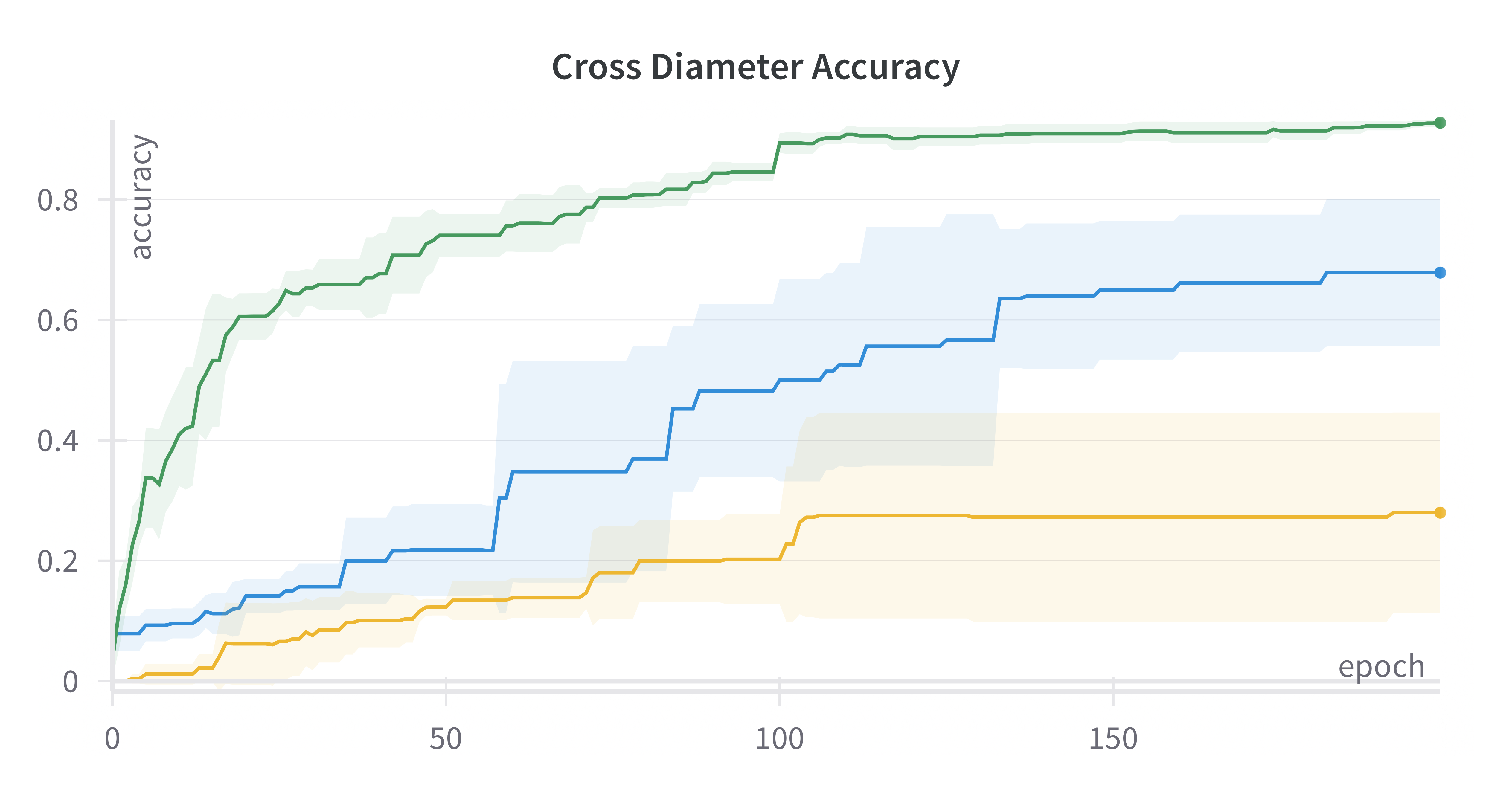} 
        \label{fig:image2}
    \end{minipage}
    \caption{The accuracy per epoch of both synthetic lift experiments. \secondmethod is depicted in green, Custom \homp is depicted in blue and CIN is depicted in yellow.}
\label{fig:accuracy_per_epoch_synthetic_benchmarks}
\end{figure}

\subsection{Real-World Graph Benchmarks}

\paragraph{\textsc{ZINC} Dataset \citep{sterling2015zinc, dwivedi2023benchmarking}.}  The \textsc{zinc-12k} dataset comprises 12,000 molecular graphs, extracted from the ZINC database, which is a collection of commercially available chemical compounds. These molecular graphs vary in size, ranging from 9 to 37 nodes each. In these graphs, nodes correspond to heavy atoms, encompassing 28 distinct atom types. Edges in the graphs represent chemical bonds, with three possible bond types. We perform regression on the constrained solubility (logP) of the molecules. The dataset is pre-partitioned into training, validation, and test sets, containing 10,000, 1,000, and 1,000 molecular graphs, respectively.

For this experiment, we use an \secondmethod model implementing a sequential tensor diagram. The architecture consists of a single CIN block, followed by a stack of five 1-SCL layers, and concludes with an additional CIN block. Each CIN block has an embedding dimension of 85, while each SCL layer uses an embedding dimension of 70, resulting in a model with fewer than 500k parameters, as outlined in \citet{dwivedi2023benchmarking}. The readout layer is defined according to Equation \ref{eq:readout_implimentation}, where $\mathsf{agg}_2$, $\mathsf{agg}_3$, and $\mathsf{agg}_4$ are zero functions, and $\mathsf{agg}_0$ and $\mathsf{agg}_1$ are a sum aggregation. Since we observed that this model converges slowly, we trained it for 2000 epochs, following the approach of \citet{ma2023graph}. The learning rate is initialized at 0.001 and decays by a factor of 0.5 every 300 epochs.

\paragraph{OGB Datasets~\citep{hu2020open}.}

 \textsc{MOLHIV} and \textsc{MOLESOL} are molecular property prediction datasets, adapted by the Open Graph Benchmark (OGB) from MoleculeNet. These datasets employ a unified featurization for nodes (atoms) and edges (bonds), encapsulating various chemophysical properties. The task in \textsc{molhiv} is to predict the capacity of compounds to inhibit HIV replication. The task in \textsc{molesol} is regression on water solubility (log solubility in mols per liter) for common organic small molecules.

 For both datasets, we use an \secondmethod model that utilizes a parallel tensor diagram. For \textsc{MOLHIV} the architecture consists of two CIN blocks,  and tow parallel 2-SCL blocks. CIN blocks have an embedding dimension of 64 and dropout in between layers with probability of $0.2$ while SCL layers have an embedding dimension of 24 and dropout in between layers with probability of 0.5. The final SCL layer is followed by a non learnable pooling operation as per Equation \ref{eq:subcomplex_pooling_implementation}. The readout layer is defined according to Equation \ref{eq:readout_implimentation}, where  $\mathsf{agg}_3$, and $\mathsf{agg}_4$ are zero functions, and $\mathsf{agg}_0$, $\mathsf{agg}_1$ and $\mathsf{agg}_2$ are a mean aggregation. The mdoel is trained for 100 epochs with a constant learning rate of 0.0001. 

For \textsc{molesol} the architecture consists of two custom \homp blocks, and two parallel 1-SCL layers. \homp blocks have an embedding dimension of 16 while SCL layers have an embedding dimension of 228 where neither block uses dropout. The final SCL block is followed by a non learnable pooling operation as per Equation \ref{eq:subcomplex_pooling_implementation}. The readout layer is defined according to Equation \ref{eq:readout_implimentation}, where  $\mathsf{agg}_2$, $\mathsf{agg}_3$, and $\mathsf{agg}_4$ are zero functions, and $\mathsf{agg}_0$ and $\mathsf{agg}_1$ are a mean aggregation. The model is trained for 200 epochs with a constant learning rate of 0.0001. 

\paragraph{\rebuttal{\textsc{mutag}.}} 
\rebuttal{The \textsc{mutag} dataset, part of the TUDataset benchmarks \citep{morris2020tudataset}, comprises 188 molecular graphs. The task is to identify mutagenic molecular compounds, which are relevant for the development of potentially marketable drugs \citep{kazius2005derivation, riesen2008iam}. Our training setup and evaluation procedure adhere to those outlined in \citet{xu2018powerful}. For this experiment, the \secondmethod model utilizes a sequential tensor diagram constructed with a single stack of six $2$-SCL blocks, followed by a non-learnable pooling procedure, as detailed in Equation \ref{eq:subcomplex_pooling_implementation}. Results are presented in Table \ref{tab:mutag}.}
\begin{table}
    \caption{\rebuttal{Accuracy on the \textsc{mutag} dataset.}}\label{tab:mutag}
    \setlength{\tabcolsep}{6pt}
    \centering
    \begin{tabular}{lc}
    \toprule
    \rebuttal{\textbf{Model}} & \rebuttal{\textbf{Accuracy ($\uparrow$)}} \\
    \midrule
    \rebuttal{GIN \citep{xu2018powerful}} & \rebuttal{$89.4 \pm 5.6 \%$} \\
    \midrule
    \rebuttal{CIN \citep{bodnar2021weisfeiler}} & \rebuttal{$92.7 \pm 6.1 \%$}  \\
    \midrule
    \rebuttal{PPGN \citep{maron2019provably}} & \rebuttal{$90.6 \pm 8.7 \%$} \\
    \rebuttal{DS-GNN \citep{bevilacqua2021equivariant}} & \rebuttal{$91.0 \pm 4.8 \%$} \\
    \rebuttal{DSS-GNN \citep{bevilacqua2021equivariant}} & \rebuttal{$91.1 \pm 7.0 \%$} \\
    \rebuttal{SUN \citep{frasca2022understanding}} & \rebuttal{$92.7 \pm 5.8 \%$} \\
    \midrule
    \rebuttal{\secondmethod (ours)} & \rebuttal{$92.5 \pm 6.2 \%$} \\
    \bottomrule
    \end{tabular}
\end{table}

\subsection{\rebuttal{Runtime Evaluations}}
\begin{table}
\centering
\caption{\rebuttal{\textbf{Train time (seconds per epoch).} Time measurements from training each of the models. Measures are averaged across $10$ epochs, starting from epoch number $20$.}}\label{tab:train_runtime}
\begin{tabular}{lccc}
\toprule
 & \rebuttal{\secondmethod} & \rebuttal{GNN-SSWL+} & \rebuttal{CIN} \\
\midrule
\rebuttal{\textsc{zinc}} & \rebuttal{7.39 $\pm$ 0.17 } & \rebuttal{9.65 $\pm$ 0.19 } & \rebuttal{5.35 $\pm$ 0.33 } \\
\rebuttal{\textsc{molhiv}} & \rebuttal{17.70 $\pm$ 0.42 } & \rebuttal{51.02 $\pm$ 0.25 } & \rebuttal{14.34 $\pm$ 0.27 } \\
\bottomrule
\end{tabular}
\end{table}

\begin{table}
\centering
\caption{\rebuttal{\textbf{Test time (seconds per full test set inference).} Time measurements for inference over the entire test set. Results are averaged over $10$ runs.}}\label{tab:test_runtime}
\begin{tabular}{lccc}
\toprule
 & \rebuttal{\secondmethod} & \rebuttal{GNN-SSWL+} & \rebuttal{CIN} \\
\midrule
\rebuttal{\textsc{zinc}} & \rebuttal{0.93 $\pm$ 0.08 } & \rebuttal{1.04 $\pm$ 0.03 } & \rebuttal{0.71 $\pm$ 0.05 } \\
\rebuttal{\textsc{molhiv}} & \rebuttal{2.09 $\pm$ 0.15 } & \rebuttal{3.07 $\pm$ 0.03 } & \rebuttal{2.02 $\pm$ 0.12 } \\
\bottomrule
\end{tabular}
\end{table}

\rebuttal{
    To empirically measure the runtime differences between \secondmethod, subgraph GNNs and HOMP, we ran several wall clock measurements for data set construction, single epoch training and test set evaluation. We evaluate the \secondmethod variants used for the \textsc{zinc} and \textsc{molhiv} experiments, GNN-SSWL+ which is the backbone subgraph GNN for \secondmethod, and CIN, a standard HOMP model. Runtime was evaluated across $10$ runs, all experiments ran on a single NVIDIA A100 48GB GPU. Dataset construction times (in seconds) are:
    \begin{itemize}
        \item \textsc{zinc}: 322.21 $\pm$ 8.764, 
        \item \textsc{molhiv}: 678.01 $\pm$ 11.38.
    \end{itemize}
    We used the same lifting procedures and dataset construction for both \secondmethod and CIN so construction times are identical. Results for the train/test times appear in Tables \ref{tab:train_runtime} and \ref{tab:test_runtime} respectively. \secondmethod incurs a computational overhead of approximately $23 \%$ on the \textsc{molhiv} benchmark and $38 \%$ on \textsc{zinc} benchmark compared to CIN (trade-off for its improved predictive performance). Additionally \secondmethod consistently surpasses subgraph networks in runtime, achieving a $2.9$x speedup on \textsc{molhiv} and a $1.3$x speedup on \textsc{zinc}. This improvement over subgraph networks stems from the fact \secondmethod uses fewer subgraphs updates and leverages higher order topological information instead.
}

\end{document}